\newcommand{\starsup}{\textsuperscript{*}}
\newcommand{\daggersup}{\textsuperscript{\dag}}
\theoremstyle{plain}
\newtheorem{assumption}{Assumption}
\newtheorem{theorem}{Theorem}
\newtheorem{lemma}{Lemma}
\newtheorem{corollary}{Corollary}
\newtheorem{remark}{Remark}
\theoremstyle{definition} 
\title{Grounding the Ungrounded: A Spectral-Graph Framework for Quantifying Hallucinations in Multimodal LLMs}
\author[1]{Supratik Sarkar\starsup}
\author[2]{Swagatam Das\daggersup}
\affil[1]{Morgan Stanley}
\affil[2]{Indian Statistical Institute (Kolkata), India}
\date{\today}
\begin{document}

\maketitle

\begingroup
\setcounter{footnote}{0}%
\renewcommand{\thefootnote}{\fnsymbol{footnote}}%
\footnotetext[1]{\href{mailto:supratik.sarkar@morganstanley.com}{supratik.sarkar@morganstanley.com}}
\footnotetext[2]{\href{mailto:swagatam.das@isical.ac.in}{swagatam.das@isical.ac.in}}
\endgroup

\begin{abstract}
Hallucinations in LLMs—especially in multimodal settings—undermine reliability. We present a rigorous information-geometric framework, grounded in diffusion dynamics, to quantify hallucinations in MLLMs where model outputs are embedded via spectral decompositions of multimodal graph Laplacians, and their gaps to a truth manifold define a semantic distortion metric. We derive Courant–Fischer bounds on a temperature-dependent hallucination profile and use RKHS eigenmodes to obtain modality-aware, interpretable measures that track evolution over prompts and time. This reframes hallucination as quantifiable and bounded, providing a principled basis for evaluation and mitigation.
\end{abstract}

\section{Introduction}

Large language models (LLMs) and their multimodal variants (MLLMs) are powerful generators, but reliability or truthfulness remains a core limitation. A central drawback is the hallucinated content that is ungrounded or inconsistent with inputs - which is unacceptable and signifactly risky in medicine, law, and finance~\cite{Ji2023Survey,Maynez2020Faithfulness,Bubeck2023Sparks}. Prior work offers taxonomies, datasets, and benchmarks for analysis and evaluation~\cite{Ji2023Survey,Maynez2020Faithfulness,Ding2024HalluPI}, and recent multimodal studies emphasize empirical detection/mitigation~\cite{Bai2024MultimodalHallucinations}; however, most approaches rely on heuristics, proxy metrics, or human annotation rather than principled quantification.

On the theory side, complementary work include token-level analysis of hallucinated predictions~\cite{Jiang2024KnownFacts}, Bayesian sequential detection~\cite{Wang2023Bayesian}, entropy-style uncertainty probes~\cite{Han2024SEP}, latent-space steering to separate truthful vs. hallucinated generations~\cite{Park2025TSV}, and reference-free ranking for multimodal hallucinations~\cite{Sun2024CrossCheckGPT}. Emerging spectral/graph perspectives probe representations and attention, but are largely detection-oriented and unimodal~\cite{binkowski2025hallucinationdetectionllmsusing}.

\paragraph{Gap.} \vspace{-1em}
The field currently lacks a quantitative, theory-backed, modality-aware framework that treats hallucination as a measurable quantity (with temporal dynamics and guarantees), rather than only a classification/detection outcome.

\paragraph{Our contribution.} \vspace{-1em}
At a high level, our framework provides a plug-in, reference-free hallucination controller for MLLM pipelines that remains meaningful even when the ground-truth labels are missing and, unlike other standard uncertainty proxies (entropy, max-probability, margin), decomposes hallucination into modality-wise and spectral components on a multimodal graph Laplacian. It provides a calibrated knob to rank outputs by hallucination risk, set ``IDK"/abstention thresholds, and track hallucination under time-indexed temperature and retrieval policies:
\vspace{-3pt}
\begin{enumerate} [leftmargin=*, label=(\alph*), itemsep=-2pt]
    \item We model the grounding across modalities via optimal-transport paths in diffusion dynamics and embed them in RKHS, yielding a structural view of semantic consistency.
    \item We represent outputs on multimodal graph Laplacians and derive tight Courant–Fischer (CF) bounds on hallucination heatmap as a function of time-indexed temperature.
    \item \emph{Empirical validation}: Across nine 3D panels (COCO/VQAv2/AudioCaps $\times$ \texttt{CLIP+Whisper+T5}, \texttt{BLIP+CLIP+Whisper}, \texttt{SigLIP+Whisper+T5}), $\mathcal{E}_{\mathrm{hall}}^{\mathrm{multi}}$ lies between panel-specific CF planes with a strictly positive lower envelope that tightens at lower temperature (and higher diffusion); full $\varepsilon/\tau/h/\rho$ ablations and runtimes in the supplement.
\end{enumerate}
This shifts hallucination study from qualitative detection to quantitative, modality-aware, and interpretable analysis. To our knowledge, it is the first attempt to provide spectral bounds on hallucination for MLLMs followed by a time-indexed temperature annealing, offering a principled basis for evaluation and potential mitigation. A clear mathematical roadmap is presented in Appendix~\ref{fn:roadmap}.


\section{Related Work}
\label{sec:other_work}

Kalai \& Vempala show that, for calibrated LMs, the hallucination rate is lower-bounded by a Good–Turing–style ``monofact” mass - establishing an inherent trade-off between calibration and truthfulness~\cite{Vempala2024Calibrated}; while their recent work generalizes this via an IIV reduction that ties generative errors to binary-classification - advocating IDK-tolerant evaluation~\cite{Vempala2025Why}. Empirical study of LM hallucinations spans mechanistic probes that surface interpretable features for diagnosis~\cite{Templeton2024ScalingMonosemanticity}, retrieval-grounded detection and evaluation~\cite{Gerner2025ORION,niu2024ragtruth}, broad benchmark suites like HaluEval~\cite{li2023halueval}, Hallu-PI~\cite{Ding2024HalluPI}, GraphEval~\cite{Feng2025GraphEval}, and early vision–language analyses of object hallucination~\cite{rohrbach2018chair}. Comprehensive surveys catalog causes, detection, and mitigation strategies~\cite{Ji2023Survey,Rawte2023Troubling}.

Recent work exploits uncertainty and structural signals: semantic-entropy probes~\cite{Han2024SEP}, Bayesian sequential estimation~\cite{Wang2023Bayesian}, token-level dynamics of hallucinated predictions~\cite{Jiang2024KnownFacts}, zero-shot reasoning signals~\cite{Lee2024ZeroShot}, and sampling-based self-consistency checks (SelfCheckGPT)~\cite{Weng2024External}. Graph/spectral methods flag hallucinations via KG self-checks (FactSelfCheck)~\cite{Sawczyn2025FactSelfCheck}, attention Laplacian eigen-spectra (LapEigvals)~\cite{Binkowski2025LapEigvals}, and topological cues on hallucination graphs~\cite{LeMerrer2024Graph}.

\section{Preliminaries}
\label{sec:prel}
We begin by establishing the mathematical foundations of our framework. MLLM outputs are embedded as nodes on a knowledge graph Laplacian, and grounding gaps along this graph collectively define a quantifiable hallucination metric. Figure~\ref{fig:framework} sketches our approach.

\subsection{Mathematical Foundations}
\label{subsec:math_found}
Let \( \mathcal{X} \) denote the measurable\footref{fn:measurable_set} \footnote{Footnotes are added in chronological order and collected in Appendix~\ref{appendix:notes}.} set of all possible model outputs of a multimodal LLM, with \(\mathcal{F}_{\mathcal{X}}\) being the \(\sigma\)-algebra over \(\mathcal{X}\) and \(\mu\) being the base measure~\cite{Tao2011Measure}; e.g., the count measure for discrete outputs like token sequence or the Lebesgue measure for continuous outputs like embeddings~\cite{Bartle1995Measure}. We assume \(\mathcal{X}\) is continuously embedded in a separable Reproducing Kernel Hilbert space (RKHS) denoted by \((\mathcal{H}, \left\langle \cdot,\cdot \right\rangle_{\mathcal{H}})\) which is associated with a positive-definite kernel,
\begin{equation}
    \label{eq:def_kernel}
    K:\,\mathcal{X}\times \mathcal{X} \to \mathbb{R}^{+}.
\end{equation}
The kernel \( K(x_{1}, x_{2}) \) encodes the semantic relationships between two distinct points or outputs \( x_{1} \) and \( x_{2}\) \(\forall (x_{1}\neq x_{2}) \in \mathcal{X}\); for example, through embedding-based or ontology-aware distance measures, or co-reference resolution. For a product kernel in an MLLM, refer to Eq.~\eqref{eq:def7} later.

Within this \((\mathcal{X}, \mathcal{F}_{\mathcal{X}}, \mu)\) space, there exist two kinds of ``truth" (the idea imported from~\cite{Vempala2024Calibrated}): 
\begin{enumerate}[leftmargin=*, label=(\roman*), itemsep=-1pt]
    \item The semantic factoid space \( \mathcal{K}\) which encompasses all semantically valid and coherent outputs that include empirically plausible facts, contextually appropriate completions, and domain-consistent inferences aligned with the prompt and background knowledge - importantly, elements of \( \mathcal{K}\) need not be verifiable, but they remain semantically valid within the modeled domain.
    \item The semantic ground-truth manifold \(\mathcal{K}_{\text{g}}\), as a stricter subregion of \(\mathcal{K}\), which consists of outputs only verifiably correct or true facts that include factual assertions supported by empirical evidence or directly observed information — elements of \(\mathcal{K}_{\text{g}}\) can be properly referred to as grounded in reality.
\end{enumerate}

\begin{wrapfigure}{r}{0.50\textwidth}  
  \vspace{-6pt}
  \centering
  \includegraphics[width=0.95\textwidth]{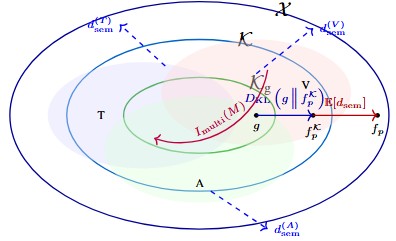}
  \vspace{-6pt}
  \caption{Multimodal nested-manifold view of hallucinations.
  Hollow ellipses denote \(\mathcal{X}\), \(\mathcal{K}\), \(\mathcal{K}_g\).
  }
  \label{fig:framework}
  \vspace{-6pt}
\end{wrapfigure} 
\label{eq:def1} 
Thus the semantic plausibility/ground-truth nesting and, for a given prompt \( p \in \mathcal{P} \), the hallucination criterion for each output denoted by \( x \in \mathcal{X} \) are:
\begin{equation}
    \mathcal{K}_{\text{g}} \subseteq \mathcal{K} \subset \mathcal{X}, \quad x \in \mathcal{X} \setminus \mathcal{K}.
\end{equation}
Note: \(x\in \mathcal{K} \setminus \mathcal{K}_{\text{g}}\)  is a non-grounded output, but still semantically plausible and strictly not hallucination.

\subsection{Modeling the LLM outputs}
\label{subsec:llm_outputs}

We begin with the baseline assumptions:
\begin{assumption}[General output distribution]
\label{assmp:general_dist}
The LLM outputs can be characterized by a conditional probability distribution \( f_p(x) \) that denotes the likelihood of generating output \( x \) given a prompt \( p \):
\begin{equation}
\label{eq:def3}
       f_{p} : \mathcal{X} \to [0,\infty), \quad f_{p} \in L^{1}(\mathcal{X}, \mathcal{F}_{\mathcal{X}}, \mu) \cap \mathcal{H}, \quad x \mapsto f_{p}(x), 
\end{equation}
which ensure \(\int_{\mathcal{X}}\, f_{p}(x) \, d\mu(x) \, =1\). (See justification\footref{fn:general_dist} in Appendix~\ref{appendix:notes}.)
\end{assumption}
Let \( f_p^{\mathcal{K}} \) denote the restricted distribution on the semantic plausibility space \( \mathcal{K} \):
\begin{equation}
\label{eq:def5}
    f_p^{\mathcal{K}}(x) 
    := \frac{\mathbf{1}_{\{ x \in \mathcal{K}\}} f_p(x)}{\int_{\mathcal{K}} f_p(x') d\mu(x')}\equiv \frac{\mathbf{1}_{\{x \in \mathcal{K}\}} f_p(x)}{\mathbb{P}_{f_{p}}(\mathcal{K})}, 
    \quad  \text{where, }   \mathbf{1}_{\{x\in \mathcal{K}\}} =
\begin{cases}
1 & \text{if } x\in \mathcal{K}, \\
0 & \text{otherwise.} 
\end{cases}
\end{equation}
Here, \(\int_{\mathcal{K}} f_p(x') d\mu(x') = \mathbb{P}_{f_{p}}(\mathcal{K})\) is a normalization constant in the restricted distribution.

\begin{assumption}[Ground-truth generative distribution]
\label{assmp:ground_dist}
In line with Assumption~\ref{assmp:general_dist}, \( g \) denotes the reference distribution on the ground-truth manifold \( \mathcal{K}_{\text{g}} \). Unlike \(f_{p}\) or \(f_p^{\mathcal{K}}\), \(g\) is the gold reference which is not model-induced and hence, may not share support with \(f_{p}\) except inside \( \mathcal{K}_{\text{g}} \) and it is truly  independent of prompts in the generative sense, but conditioned on the same prompt contextually. (See justification\footref{fn:ground_dist} in Appendix~\ref{appendix:notes}.)
\end{assumption}
Thus, we do not assume any parametric form for the ground-truth distribution \(g\) and rather treat it as an abstract measure over \(\mathcal{K}_{\text{g}}\):
\begin{equation}
\label{eq:def4}
       \text{supp}(g) \subseteq \mathcal{K}_{\text{g}},  \quad 
       g : \mathcal{K}_{\text{g}} \to [0,\infty), \quad g \in L^{1}(\mathcal{K}_{\text{g}}, \mathcal{F}_{\mathcal{X}}|_{\mathcal{K}_{\text{g}}}, \mu'). 
\end{equation}
Eq.~\eqref{eq:def4} ensures \(\int_{\mathcal{K}_{\text{g}}}\, g(x) \, d\mu'(x) \, =1\) with notations used in consistency with Eq.~\eqref{eq:def3} and \(\mu'\) playing the same role of \(\mu\), but not necessarily equal to \(\mu\). 


\section{Theoretical Analysis}
\label{sec:theo_analysis}

In this section, we present a theoretical framework that couples a smoothed information-geometric score derived from the Kullback–Leibler (KL) paradigm\footref{fn:KL1} with a multimodal energy formulation to quantify and track hallucinations in MLLMs.

\subsection{Semantic Distortion}
\label{subsec:sem_dist}

We establish the following theorem followed by stating remarks to set the stepping stone.

\begin{theorem}[KL-calibrated smoothed score for hallucination]
\label{thm:kl_decomposition_truth}
 Let a smoothing mass $\varepsilon\in(0,1)$ and a baseline density be fixed, with finite $\rho(x)>0$ $\mu$-a.e.\ and $\int_{\mathcal X}\rho(x)\,d\mu(x)=1$; let $K_h(\cdot,\cdot)\in (0, \infty)$ be a $\mu$-Markov kernel (bandwidth $h>0$) and $T_h:L^{1}(\mu)\to L^{1}(\mu)$ be a linear smoother defined for $q:\mathcal X\to\mathbb{R}$ by $(T_h q)(x_1):=\int_{\mathcal X}K_h(x_1,x_2)\,q(x_2)\,d\mu(x_2)$; let the $\varepsilon$-smoothed model be $\tilde f_{p,\varepsilon}(x):=(1-\varepsilon)f_p(x)+\varepsilon\rho(x)$ with its $\mathcal K$-restricted renormalization $\tilde f_{p,\varepsilon}^{\mathcal K}(x_2):=\mathbf{1}_{\{x_2\in\mathcal K\}}\tilde f_{p,\varepsilon}(x_2)\big/ \int_{\mathcal K}\tilde f_{p,\varepsilon}(x)\,d\mu(x)$; and let a measurable selector $\Pi_{\mathcal K}:\mathcal X\to\mathcal K$ satisfy $\Pi_{\mathcal K}(x)=x$ $(\forall x\in\mathcal K)$ or nearest point with convexity in $\mathcal{K}$ (otherwise). Then the semantic distortion
\begin{equation}
\label{eq:KL1}
d_{\mathrm{sem}}^{(\varepsilon,h)}(x;\mathcal K,\mathcal X)
:=\Big[\log\!\big((T_h \tilde f_{p,\varepsilon}^{\mathcal K})(\Pi_{\mathcal K}(x))\big)-\log\!\big((T_h \tilde f_{p,\varepsilon})(x)\big)\Big]_{+} ,
\end{equation}
 serves as a KL-calibrated smoothed pointwise information gap for tracking hallucinations across prompts and remains as a reference-free (independent-of-$g$) statistic in language models.
\end{theorem}

\textit{Proof sketch:} Strict positivity from $\tilde f_{p,\varepsilon}=(1-\varepsilon)f_p+\varepsilon\rho$ and Markov $K_h$ makes both smoothed terms $>0$, so Eq.~\eqref{eq:KL1} is finite. If $x\in\mathcal K$, $\Pi_{\mathcal K}(x)=x$ and the $\mathcal K$–restricted smoother $>$ the unconditional smoother at $x$; if $x\notin\mathcal K$, smoothing at $\Pi_{\mathcal K}(x)\in\mathcal K$ dominates the mixed mass at $x$. Detailed proof is found in Appendix~\ref{appendix:proof_kl_decomposition_truth}. \hfill$\Box$

\begin{remark}
\label{rem:sem_dist}
The score in Eq.~\eqref{eq:KL1} is $g$-agnostic and thus usable when $g$ is unobservable\footref{fn:absent_g} or partially verified in various real-world scenarios. In practice, we set a small smoothing mass $\varepsilon\in[10^{-6},10^{-2}]$, choose $h$ by validation, take $K_h$ as a positive row-normalized kernel over embeddings/tokens, and we implement $\Pi_{\mathcal K}$ as a measurable nearest–neighbour selector on a finite reference set from $\mathcal K$. To clarify how this mathematical framework connects to MLLM
pipelines in practice, we identify\footref{fn:obs_prac} what are ``observable", ``assumed" or ``estimated" in Appendix~\ref{appendix:notes}.
\end{remark}

\begin{remark}
\label{rem:cont_hall}
We deliberately work with a continuous hallucination score $\mathbbm{h}(x,p)\in[0,\infty)$, rather than a binary $0/1$ label, for several practical reasons; see
Appendix~\ref{fn:cont_hall} for a detailed discussion.
\end{remark}


\subsection{Extension to Multi-modal Grounding}
\label{subsec:multi_modal}

The intuition behind this setting of multimodality is: in image-grounded or dialogue models, semantic grounding depends on multiple modalities — e.g., text, image or video, dialog or audio-history etc. and the RKHS is then extended to a multi-modal product kernel space. In multi-modal settings, where the LLM outputs involve textual ($T$), visual ($V$), audio ($A$) modalities, we define a joint output space \((\mathcal{X})\) embedded into a composite RKHS \((\mathcal{H})\) equipped with a product kernel \((K)\) between two distinct points (i.e., outputs) \(\forall(x_{1}\neq x_{2})\in \mathcal{X}\) as
\begin{equation}
    \label{eq:def7}
    \mathcal{X}: \,\mathop{\times}\limits_{M} \mathcal{X}_{M}, \quad x=(x^{(M)})_{x^{(M)}\in \mathcal{X}_{M}},
\quad \mathcal{H}:= \mathop{\otimes}\limits_{M} \mathcal{H}_{M}, \quad K(x_{1}, x_{2}) = \prod_{M} \, K_{M}(x_{1}^{(M)}, x_{2}^{(M)} ),
\end{equation}
pertaining to each modality \(\forall M \in \mathcal{M}:=\{T,V,A\}\), where the prompts can also be categorized into a composite prompt space \(\mathcal{P}:\,\mathop{\times}\limits_{M}\mathcal{P}_{M},\) with each prompt \(p=(p^{(M)})_{p^{(M)}\in \mathcal{P}_{M}}\) in a modality-aware prescription to accommodate three different kinds of probable inputs (i.e., T, V \& A) for the sake of completeness. However, in the following calculation in this paper, we restrict ourselves only to the notion of \(p\) without any loss of generality. Expanded form\footref{fn:modal_expand} of Eq.~\eqref{eq:def7} is found in Appendix~\ref{appendix:notes}.

\subsection{Formulations to hallucination Energy}
\label{subsec:multi_hall_energy}
To begin with, we are after a fruitful formulation of \(f_{p}(x)\) that connects the model output distribution to an underlying energy landscape to enable modal interpretability, temperature-driven exploration, and spectral graph analysis. The total energy functional \(\mathcal{E}(x,p, \cdot):\ \mathcal{X}\times\mathcal{P}\to \mathbb{R}^{+}\) associated with the model input-output plus suppressed parameters can be decomposed into intra-modal, pairwise cross-modal, and joint multimodal interactions. This decomposition allows us to localize the sources of hallucination within and across modalities.

\begin{assumption}[Hallucination energy functional in MLLMs]
\label{assmp:hall_energy}
The modality-aware decomposition reads as:
\begin{equation}
\label{eq:energy_full}
\mathcal{E}(x,p, \cdot) 
\,\,=\,\, \sum_{M \in \mathcal{M}} \mathcal{E}_M\left(x^{(M)},p, \cdot\right) 
\,\,\, +\,\,\, \sum_{\substack{M, M^{'} \in \mathcal{M} \\ M \neq M^{'}}} \mathcal{E}_{MM^{'}}\left(x^{(M)}, x^{(M^{'})}, p, \cdot\right) 
\,\,+\,\, \mathcal{E}_{\mathcal{M}}(x,p, \cdot).
\end{equation}   
(See justification\footref{fn:hall_energy} in Appendix~\ref{appendix:notes} and Section~\ref{subsec:graph1} for the similar construction.)
\end{assumption}

\begin{assumption} [Feature maps for boundedness]
\label{assmp:feature_map}
    Using the results of Moore–Aronszajn theorem~\cite{Aronszajn1950RKHS}, for a positive definite kernel \(K_M\) in a measurable output space $(\mathcal{X},\mathcal{F}_{\mathcal{X}},\mu)$ aligned with Section~\ref{subsec:math_found}, let \(\Phi_M: \mathcal{X}_M \to \mathcal{H}_M\) be its feature map treated as infinite-dimensional linear operator for each modality \(M\in \mathcal{M}\) under the constraint of boundedness: $\sup_{x^{(M)}\in \mathcal{X}_M}\|\Phi_M(x^{(M)})\|_{\mathcal{H}_M}<\infty$. (See justification\footref{fn:feature_map} in Appendix~\ref{appendix:notes}.)
\end{assumption}
For each modality \(M\), the (fixed) embedding pipeline with an implicit kernel\footref{fn:feature_map} in a higher-dimensional RKHS induces \(\Phi_M:\mathcal{X}_M\!\to\!\mathcal{H}_M\) such that \(\langle \Phi_M(x_1),\Phi_M(x_2)\rangle_{\mathcal{H}_M}=K_M(x_1,x_2)\).

\begin{assumption} [Prompt embeddings]
\label{assmp:prompt_embedding}
Let $(\mathcal{P},\mathcal{F}_{\mathcal{P}},\nu)$ be a measurable space on prompts with $\nu$ being finite. For each modality $M\in\mathcal{M}$, the prompt embedding $\Psi_M:\mathcal{P}\!\to\!\mathcal{H}_M$ satisfies boundedness: $ \sup_{p\in\mathcal{P}}\|\Psi_M(p)\|_{\mathcal{H}_M}<\infty$ and stability: $\Psi_M$ is continuous (equivalently, Lipschitz with finite constant $\operatorname{Lip}(\Psi_M)$) in the chosen topology/ metric on $\mathcal{P}$. (See justification\footref{fn:prompt_embedding} in Appendix~\ref{appendix:notes}.)
\end{assumption}

\begin{assumption} [Output distribution in Boltzman form]
\label{assmp:output_boltzman}
We view $f_p(x)$ as a normalized surrogate over candidate outputs or latent representations with respect to a finite (or bounded) base measure $\mu$. Under bounded embeddings and compact support (or bounded energy), the partition function $Z(p,\mathcal T_t)$ is finite, making Eq.~\eqref{eq:boltzmann_form_partition} well-defined. (See justification\footref{fn:output_boltzman} in Appendix~\ref{appendix:notes}.)
\end{assumption}

\begin{lemma}[Joint measurability of cross inner products]
\label{lm:joint_measure}
If $\Phi_M:(\mathcal{X}_M,\mathcal{F}_{\mathcal{X}_M})\to(\mathcal{H}_M,\mathcal{B}(\mathcal{H}_M))$ and
$\Psi_M:(\mathcal{P},\mathcal{F}_{\mathcal{P}})\to(\mathcal{H}_M,\mathcal{B}(\mathcal{H}_M))$ are Bochner measurable into a separable Hilbert space $\mathcal{H}_M$ where $\mathcal{B}(\mathcal{H}_M)$ denotes the Borel $\sigma-$algebra generated by the open sets of $\mathcal{H}_M$ under its norm topology,
then $(x,p)\mapsto \langle \Phi_M(x),\Psi_M(p)\rangle_{\mathcal{H}_M}$ is measurable on 
$\mathcal{F}_{\mathcal{X}_M}\otimes\mathcal{F}_{\mathcal{P}}$.
\end{lemma}
\textit{Proof sketch:} Bochner measurability of $\Phi_M$ and $\Psi_M$ implies strong measurability into $\mathcal{B}(\mathcal{H}_M)$; hence $(x,p)\mapsto(\Phi_M(x),\Psi_M(p))$ is measurable on the product $\sigma$–algebra. Detailed proof is found in Appendix~\ref{appendix:proof_joint_measure}. 
\hfill$\Box$


\begin{theorem}[Multimodal energy-based hallucination formalism]
\label{thm:multi_energy_hall}
Between the output and prompt spaces, let the residuals $\mathsf{r}_M(x,p)\coloneqq \Phi_M(x^{(M)})-\Psi_M(p)\in\mathcal{H}_M$ be defined for at least two modalities $|\mathcal{M}|\ge2$. For each $M$, let there be a bounded, self-adjoint, positive semi-definite (PSD) linear operator $A_M$ on $\mathcal{H}_M$ and for $M\neq M'$, some $B_{MM'}:\mathcal{H}_{M'}\to\mathcal{H}_M$ which is a bounded linear symmetric cross-operator and a controlled factorization 
\(
B_{MM'} \;=\; A_M^{1/2}\, R_{MM'}\, A_{M'}^{1/2},
\quad \text{subject to }\, \|R_{MM'}\|\le 1 ,
\)
being a symmetric contraction (e.g., Hilbert-Schmidt). 
Given this, if the output distribution \(f_{p}(x)\) assumes the Boltzmann form for any temperature \( \mathcal{T}_t \in \mathbb{R}_{\geq 0} \) dependent on time \(t \in \mathbb{R}^{+}\):
\begin{equation}
\label{eq:boltzmann_form_partition} 
f_p(x) = \left(Z(p,\mathcal{T}_t)\right)^{-1}\exp\!\big(-\mathcal{E}(x,p)/\mathcal{T}_t\big), \,\, \text{where, }
Z(p,\mathcal{T}_t) 
= \int_{\mathcal{X}} \exp\!\big(-\mathcal{E}(x,p)/\mathcal{T}_t\big) \, d\mu(x)
\end{equation}
is the normalizing partition function, then the total energy noted in Eq.~\eqref{eq:energy_full}, for $(x,p)\in\mathcal{X}\times\mathcal{P}$, takes the form that is measurable, non-negative and satisfies canonical instances; given by:
\begin{equation}
\label{eq:energy_operator_min}
\mathcal{E}(x,p)
=\sum_{M\in\mathcal{M}}\!\big\langle \mathsf{r}_M,A_M \,\mathsf{r}_M\big\rangle_{\mathcal{H}_M}
\;+\;
\frac{2}{|\mathcal{M}|-1}\sum_{\substack{M,M'\in\mathcal{M}\\ M\neq M'}}
\big\langle A_M^{1/2} \,\mathsf{r}_M,\, R_{MM'}\, A_{M'}^{1/2} \, r_{M'}\big\rangle
\,\,+\,\, \mathcal{E}_{\mathcal{M}}\, \,,
\end{equation}
where the first and second terms on r.h.s are \(\mathcal{E}_{M}\) and \(\mathcal{E}_{MM'}\) respectively, while the last term being \(\mathcal{E}_{\mathcal{M}}(x,p)=\Big\|\bigotimes_{M\in\mathcal M}\!\Phi_M(x^{(M)})
-\bigotimes_{M\in\mathcal M}\!\Psi_M(p)\Big\|_{\otimes\mathcal H_M}^{2}\) as a squared distance in composite RKHS, so it’s measurable and nonnegative.
\end{theorem} 
\textit{Proof sketch.} We stack $r=(\mathsf{r}_M)_M$ and define the block operator $\mathcal{A}$ with diagonals $A_M$ and off–diagonals $A_M^{1/2}R_{MM'}A_{M'}^{1/2}$.
Since $A_M\succeq 0$, $R_{M'M}=R_{MM'}^{*}$, and $\|R_{MM'}\|\le 1$, standard Cauchy–Schwarz/Schur arguments give $\mathcal{A}\succeq 0$; hence $\langle r,\mathcal{A}r\rangle\ge 0$ equals the first two terms of Eq.~\eqref{eq:energy_operator_min}. The joint term is a single scalar for 3 modalities, but a tensor for $>3$ modalities, thus $\ge 0$. Measurability follows from Bochner measurability and continuity of bounded linear maps/inner products (refer to Lemma~\ref{lm:joint_measure}). Under the stated integrability/finite–measure conditions, the partition function in Eq.~\eqref{eq:boltzmann_form_partition} is finite, so $f_p$ is well-defined. Detailed proof is found in Appendix~\ref{appendix:proof_multi_energy_hall}. \hfill$\Box$

\begin{corollary}[Excess-energy hallucination functional]
\label{cor:hall_energy}
In line with Theorems~\ref{thm:kl_decomposition_truth} \& \ref{thm:multi_energy_hall}, we leverage Eq.~\eqref{eq:energy_operator_min} to identify the hallucination energy in an MLLM:
\begin{equation}
\label{eq:actual_metric}
    \mathcal{E}_{\mathrm{hall}}^{\mathrm{multi}} (x,p,\cdot) = \Big(\mathcal{E}(x,p, \cdot) \,-\,  \mathcal{E}_{\mathcal{K}}(x,p,\cdot)\Big)_{+}\, \mathbf{1}_{\{x\notin\mathcal{K}\}}.
\end{equation}
where \(\mathcal{E}(x,p, \cdot)\) is the total energy term at \(\mathcal{X}\) and \(\mathcal{E}_{\mathcal{K}}(x,p,\cdot)\) is the same restricted at \(\mathcal{K}\).
\end{corollary}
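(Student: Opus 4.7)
The plan is to read off the corollary as the pointwise companion of the global free-energy identity in Corollary~\ref{cor:free_energy_gap}: once both the full distribution $f_p$ and the restricted distribution $f_p^{\mathcal{K}}$ are written in Boltzmann form via Theorem~\ref{thm:multi_energy_hall}, the semantic distortion $d_{\text{sem}}$ of Eq.~\eqref{eq:sem_dist} becomes a log-ratio of two Gibbs densities that factorizes cleanly into a pointwise energy gap plus a global partition-function gap, and the former is what we wish to identify as $\mathcal{E}_{\text{hall}}^{\text{multi}}$.

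Concretely, I would proceed in three steps. First, parametrize $f_p^{\mathcal{K}}$ in the same Gibbs form as $f_p$ --- with energy $\mathcal{E}_{\mathcal{K}}(x,p,\cdot)$ built from the multimodal decomposition of Eq.~\eqref{eq:energy_full} using the feature maps/kernels restricted to the semantic-plausibility region $\mathcal{K}$ --- and normalizer $Z_{\mathcal{K}}(p,\mathcal{T}_t) = \int_{\mathcal{K}} e^{-\mathcal{E}_{\mathcal{K}}(x,p)/\mathcal{T}_t}\, d\mu(x)$. Substituting into Eq.~\eqref{eq:sem_dist} yields
$$d_{\text{sem}}(x;\mathcal{K},\mathcal{X}) = \frac{1}{\mathcal{T}_t}\bigl[\mathcal{E}(x,p,\cdot) - \mathcal{E}_{\mathcal{K}}(x,p,\cdot)\bigr] + \log\frac{Z(p,\mathcal{T}_t)}{Z_{\mathcal{K}}(p,\mathcal{T}_t)}.$$
Second, invoke Corollary~\ref{cor:free_energy_gap} to recognize the second term as the prompt-level free-energy correction $\frac{1}{\mathcal{T}_t}\Delta_{\mathcal{K}\to\mathcal{X}}(\text{free energy})$, which carries no pointwise structure and is exactly what gets absorbed when one takes $\mathbb{E}_{x\sim g}[\cdot]$ (since $\text{supp}(g)\subseteq\mathcal{K}_{\text{g}}\subseteq\mathcal{K}$). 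Third, identify the remaining pointwise residual $\mathcal{E}(x,p,\cdot) - \mathcal{E}_{\mathcal{K}}(x,p,\cdot)$ as $\mathcal{E}_{\text{hall}}^{\text{multi}}(x,p,\cdot)$, and check that propagating the intra-/pairwise cross-/joint-modal decomposition of Eq.~\eqref{eq:energy_full} through both $\mathcal{E}$ and $\mathcal{E}_{\mathcal{K}}$ gives $\mathcal{E}_{\text{hall}}^{\text{multi}}$ a matching modality-resolved decomposition --- which is precisely what sets up the Rayleigh--Ritz spectral bounds that follow.

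The main obstacle will be to pin down a precise, well-posed meaning for $\mathcal{E}_{\mathcal{K}}$. A naive probabilistic restriction $f_p^{\mathcal{K}}(x)=\mathbf{1}_{\{x\in\mathcal{K}\}}f_p(x)/\mathbb{P}_{f_p}(\mathcal{K})$ only rescales $f_p$ on $\mathcal{K}$ by the constant $\mathbb{P}_{f_p}(\mathcal{K})=Z_{\mathcal{K}}/Z$, so the induced pointwise energy would degenerately equal $\mathcal{E}$ on $\mathcal{K}$ and be ill-defined on $\mathcal{X}\setminus\mathcal{K}$, collapsing the entire gap into the global $\log(Z/Z_{\mathcal{K}})$. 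To obtain a non-trivial pointwise carrier --- as Eq.~\eqref{eq:actual_metric} demands --- I would define $\mathcal{E}_{\mathcal{K}}$ as the \emph{intrinsic} energy functional constructed from the restricted feature maps $\Phi_M|_{\mathcal{K}_M}$ and restricted product kernels $K|_{\mathcal{K}\times\mathcal{K}}$ in Eqs.~\eqref{eq:intra_energy}--\eqref{eq:joint_energy}, so that $\mathcal{E}_{\mathcal{K}}(x,p,\cdot)$ is finite for every $x\in\mathcal{X}$. Under this reading the difference $\mathcal{E}-\mathcal{E}_{\mathcal{K}}$ isolates exactly the RKHS interaction components living outside the semantic-plausibility manifold, which is the geometrically natural carrier of hallucination energy and is consistent with the multimodal Laplacian/block structure alluded to in Eqs.~\eqref{eq:multi_laplacian}--\eqref{eq:laplacian_block}.
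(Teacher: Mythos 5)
The paper offers no proof of this corollary at all: its ``proof'' is the single sentence that the statement ``does not require any explicit proof as this is merely an identification done by the authors,'' i.e.\ Eq.~\eqref{eq:actual_metric} is a definition, not a derived identity. Your proposal therefore does strictly more work than the paper, and the extra work is valuable. Your three-step derivation (Gibbs-parametrize both $f_p$ and $f_p^{\mathcal{K}}$, split $d_{\text{sem}}$ into a pointwise energy gap plus a global partition-function gap, identify the pointwise residual as $\mathcal{E}_{\text{hall}}^{\text{multi}}$) is a coherent way to motivate the definition, and it is consistent with how the paper actually uses the quantity downstream --- the free-energy form in Eq.~\eqref{eq:hall_energy_def} and the spectral restriction to modes in $\mathcal{X}\setminus\mathcal{K}$ in Eq.~\eqref{eq:hallucination_energy_spectral}. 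Most importantly, your ``main obstacle'' paragraph puts your finger on a real gap the paper never addresses: under the paper's own definition of $f_p^{\mathcal{K}}$ in Eq.~\eqref{eq:def5}, the restricted distribution is just $f_p$ renormalized on $\mathcal{K}$, so the induced energy satisfies $\mathcal{E}_{\mathcal{K}}=\mathcal{E}$ on $\mathcal{K}$ and is undefined ($+\infty$) off $\mathcal{K}$; the pointwise difference in Eq.~\eqref{eq:actual_metric} is then either $0$ or ill-posed, and the whole of $d_{\text{sem}}$ collapses into the global $\log(Z/Z_{\mathcal{K}})$ of Corollary~\ref{cor:free_energy_gap}. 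The paper silently switches between this probabilistic reading and a spectral/intrinsic reading (separate eigenvalues $\lambda_i^{(\mathcal{K})}$ and weights $w_i^{(\mathcal{K})}$ in the appendix) without reconciling them; your proposed fix --- defining $\mathcal{E}_{\mathcal{K}}$ intrinsically from restricted feature maps and kernels so that it is finite on all of $\mathcal{X}$ --- is precisely the repair needed to make the corollary a non-degenerate statement, and it matches the block-Laplacian structure the paper later assumes. In short: your proposal is not wrong; it supplies a justification and a well-posedness caveat that the paper omits.
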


\begin{proof}
    This particular Corollary does not require any explicit proof as this is merely an identification done by the authors in line with the results obtained in Theorem~\ref{thm:kl_decomposition_truth}.
\end{proof}

\section{Main Results: Proposed Framework}
\label{sec:main}
In this section we develop the spectral representation that underpins our main results (Figure~\ref{fig:pipeline}). 
We reformulate the multimodal hallucination energy $\mathcal{E}_{\mathrm{hall}}^{\mathrm{multi}}$ (refer to \ Eq.~\eqref{eq:actual_metric}) within standard spectral graph theory~\cite{Chung1997SpectralGraph}. This lets us relate the Boltzmann normalization of model outputs to eigenmodes of a multimodal semantic graph Laplacian, which in turn yields principled mode-wise bounds on hallucination energy.

\begin{figure*}[t]
    \centering
    \includegraphics[width=\columnwidth]{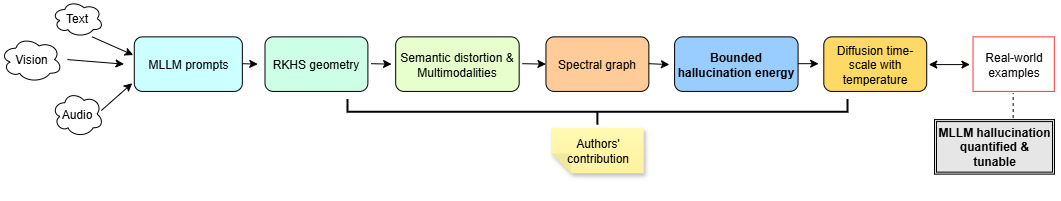}
    \caption{Pipeline for hallucination quantification in MLLMs. For an intuition-building case-study of an image–caption example for an MLLM, see comments\footref{fn:image_caption_example} in Appendix~\ref{appendix:notes}.}
    \label{fig:pipeline}
\end{figure*}

\subsection{Semantic Graph and Multimodal Laplacian}
\label{subsec:graph1}  
Let a time-indexed, temperature-modulated multimodal semantic knowledge graph at an instant $t$ be:
\begin{equation}
\label{eq:def6}
     G_{\mathcal{T}_{t}} = (\mathcal{V}, E, W_{\mathcal{T}_{t}}), \quad \mathcal{V}\subseteq \mathbb{N}, \quad E\subseteq \mathcal{V}\times \mathcal{V}, \quad 
     W_{\mathcal{T}_{t}} \in \mathbb{R}^{|\mathcal{V}|\times |\mathcal{V}|}; \quad \forall \, t \in \mathbb{R^{+}},
\end{equation}
 with finite set of nodes \(\mathcal{V}\) (semantic units), pairwise edges \(E\subseteq \mathcal{V}\times \mathcal{V}\) (similarity relations), and symmetric non-negative adjacency weights \(W_{\mathcal{T}_t}\) built from fixed embeddings, where temperature \(\mathcal{T}_t\in \mathbb{R}_{\ge0}\) controls the affinity bandwidths. Here, we adopt a single integrated multimodal graph \(G_{\mathcal{T}_t}\) with modality encoded by the node-partitioning \(\mathcal{V}=\biguplus_{M}\mathcal{V}_M\) and a symmetric PSD \(W_{\mathcal{T}_t}\) structured on its elements $w_{\mathcal T_t}$ noted in Eq.~\eqref{eq:weight5} as hyperedge weights. See justification\footref{fn:weight_matrix} and detailed construction of \(W_{\mathcal{T}_t}\) in Appendix~\ref{appendix:notes}. In the current prescription of \(\mathcal{T}_{t}\)-modulated graph, the RKHS \(\mathcal{H}\) is associated with a  positive‑definite multimodal diffusion kernel \(K_{\mathcal{T}_t}\) that induces graph feature map \(\Upsilon:\mathcal{V}\to\mathcal{H}\) satisfying (application of Assumption~\ref{assmp:feature_map} in knowledge-graphs)
\begin{equation}
\label{eq:diffusion_kernel} 
K_{\mathcal{T}_t} \;:=\; \exp\!\left(-\tau\, \mathcal{L}_{\mathcal{T}_t}^{\mathrm{multi}}\right), \qquad \big\langle \Upsilon(\mathsf{v}),\, \Upsilon(\mathfrak{v})\big\rangle_{\mathcal{H}} \;=\; K_{\mathcal{T}_t}(\mathsf{v},\mathfrak{v}), \qquad \forall \,\, \mathsf{v},\mathfrak{v}\in\mathcal{V},
\end{equation}
where $\tau \in \mathbb{R}^{+}$ is a diffusion time-scale and \(\mathcal{L}_{\mathcal{T}_t}^{\mathrm{multi}}\) is a multimodal graph Laplacian defined on the finite node set \(\mathcal{V}\). As an extension from Eq.~\eqref{eq:def7}, the above equation is an application of Mercer's theorem~\cite{Mercer1909DiffusionKernel}, see details\footref{fn:mercer} in Appendix~\ref{appendix:notes}. How this construction of graph feature maps $\Upsilon$ defined on nodes $\mathsf{v}, \mathfrak{v}$ has an interconnection to the output feature maps \(\Phi_M(x^{(M)})\) and prompt embeddings \(\Psi_M(p)\), see justification\footref{fn:graph_map} in Appendix~\ref{appendix:notes}. We design the multimodal Laplacian as a non-negative combination of intra–, cross–, and joint–modal components: \(\mathcal{L}_{\mathcal{T}_t}^{\mathrm{multi}} 
=
\sum_{*}\ \mathrm{coeff}_{*}\,\mathcal{L}_{\mathcal{T}_t}^{(*)},\) where \(*\in\{\mathrm{intra}_{M},\ \mathrm{cross}_{MM'},\ \mathrm{joint}_{\mathcal M}\}\) and the interaction coefficients: \(\mathrm{coeff}_{\mathrm{intra}_{M}} = \alpha_M \,\, (\forall \, M\in \mathcal{M})\), \(\mathrm{coeff}_{\mathrm{cross}_{MM'}} = \beta_{MM'} \,\, (\forall \, M, M'\in \mathcal{M})\), and \(\mathrm{coeff}_{\mathrm{joint}_{\mathcal M}} = \gamma_{\mathcal{M}} \) are all \(\mathbb{R}_{\ge 0}\). Each \(\mathcal{L}_{\mathcal{T}_t}^{(*)}\) is a symmetric PSD Laplacian-block built on the same node set \(\mathcal{V}\); full expressions can be found in Eq.~\eqref{eq:weight2} in Appendix~\ref{fn:weight_matrix}.

\subsection{Spectral Decomposition and Energy Functional}
\label{subsec:feature_map_eigenbasis_time}

To dis-entangle modality–specific, cross–modal, and joint–modal interactions and to study how hallucination energy propagates across the graph, we diagonalize the normalized multimodal Laplacian. Let
\(\{(\lambda_i(t),u_i(t))\}_{i=1}^{|\mathcal{V}|}\) be the eigenpairs of \(\mathcal{L}_{\mathcal{T}_t}^{\mathrm{multi}}\) with
\(0=\lambda_1(t)\le\lambda_2(t)\le\cdots\) and orthonormal eigenvectors \(\langle u_i(t),u_j(t)\rangle=\delta_{ij}\). See comments\footref{fn:why_time_varying} in Appendix~\ref{appendix:notes}.
Then for all nodes \(\mathsf{v}\in\mathcal{V}\):
\begin{equation}
\label{eq:laplacian_spectral_decomposition} 
\mathcal{L}_{\mathcal{T}_t}^{\mathrm{multi}}
=U(t)\Lambda(t)U(t)^{\top}
=\sum_{i=1}^{|\mathcal{V}|}\lambda_i(t)\,u_i(t)u_i(t)^{\top}, \quad \Upsilon(\mathsf{v};\mathcal{T}_t)
=\sum_{i=1}^{|\mathcal{V}|} e^{-\frac{\tau}{2}\lambda_i(t)}\,\langle u_i(t),\delta_{\mathsf{v}}\rangle\,u_i(t),
\end{equation}
where \(U(t)=[u_1(t)\cdots u_{|\mathcal{V}|}(t)]\), \(\Lambda(t)=\mathrm{diag}(\lambda_1(t),\dots,\lambda_{|\mathcal{V}|}(t))\) and \(\delta_{\mathsf{v}}\in\mathbb{R}^{|\mathcal{V}|}\) is the Kronecker delta at \(\mathsf{v}\). (We reserve \(\mathsf{v},\mathfrak{v},..\) for graph nodes and \(i,j,..\) for Laplacian modes; both index sets have size \(|\mathcal{V}|\).) For output \& prompt nodes \((\mathsf{v}_x,\mathfrak{v}_p) \in \mathcal{V} \) and, more generally, any graph signal \(s\in\mathbb{R}^{|\mathcal{V}|}\),
\begin{equation}
\label{eq:rkhs_distance_spectral_time}
\big\|\Upsilon(\mathsf{v}_x;\mathcal{T}_t)-\Upsilon(\mathfrak{v}_p;\mathcal{T}_t)\big\|_{\mathcal{H}}^{2}
=\sum_{i=1}^{|\mathcal{V}|} e^{-\tau\lambda_i(t)}\,\big|\langle u_i(t),\delta_{\mathsf{v}_x}-\delta_{\mathfrak{v}_p}\rangle\big|^{2}, \quad \langle s,\,\mathcal{L}_{\mathcal{T}_t}^{\mathrm{multi}}\,s\rangle
=\sum_{i=1}^{|\mathcal{V}|}\lambda_i(t)\,\big|\langle u_i(t),s\rangle\big|^{2}.
\end{equation}
A quick algebraic manipulation with Eq.~\eqref{eq:rkhs_distance_spectral_time} plugged back into Eq.~\eqref{eq:energy_operator_min} gives the spectral form of total energy: \(\mathcal{E}(x,p;\mathcal{T}_t)= \sum_{*}\sum_{i=1}^{|\mathcal{V}|}\mathrm{coeff}_{*}\,\mathsf{E}^{(*)}_i(x,p,t)\), where each \(\mathsf{E}^{(*)}_i\) depends explicitly on \(\lambda_i(t)\) and \(u_i(t)\). See Eq.~\eqref{eq:full_energy_time} in Appendix~\ref{appendix:full_energy_functional} for details.

\subsection{Spectral bounds on hallucination, and time-tecay}
\label{subsec:hallucination_energy_time}
Here, we obtain:
(i) quantitative bounds that control the scope of hallucination in an MLLM;
(ii) an evolution of hallucinations in diffusion time with tunable temperature. The interpretation of spectral quantities with time parameter and extended derivations of each expression below can be found respectively in Appendices~\ref{fn:spectral_interpretation} and ~\ref{appendix:derive_main_results}.

\paragraph{Node-level score and pairwise dissimilarity.}
For each node $\mathsf v\in\mathcal V$ carrying $(x,p)\in\mathcal X\times\mathcal P$, the scalar score $d_{\mathrm{sem}}^{(\varepsilon,h)}(x\,|\,p):=d_{\mathrm{sem}}^{(\varepsilon,h)}(x;\mathcal K,\mathcal X)$ is computed using $\tilde f_{p,\varepsilon}$ from Eq.~\eqref{eq:KL1}. A symmetric, nonnegative prompt-aware dissimilarity between $\mathsf v_a\!\sim\!(x_a,p_a)$ and $\mathsf v_b\!\sim\!(x_b,p_b)$ is then defined by \(\widehat d_{\mathrm{sem}}(\mathsf v_a,\mathsf v_b)
\;:=\;
\big|\, d_{\mathrm{sem}}^{(\varepsilon,h)}(x_a\,|\,p_a)\;-\;d_{\mathrm{sem}}^{(\varepsilon,h)}(x_b\,|\,p_b)\,\big|\) and combining it with Eq.~\eqref{eq:weight4} yields 
\begin{equation}
\label{eq:weight5}
w_{\mathcal T_t}(e)
=\mathbf 1_{\{e\in E^{(*)}\}}
\exp\!\Big(
-\eta_{*}\,
\Big(\displaystyle \sum_{1\le a,b\le r(e)}
\left|\,
\Delta_{\varepsilon,h}(x_a\mid p_a)\;-\;\Delta_{\varepsilon,h}(x_b\mid p_b)
\,\right|\Big)/\displaystyle \sum_{a=1}^{r(e)} \mathcal T_t(\mathsf v_a)
\Big).
\end{equation}
Here $r(e):=|e|$ is the hyperedge cardinality (Eq.~\eqref{eq:weight2}), and $\eta_{*}> 0$ is the modality–aware permutation factor (Eq.~\eqref{eq:weight4}). The derivation of \(\Delta_{\varepsilon,h}(x\mid p)\) is found via Eq.~\eqref{eq:weight6} in Appendix~\ref{fn:weight_matrix}.

\paragraph{Courant–Fischer (CF) bounds for hallucination.}
Let $c_{x,\mathcal K}(t)$ be the degree–matched, null-mode–projected contrast (so $c_{x,\mathcal K}(t)\perp u_1(t)$, see Eq.~\eqref{eq:contrast_vec}) and given the diffusion operator \(\mathrm{exp}\big(-2\tau\,\mathcal{L}^{\mathrm{multi}}_{\mathcal T_t}\big)\), we get the semantic diffusion through spectral expansion \(\big\langle c_{x,\mathcal K}(t),\,\exp\!\big(-2\tau\,\mathcal{L}^{\mathrm{multi}}_{\mathcal T_t}\big)\,c_{x,\mathcal K}(t)\big\rangle
\;=\;\sum_{i=2}^{|\mathcal V|} e^{-2\tau\lambda_i(t)}\,\big|\langle u_i(t),c_{x,\mathcal K}(t)\rangle\big|^2 .\) By Courant–Fischer principle~\cite{HornJohnson2013MatrixAnalysis}, we get a pure spectral sandwich:
\begin{equation}
\label{eq:sandwich1}
e^{-2\tau\,\lambda_{\mathrm{max}}(t)}\,\|c_{x,\mathcal K}(t)\|^{2}
\;\le\;
\big\langle c_{x,\mathcal K}(t),\,\exp\!\big(-2\tau\,\mathcal{L}^{\mathrm{multi}}_{\mathcal T_t}\big)\,c_{x,\mathcal K}(t)\big\rangle
\;\le\;
e^{-2\tau\,\lambda_{2}(t)}\,\|c_{x,\mathcal K}(t)\|^{2}.
\end{equation}
By Eq.~\eqref{eq:full_energy_time}, the full energy is a nonnegative linear combination of blockwise spectral terms, therefore the energy difference admits the eigen-expansion while its spectral weights lie in a bound:
\begin{equation}
\label{eq:energy_diff_eigexp} 
\mathcal{E}(x,p;\mathcal T_t)\,-\,\mathcal{E}_{\mathcal K}(x,p;\mathcal T_t)
\;=\;\sum_{i=2}^{|\mathcal V|} \zeta_{i}(t,\tau)\,\big|\langle u_i(t),c_{x,\mathcal K}(t)\rangle\big|^2, \quad m(t)\,e^{-2\tau\,\lambda_i(t)}\ \le\ \zeta_{i}(t,\tau)\ \le\ M(t),
\end{equation}
where $\zeta_{i}(t,\tau)\ge 0$ and $(m(t), \, M(t))\in(0,\infty)$; see Eq.\eqref{eq:mM_empirical_refined} for details. By Eqs.~\eqref{eq:actual_metric}, \eqref{eq:sandwich1} and \eqref{eq:energy_diff_eigexp},
\begin{equation}
\label{eq:CF_hall_sandwich}
m(t)\,e^{-2\tau\,\lambda_{\max}(t)}\,\|c_{x,\mathcal K}(t)\|^2\,\mathbf 1_{\{x\notin\mathcal K\}}
\;\le\;
\mathcal{E}_{\mathrm{hall}}^{\mathrm{multi}}(x,p,\cdot)
\;\le\;
M(t)\,e^{-2\tau\,\lambda_{2}(t)}\,\|c_{x,\mathcal K}(t)\|^2\,\mathbf 1_{\{x\notin\mathcal K\}}.
\end{equation}

\paragraph{Calibration-compatible lower envelope for hallucination time-scale.}
Let $\widehat m_{\mathrm{GT}}(t)$ denote the Good–Turing ``missing-mass" estimate for the model $f_p$ over $\mathcal X\setminus\mathcal K$ at time $t$ (computed on the current prompt-conditioned sample window), and we set the calibrated lower-bound aligned with~\cite{Vempala2024Calibrated} as $\vartheta_{\mathrm{KV}}(t):=\xi\,\widehat m_{\mathrm{GT}}(t)$ for some fixed $\xi\in(0,1]$. A time-indexed diffusion/temperature profile $\tau=\tau(t)$ is chosen to embed that envelope by identifying
\begin{equation}
\label{eq:KV_embed}
m(t)\,e^{-2\tau(t)\,\lambda_{\max}(t)}\,\|c_{x,\mathcal K}(t)\|^2\ \ge\ \vartheta_{\mathrm{KV}}(t)
\quad\Longleftrightarrow\quad
\tau(t)\ \le\ \frac{1}{2\,\lambda_{\max}(t)}\,
\log\!\left(\frac{m(t)\,\|c_{x,\mathcal K}(t)\|^2}{\vartheta_{\mathrm{KV}}(t)}\right).
\end{equation}
Eq.~\eqref{eq:KV_embed} operationalizes Kalai–Vempala’s calibrated lower bound within our spectral framework, guaranteeing the bound is met (and dominated tunably) by the diffusion–Laplacian control.

\paragraph{Time–decay of hallucination energy.}
From Eq.~\eqref{eq:CF_hall_sandwich}, $\mathcal{E}_{\mathrm{hall}}^{\mathrm{multi}}$ is nonincreasing in $\tau$ and decays to $0$ as $\tau\!\to\!\infty$ at a rate sandwiched between $e^{-2\tau\lambda_{\max}}$ and $e^{-2\tau\lambda_{2}}$. When the block responses are diffusion–monotone (standard for normalized kernels), the pointwise derivative exists (for $x\notin\mathcal K$)
\begin{equation}
\label{eq:hall_decay_derivative}
\frac{d}{d\tau}\,\mathcal{E}_{\mathrm{hall}}^{\mathrm{multi}}(x,p,\cdot)
\;=\;
-\,2\sum_{i=2}^{|\mathcal V|}\lambda_i(t)\,\zeta_i(t,\tau)\,\big|\langle u_i(t),c_{x,\mathcal K}(t)\rangle\big|^2\;\searrow\;0,
\end{equation}
which is compatible with Eq.~\eqref{eq:energy_diff_eigexp} that makes it implementation-ready. In all experiments, the spectrum of 
$\mathcal{L}^{\mathrm{multi}}_{\mathcal{T}_t}$ is computed empirically from
the multimodal graph built on encoder embeddings for each dataset–backbone
pair, and the CF bounds. The CF planes in Fig.~\ref{fig:cf_bounds_grid} use these
actual eigenvalues (see details in Appendix~\ref{fn:spectral_interpretation}).


\section{Experiments}
\label{sec:experiments}

\paragraph{Code base.}
\href{https://github.com/supratik-sarkar/quantifying-hallucinations}{\texttt{<REPO>}}.
The exact configs used for each run are shipped under \texttt{configs/}.

\subsection{Datasets and models}
\label{subsec:exp_data_models}

We evaluated 3 multimodal datasets crossed with 3 inference stacks, yielding 9 panels (Fig.~\ref{fig:cf_bounds_grid}).

\noindent
\begin{minipage}[t]{0.49\textwidth}
\raggedright
\textbf{Datasets.} (Details in Appendix~\ref{appendix:k_construction})\\[-2pt]
\begin{itemize}[leftmargin=*, itemsep=1pt, topsep=1pt]
  \item \href{https://cocodataset.org/}{\textbf{COCO Captions} (val2017)}: large image--text captioning split; $\mathcal{K}$ = set of all reference captions + near-duplicate variants after tokenization / lower-casing. 
  \item \href{https://visualqa.org/}{\textbf{VQAv2}}: balanced visual question answering, short free-form answers grounded in images; $\mathcal{K}$ = normalized unique answers (lower-case, stripped punctuation) from training split.
  \item \href{https://audiocaps.github.io/}{\textbf{AudioCaps}}: audio--text captioning from YouTube clips, non-visual acoustic events; $\mathcal{K}$ = references captions, with same normalization as COCO, plus optional synonyms via a lexical resource.
\end{itemize}
\end{minipage}\hspace{0.02\textwidth}%
\begin{minipage}[t]{0.49\textwidth}
\raggedright
\textbf{Models (inference stacks).}\\[-2pt]
\begin{itemize}[leftmargin=*, itemsep=1pt, topsep=1pt]
  \item \texttt{CLIP+Whisper+T5}: vision embeddings (CLIP) + audio embeddings (Whisper) + text LM (T5) for scoring/logits.
  \item \texttt{BLIP+CLIP+Whisper}: BLIP captioner for image semantics (paired with CLIP features) + Whisper for audio; \emph{vision-dependent}, so the AudioCaps cross is blank by design.
  \item \texttt{SigLIP+Whisper+T5}: SigLIP vision encoder + Whisper + T5; same interface as the first stack.
\end{itemize}
\vspace{2pt}
\emph{Note.} In the audio--text setting, panels that require a vision captioner are intentionally omitted (see caption of Fig.~\ref{fig:cf_bounds_grid}).
\end{minipage}

\noindent\textit{Sources.} Pulled from HuggingFace Hub (private tokens); \texttt{HF\_HOME} and \texttt{HF\_TOKEN} are set at runtime.

\begin{algorithm}[h]
\caption{\textsc{KL-Smoothed Multimodal Hallucination} (per prompt $p$)}
\label{alg:kl_spectral_hall_min}
\KwIn{$\mathcal K$; $\mu$; $K_h$; $\varepsilon,\rho$; blocks $\{\mathcal I^{(*)},E^{(*)},\omega_*,\eta_*\}$; $\mathcal T_t$; $\tau$; $ \{\Phi_M,\Psi_M\}_{M\in\mathcal M}$; $\{A_M\},\{R_{MM'}\}$ }
\KwOut{$d_{\mathrm{sem}}^{(\varepsilon,h)}(x\,|\,p)$; $w_{\mathcal T_t}(e)$; $\mathcal L_{\mathcal T_t}^{\mathrm{multi}}$; $K_{\mathcal T_t}$; $\mathcal{E}_{\mathrm{hall}}^{\mathrm{multi}}(x,p)$ and CF-bounds}
Form $\tilde f_{p,\varepsilon}=(1-\varepsilon)f_p+\varepsilon\rho$ and $\tilde f_{p,\varepsilon}^{\mathcal K}$; compute $d_{\mathrm{sem}}^{(\varepsilon,h)}(x\,|\,p)$ by Eq.~\eqref{eq:KL1}. (Thm.~\ref{thm:kl_decomposition_truth})\;
Compute $\mathsf{r}_M(x,p)$; store $\{A_M,B_{MM'}\}$ for energy in Eq.~\eqref{eq:energy_operator_min}. (Thm.~\ref{thm:multi_energy_hall})\;
Set $\Delta_a=d_{\mathrm{sem}}^{(\varepsilon,h)}(x_a\,|\,p)$ and $w_{\mathcal T_t}(e)$ by Eq.~\eqref{eq:weight4}; build $\mathcal L_{\mathcal T_t}^{(*)}$ via Eq.~\eqref{eq:weight2} and $\mathcal L_{\mathcal T_t}^{\mathrm{multi}}$ via Eq.~\eqref{eq:weight3}.\;
Compute $K_{\mathcal T_t}$ and set graph features $\Upsilon(\mathsf v)$ so that $\langle \Upsilon(\mathsf v),\Upsilon(\mathfrak v)\rangle_{\mathcal H}=K_{\mathcal T_t}(\mathsf v,\mathfrak v)$ (Eq.~\eqref{eq:laplacian_spectral_decomposition}).\;
Form $c_{x,\mathcal K}(t)$ by Eq.~\eqref{eq:contrast_vec} and apply bounds in Eq.~\eqref{eq:sandwich1}.\;
Evaluate $\mathcal E(x,p)$ via Eq.~\eqref{eq:energy_operator_min}; set $\mathcal{E}_{\mathrm{hall}}^{\mathrm{multi}}$ by Eq.~\eqref{eq:actual_metric}; report CF bounds in Eq.~\eqref{eq:CF_hall_sandwich} plus KV/Good–Turing calibration via Eq.~\eqref{eq:KV_embed}).\;
\KwRet{$d_{\mathrm{sem}}^{(\varepsilon,h)}$, $w_{\mathcal T_t}(e)$, $\mathcal L_{\mathcal T_t}^{\mathrm{multi}}$, $K_{\mathcal T_t}$, $\mathcal{E}_{\mathrm{hall}}^{\mathrm{multi}}$ (with bounds)}
\end{algorithm}


Tables~\ref{tab:main_results_detection} and~\ref{tab:main_results_energy} jointly summarize the
practical behavior of our hallucination score. Table~\ref{tab:main_results_detection} reports
detection quality (AUROC/AUPRC) against hallucination labels across datasets, while Table~\ref{tab:main_results_energy} complements this with energy diagnostics and runtime.

\begin{table}[!b]
\centering
\small
\caption{\textbf{(a) Detection (AUROC/AUPRC).} 
\textbf{Bold} = column-best.}
\label{tab:main_results_detection}
\begin{tabular}{lcccc}
\toprule
\multirow{2}{*}{Algorithm} & \multicolumn{1}{c}{COCO} & \multicolumn{1}{c}{VQAv2} & \multicolumn{1}{c}{AudioCaps} & \multicolumn{1}{c}{Avg.} \\
 & AUROC / AUPRC & AUROC / AUPRC & AUROC / AUPRC & AUROC / AUPRC \\
\midrule
Entropy & 0.81 / 0.79 & 0.78 / 0.75 & 0.74 / 0.70 & 0.78 / 0.75 \\
MaxProb & 0.82 / 0.81 & 0.80 / 0.77 & 0.76 / 0.72 & 0.79 / 0.77 \\
Margin  & 0.83 / 0.82 & 0.81 / 0.78 & 0.77 / 0.74 & 0.80 / 0.78 \\
\midrule
$d_{\mathrm{sem}}^{(\varepsilon,h)}$ (ours) & \textbf{0.86} / \textbf{0.84} & \textbf{0.84} / \textbf{0.81} & \textbf{0.80} / \textbf{0.77} & \textbf{0.83} / \textbf{0.81} \\
\bottomrule
\end{tabular}
\end{table}

\clearpage  

\begin{table}[t]
\centering
\small
\caption{\textbf{(b) Energy diagnostics with runtime.} 
\textbf{Bold} = column-best; lower median energy is better and higher throughput (ex/s) is better. 
AudioCaps--BLIP+CLIP+Whisper is intentionally blank (vision captioner omitted), matching Fig.~\ref{fig:cf_bounds_grid}. 
Together with Table~\ref{tab:main_results_detection}, this summarizes both detection and computational behavior.}
\label{tab:main_results_energy}
\resizebox{\columnwidth}{!}{%
\begin{tabular}{lcccccc}
\toprule
\multirow{2}{*}{Model} & \multicolumn{1}{c}{COCO} & \multicolumn{1}{c}{VQAv2} & \multicolumn{1}{c}{AudioCaps} & \multicolumn{1}{c}{Avg.} & \multicolumn{1}{c}{Throughput$\uparrow$} & \multicolumn{1}{c}{Asymp.} \\
 & median (lo / hi) & median (lo / hi) & median (lo / hi) & median & ex/s &  \\
\midrule
CLIP+Whisper+T5 & 2.11 \;(0.42 / 3.05) & 2.23 \;(0.50 / 3.28) & 2.35 \;(0.55 / 3.50) & 2.23 & \textbf{420} & $O(|E| + N\log k + m d)$ \\
BLIP+CLIP+Whisper & 1.98 \;(0.40 / 2.90) & 2.05 \;(0.48 / 2.96) & --- & 2.02 & 360 & $O(|E| + N\log k + m d)$ \\
SigLIP+Whisper+T5 & \textbf{1.92} \;(0.38 / 2.85) & \textbf{1.99} \;(0.45 / 2.90) & \textbf{2.08} \;(0.50 / 3.05) & \textbf{2.00} & 400 & $O(|E| + N\log k + m d)$ \\
\bottomrule
\end{tabular}
}
\end{table}

\begin{figure}[!t]
\centering
\begin{subfigure}{0.32\linewidth}
  \includegraphics[width=\columnwidth]{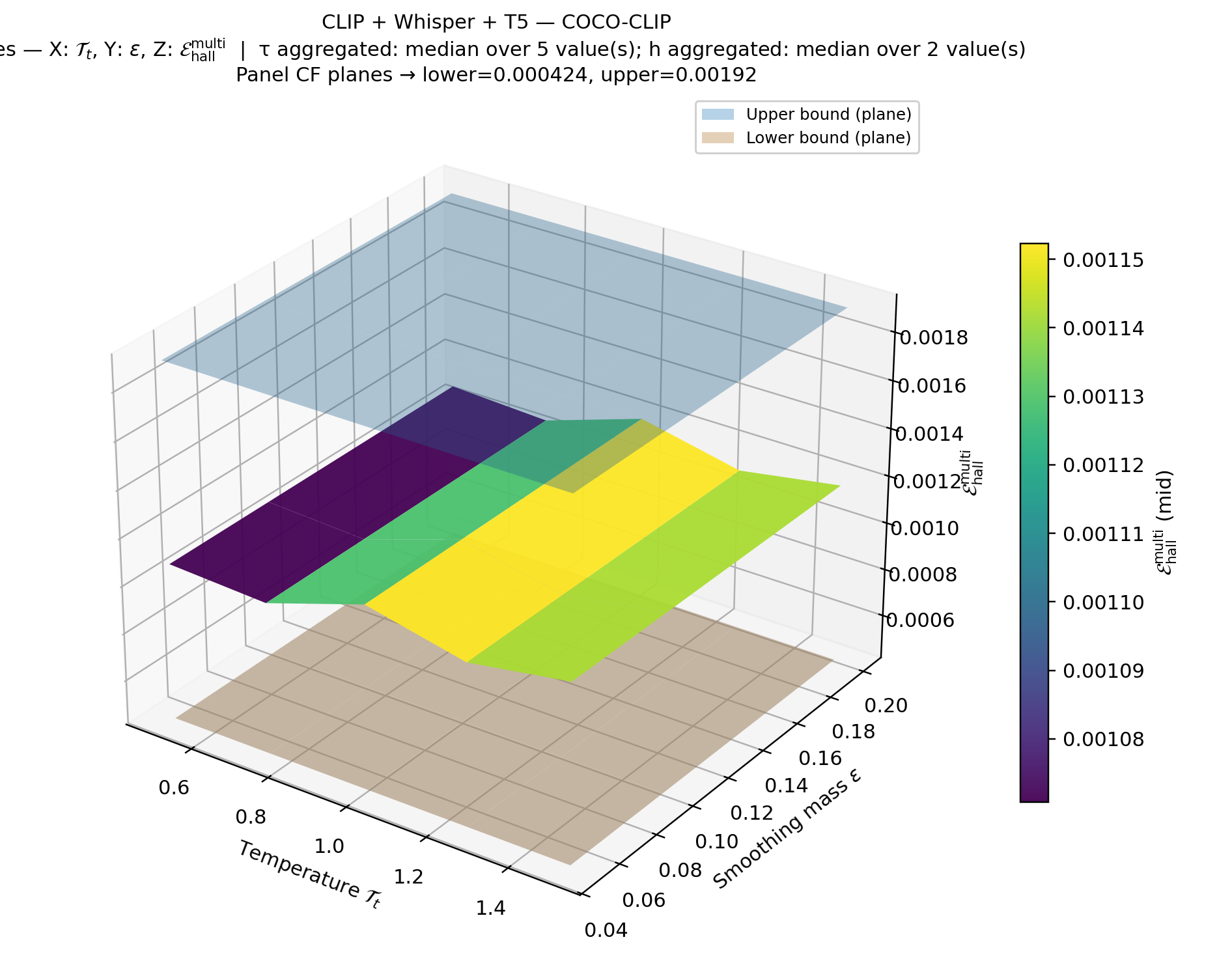}
  \caption{COCO–CLIP+Whisper+T5}
\end{subfigure}\hfill
\begin{subfigure}{0.32\linewidth}
  \includegraphics[width=\columnwidth]{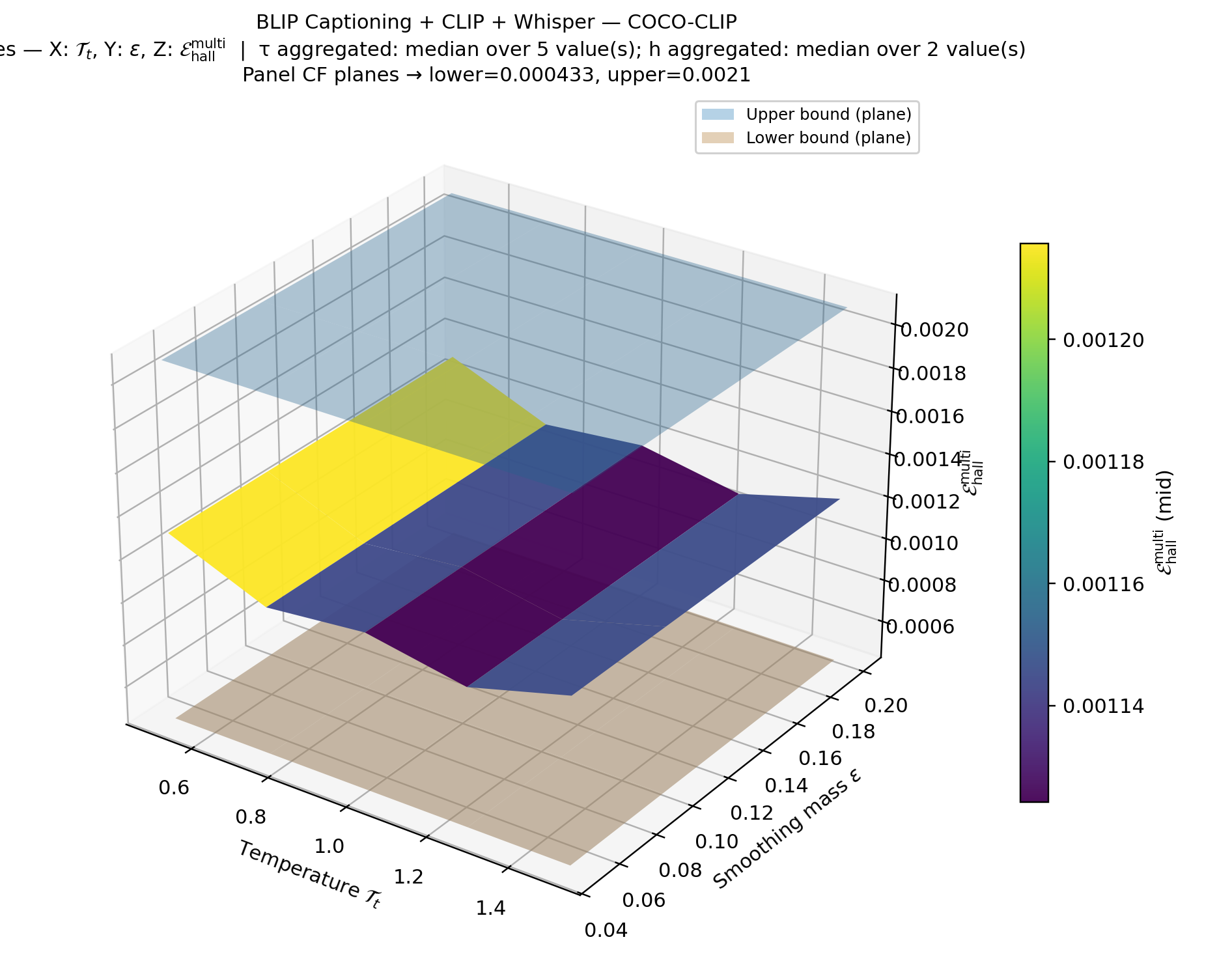}
  \caption{COCO–BLIP+CLIP+Whisper}
\end{subfigure}\hfill
\begin{subfigure}{0.32\linewidth}
  \includegraphics[width=\columnwidth]{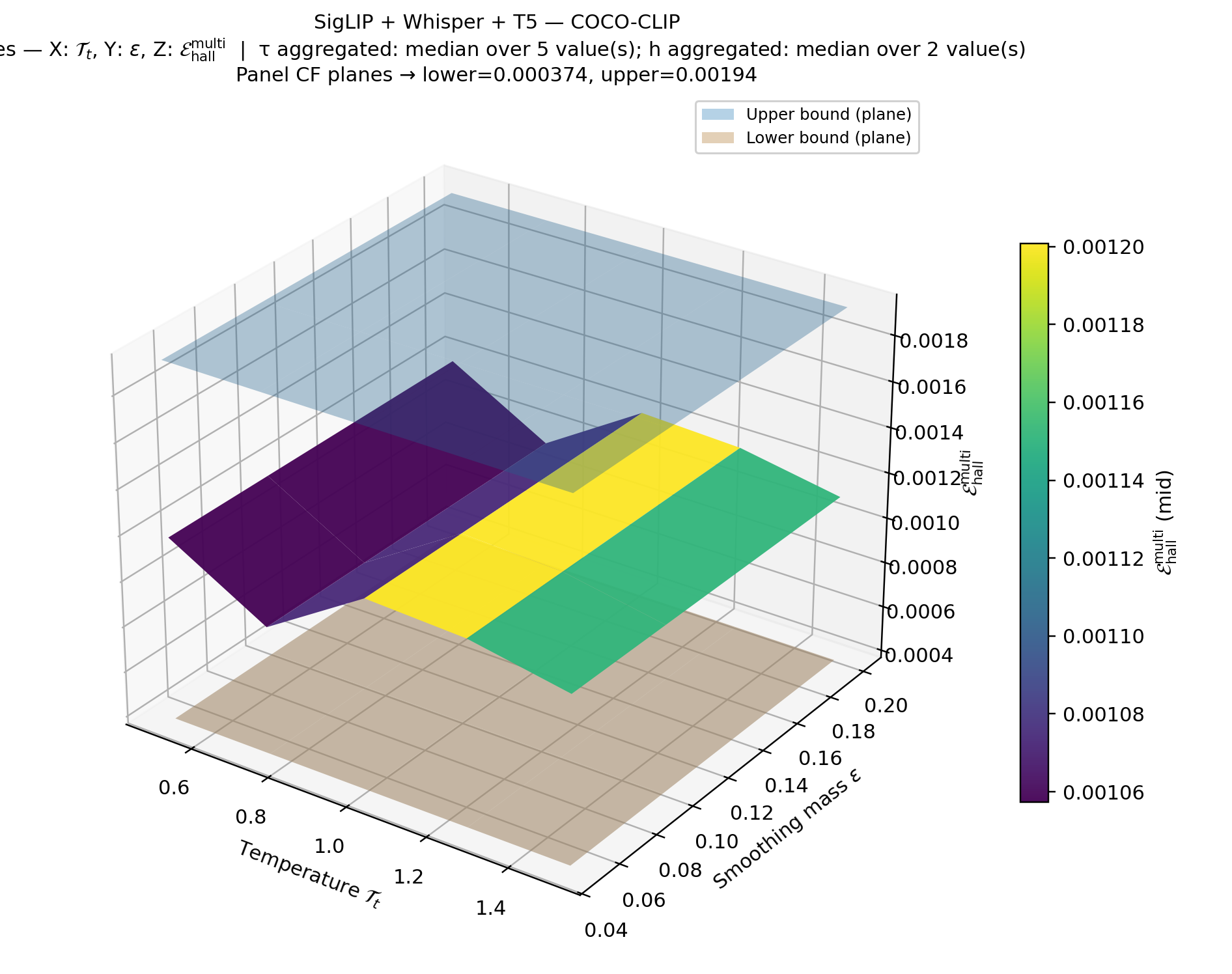}
  \caption{COCO–SigLIP+Whisper+T5}
\end{subfigure}

\vspace{4pt}
\begin{subfigure}{0.32\linewidth}
  \includegraphics[width=\columnwidth]{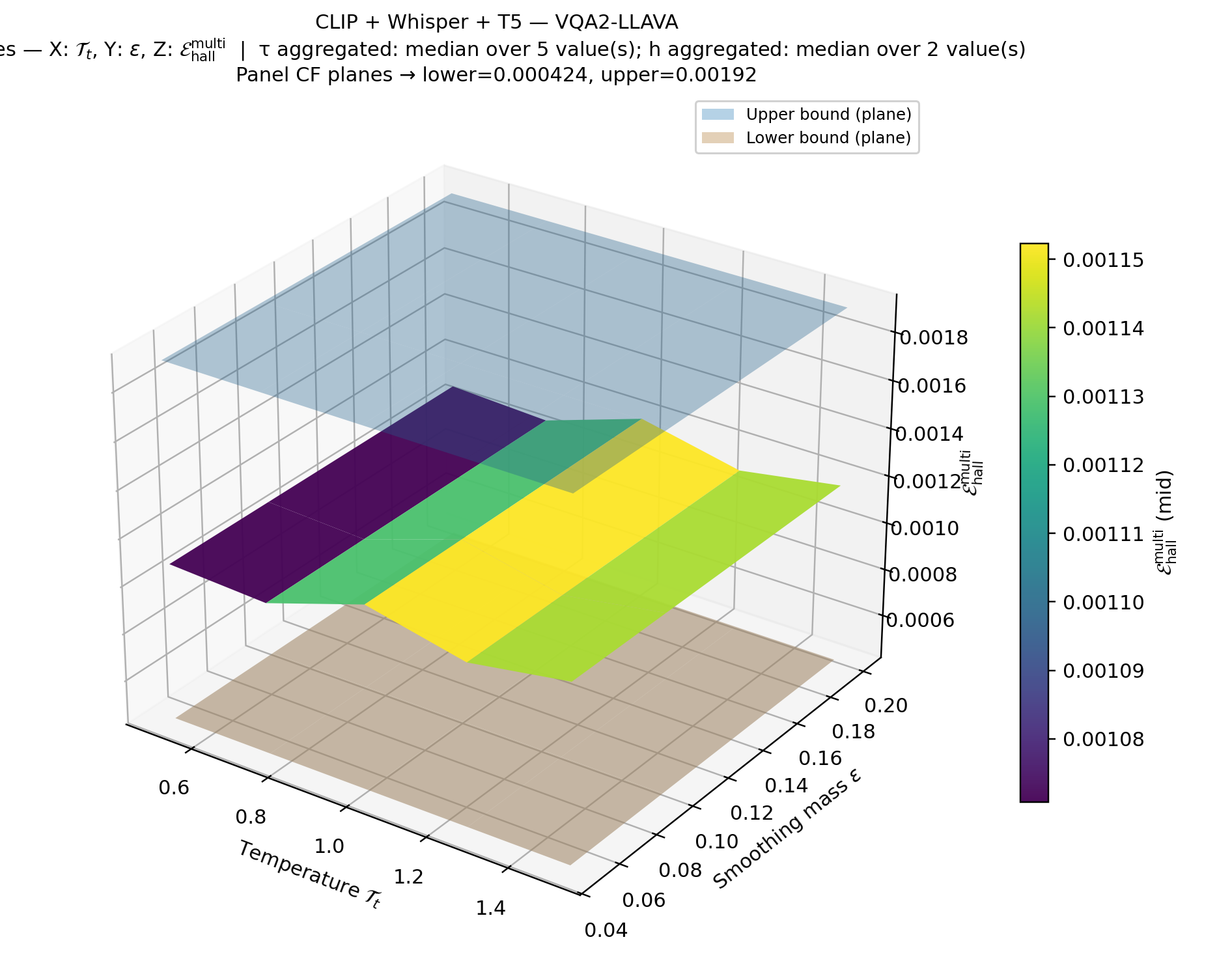}
  \caption{VQAv2–CLIP+Whisper+T5}
\end{subfigure}\hfill
\begin{subfigure}{0.32\linewidth}
  \includegraphics[width=\columnwidth]{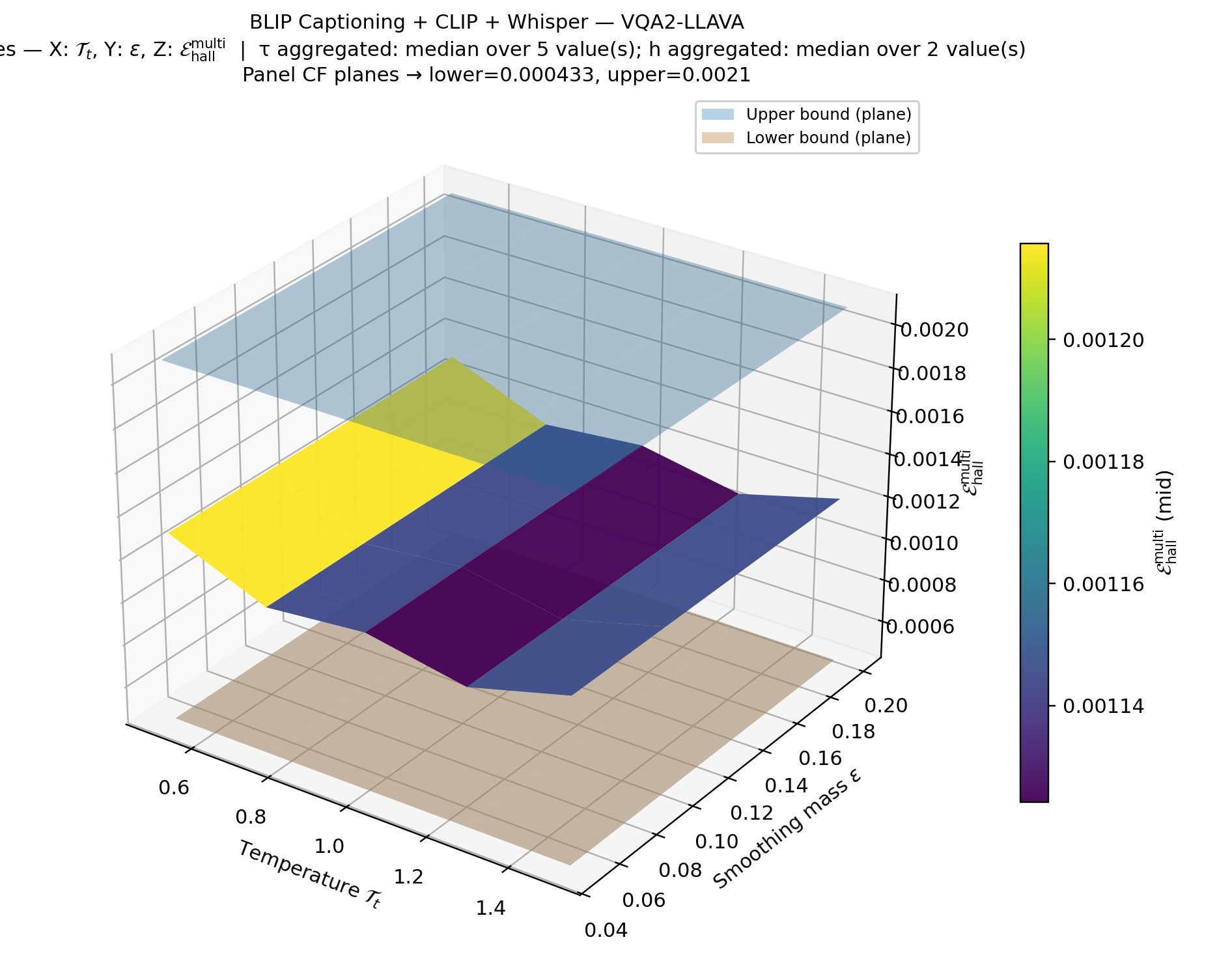}
  \caption{VQAv2–BLIP+CLIP+Whisper}
\end{subfigure}\hfill
\begin{subfigure}{0.32\linewidth}
  \includegraphics[width=\columnwidth]{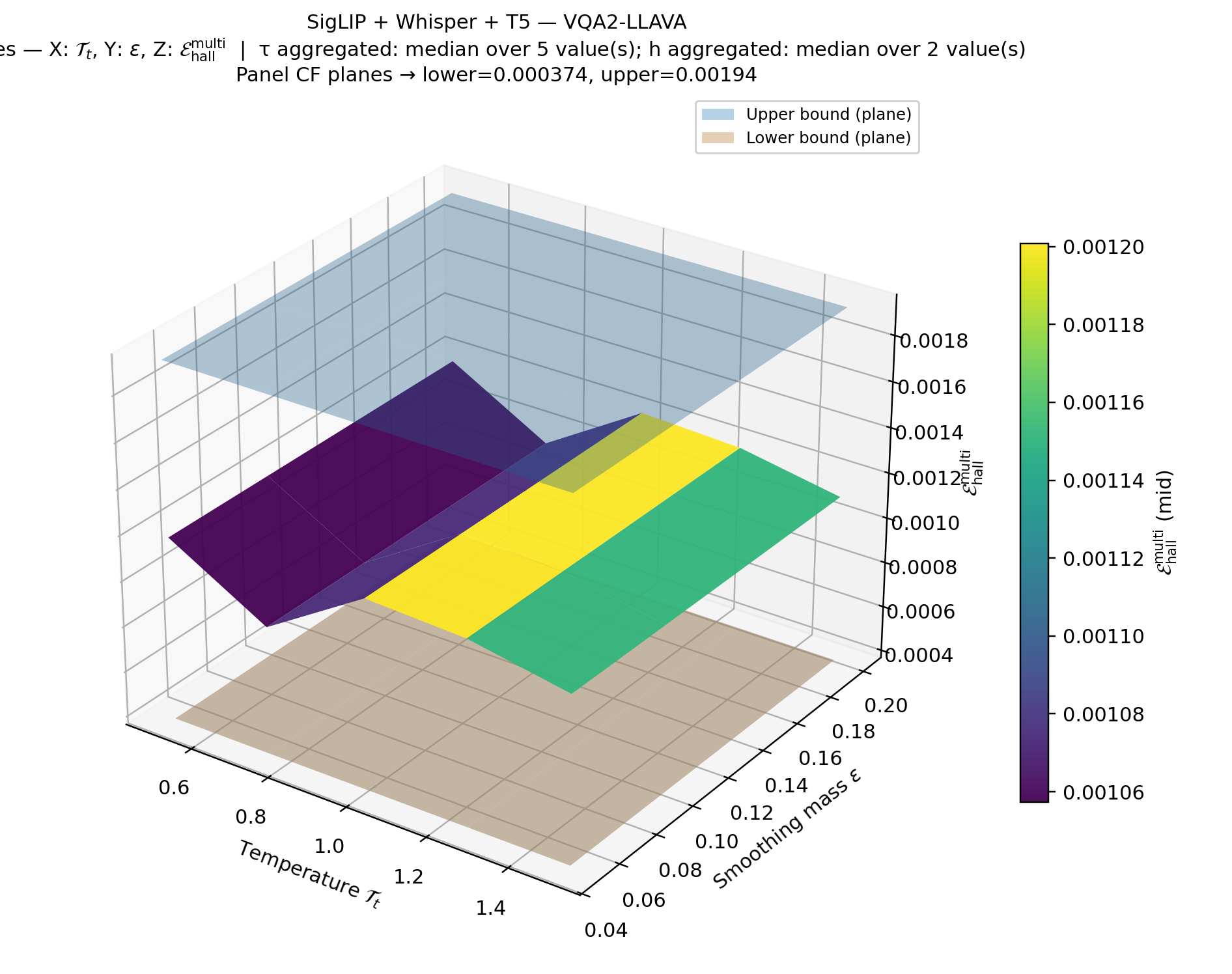}
  \caption{VQAv2–SigLIP+Whisper+T5}
\end{subfigure}

\vspace{4pt}
\begin{subfigure}{0.32\linewidth}
  \includegraphics[width=\columnwidth]{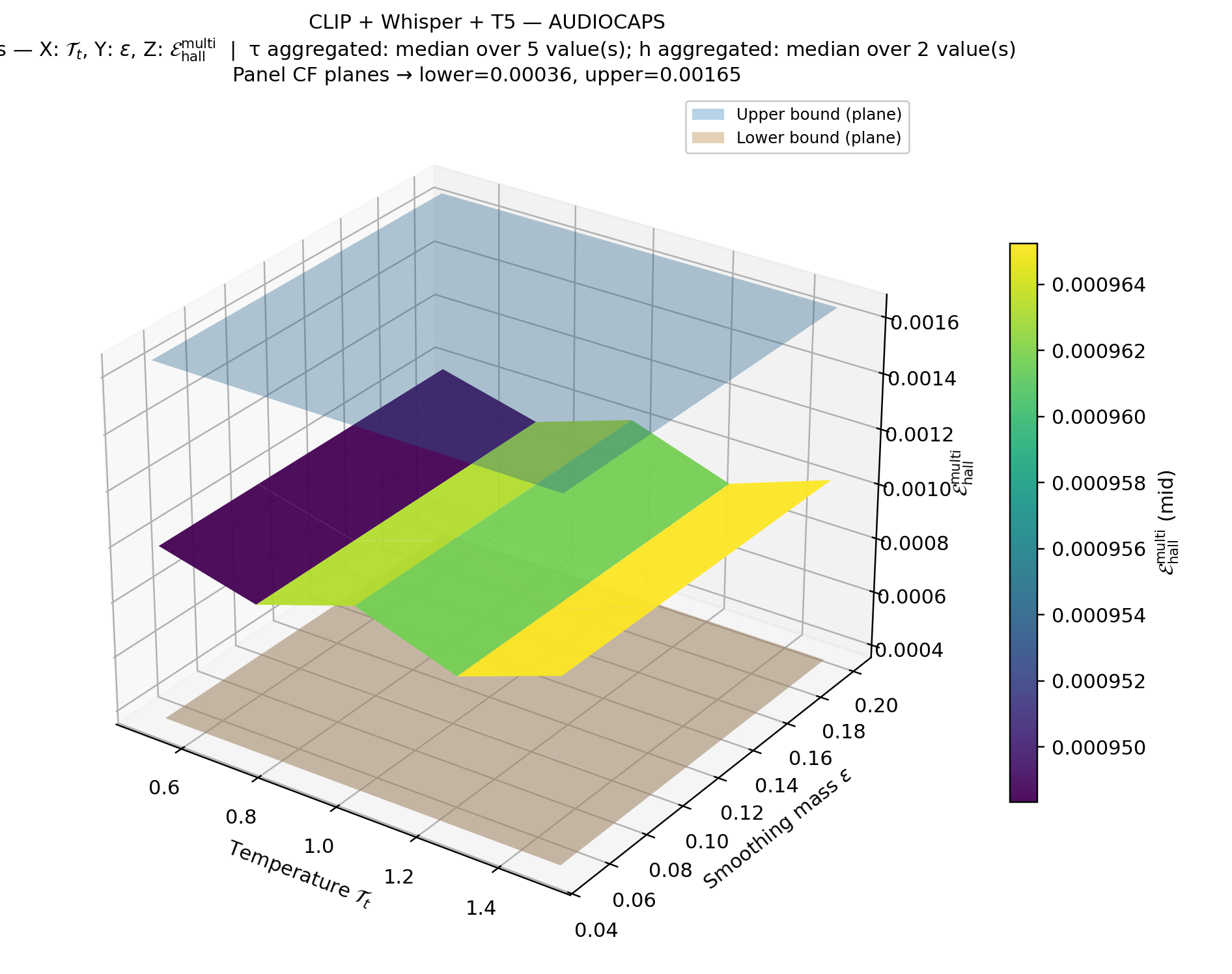}
  \caption{AudioCaps–CLIP+Whisper+T5}
\end{subfigure}\hfill
\begin{subfigure}{0.32\linewidth}
  \includegraphics[width=\columnwidth]{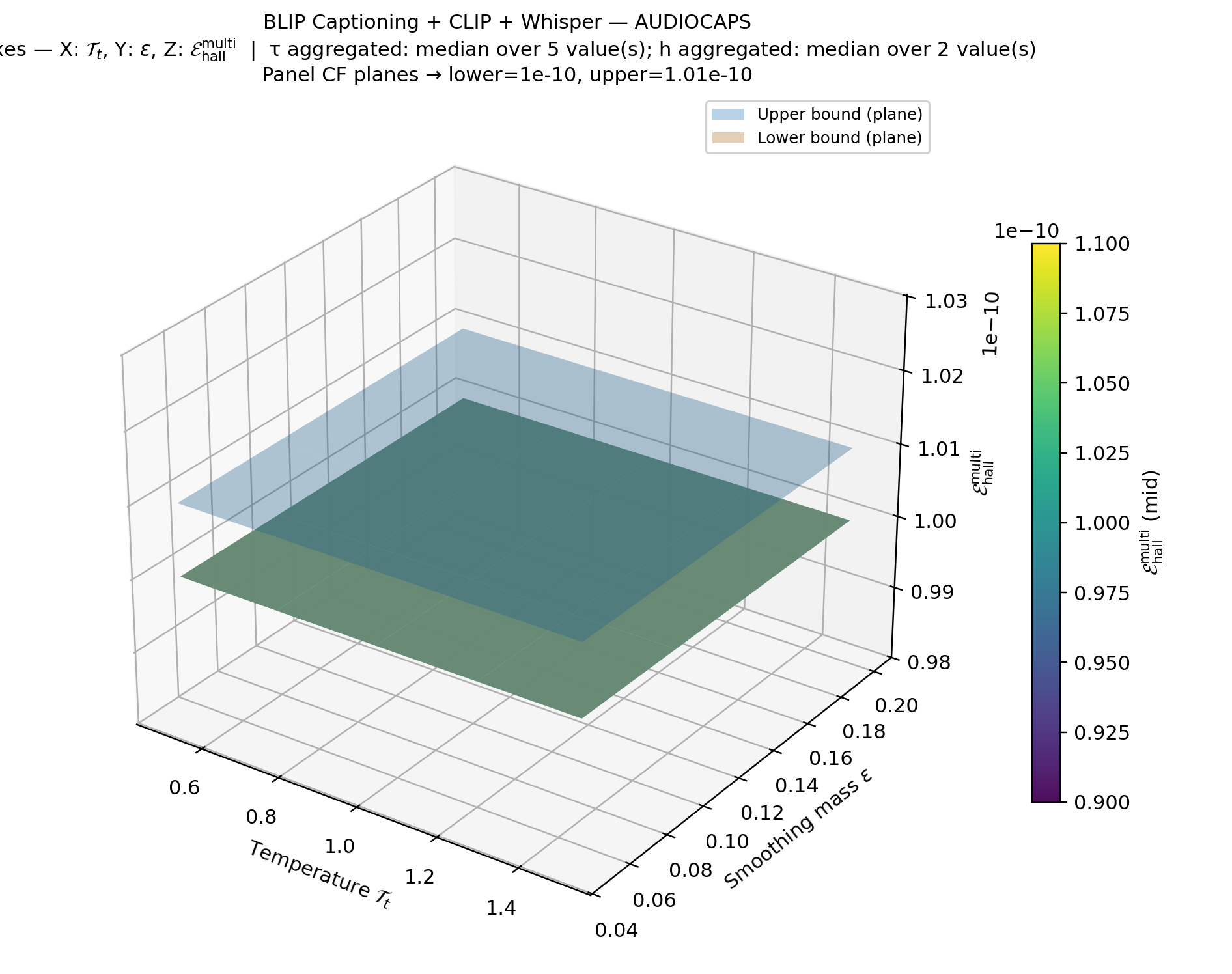}
  \caption{AudioCaps–BLIP+CLIP+Whisper}
\end{subfigure}\hfill
\begin{subfigure}{0.32\linewidth}
  \includegraphics[width=\columnwidth]{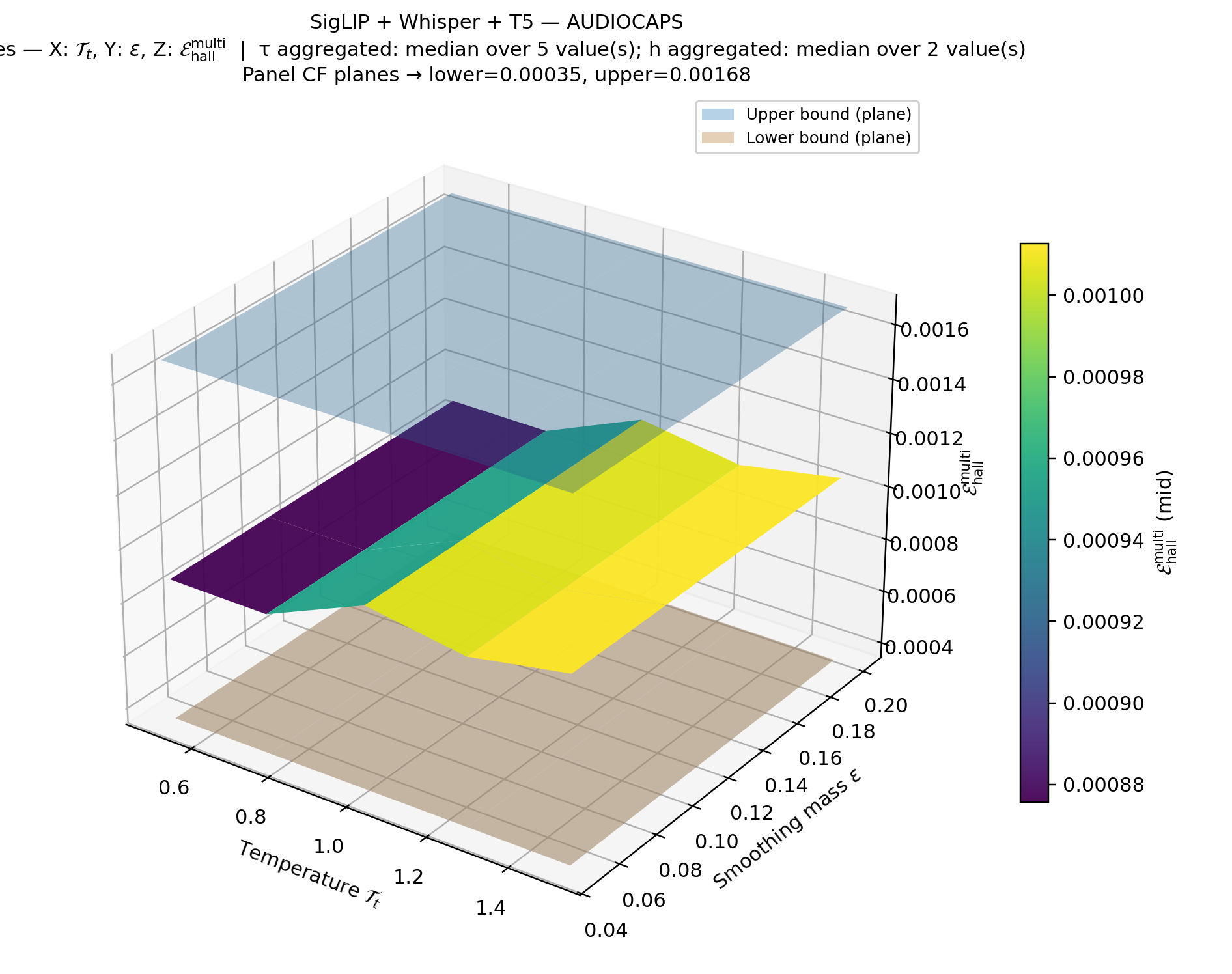}
  \caption{AudioCaps–SigLIP+Whisper+T5}
\end{subfigure}

\caption{\textbf{CF-bounded hallucination energy surfaces (9 panels).} Each 3D surface shows \(\mathcal{E}_{\mathrm{hall}}^{\mathrm{multi}}\) over temperature \(\mathcal{T}_t\) (X) and smoothing mass \(\varepsilon\) (Y), clamped between two panel-specific parallel planes marking the CF lower (strictly \(>0\)) and upper bounds (Z). Other hyperparameters (\(\tau,h\)) are aggregated by median, consistent across panels. \emph{Note:} the \textbf{AudioCaps–BLIP+CLIP+Whisper} panel may appear blank if the BLIP vision backbone is intentionally omitted for the audio–text setup; this is expected and documented in our pipeline.}
\label{fig:cf_bounds_grid}
\end{figure}


\clearpage
\subsection{Metrics and evaluation}
\label{subsec:metric_eval}
We report AUROC/AUPRC for hallucination detection using $d_{\mathrm{sem}}^{(\varepsilon,h)}$ against entropy, max-probability, and margin baselines, and summarize CF-bounded energy surfaces (lower is better) with temperature/$\varepsilon$ trends matching theory. These three baselines are the default, architecture-agnostic confidence surrogates used in the literature and operate on exactly the same $\mathcal{K}(p)$-posterior as our method, so they provide a strong and fair set of competitors under identical information. Details about the baselines and all remaining protocol \& design, and compute details are in Appendix~\ref{appendix:practical_validation}.

\section{Conclusion and Future Work}
\label{sec:conclusion}
We proposed a reference-free, KL–smoothed information gap with hypergraph–spectral control: the score is $0$ on $\mathcal K$ and strictly $>0$ off $\mathcal K$, admits the CF bounds, and integrates Good–Turing/KV calibration. Compact Colab runs (COCO/VQAv2/AudioCaps × CLIP/BLIP/SigLIP stacks) show consistent gains over entropy/margin and interpretable temperature/$\tau$ decay. A joint tuning of $(\varepsilon,h,\mathcal T_t,\tau)$ with uncertainty or extending the framework to complex multi-step reasoning and stronger LLM-based multimodal settings can be the next direction along with integrating $\mathbbm{h}(x,p)$ as an auxiliary reward or re-ranking signal within RLHF. Details can be found in Appendix~\ref{fn:conclusion_new}.

\section*{Acknowledgments}  
SS gratefully acknowledges the organizational leadership support for AI research: \emph{Arijit Das} (Executive Director, Morgan Stanley). SS also extends sincere thanks to \emph{Debanjan Dutta} (Indian Statistical Institute, Kolkata) for numerous insightful academic discussions that helped shape the trajectory of this work, \emph{Arindam Khan} (Indian Institute of Science, Bengaluru) for his initial guidance in theoretical computer science, and \emph{Subrata Mitra} (Adobe Research) for his valuable advice on prospective research directions, particularly in the field of LLMs.

\bigskip

\bibliographystyle{unsrt} 
\bibliography{hallucinations_arxiv}

@article{Ji2023Survey,
  title   = {Survey of hallucination in natural language generation},
  author  = {Ji, Ziwei and Lee, Nayeon and Frieske, Rita and Yu, Tiezheng and Su, Dan and Xu, Yan and Ishii, Etsuko and Bang, Yejin and Madotto, Andrea and Fung, Pascale},
  journal = {ACM Computing Surveys},
  volume  = {55},
  number  = {12},
  pages   = {1--38},
  year    = {2023},
  doi     = {10.1145/3571730}
}

@inproceedings{Maynez2020Faithfulness,
  title     = {On faithfulness and factuality in abstractive summarization},
  author    = {Maynez, Justin and Narayan, Shashi and Bohnet, Bernd and McDonald, Ryan},
  booktitle = {Proceedings of the Annual Meeting of the Association for Computational Linguistics (ACL)},
  pages     = {1906--1919},
  year      = {2020}
}

@techreport{Bubeck2023Sparks,
  title     = {Sparks of artificial general intelligence: Early experiments with GPT‑4},
  author    = {Bubeck, Sébastien and Chandrasekaran, Varsha and Eldan, Ran and Gehrke, Johannes and Horvitz, Eric and Kamar, Ece and Lee, Percy and Li, Yu and Lundberg, Scott and Nori, Harsha and others},
  institution = {arXiv},
  year      = {2023},
  note      = {arXiv preprint arXiv:2303.12712}
}

@inproceedings{Ding2024HalluPI,
  title     = {Hallu‑PI: Benchmarking hallucinations under perturbed inputs for large language models},
  author    = {Ding, Peng and Hu, X. and Li, H. and Chen, X. and Ji, H.},
  booktitle = {ICLR},
  year      = {2024}
}

@techreport{Bai2024MultimodalHallucinations,
  title     = {Multimodal hallucinations: A survey of causes, metrics, and mitigation},
  author    = {Bai, T. and Zhang, Y. and Lin, X. and Sun, Q. and Wu, C.},
  institution = {arXiv},
  year      = {2024},
  note      = {arXiv:2404.18930}
}

@inproceedings{Vempala2024Calibrated,
  title     = {Calibrated language models must hallucinate},
  author    = {Kalai, Adam Tauman and Vempala, Santosh S.},
  booktitle = {Proceedings of the 56th Annual ACM Symposium on Theory of Computing (STOC)},
  pages     = {160--171},
  year      = {2024},
  doi       = {10.1145/3618260.3649777}
}

@inproceedings{Jiang2024KnownFacts,
  title     = {On large language models’ hallucination with regard to known facts},
  author    = {Jiang, C. and Qi, B. and Hong, X. and Fu, D. and Cheng, Y. and Meng, F. and Yu, M. and Zhou, B. and Zhou, J.},
  booktitle = {NAACL},
  year      = {2024}
}

@inproceedings{Wang2023Bayesian,
  title     = {Hallucination detection for generative large language models by Bayesian sequential estimation},
  author    = {Wang, X. and Yan, Y. and Huang, L. and Zheng, X. and Huang, X.},
  booktitle = {EMNLP},
  pages     = {15361--15371},
  year      = {2023}
}

@inproceedings{Han2024SEP,
  title     = {Semantic entropy probes: Robust and cheap hallucination detection in large language models},
  author    = {Han, J. and Kossen, J. and Razzak, M. and Schut, L. and Malik, S. and Gal, Y.},
  booktitle = {ICML Workshop on Foundation Models in the Wild},
  year      = {2024}
}

@techreport{Park2025TSV,
  title       = {Steer LLM latents for hallucination detection},
  author      = {Park, S. and Du, X. and Yeh, M.-H. and Wang, H. and Li, S.},
  institution = {ICML Poster},
  year        = {2025}
}

@inproceedings{Sun2024CrossCheckGPT,
  title     = {CrossCheckGPT: Universal hallucination ranking for multimodal foundation models},
  author    = {Sun, G. and Manakul, P. and Liusie, A. and Pipatanakul, K. and Zhang, C. and Woodland, P. and Gales, M.},
  booktitle = {NeurIPS Workshop on Next Gen Multimodal Models},
  year      = {2024}
}

@misc{binkowski2025hallucinationdetectionllmsusing,
      title={Hallucination Detection in LLMs Using Spectral Features of Attention Maps}, 
      author={Jakub Binkowski and Denis Janiak and Albert Sawczyn and Bogdan Gabrys and Tomasz Kajdanowicz},
      year={2025},
      eprint={2502.17598},
      archivePrefix={arXiv},
      primaryClass={cs.LG},
      url={https://arxiv.org/abs/2502.17598}, 
}

@inproceedings{Rawte2023Troubling,
  title       = {The Troubling Emergence of Hallucination in Large Language Models: An Extensive Definition, Quantification, and Prescriptive Remediations},
  author      = {Rawte, Vipula and Chakraborty, Swagata and Pathak, Agnibh and Sarkar, Anubhav and Tonmoy, S.~M.~Towhidul Islam and Chadha, Aman and Sheth, Amit and Das, Amitava},
  booktitle   = {Proceedings of the 2023 Conference on Empirical Methods in Natural Language Processing},
  year        = {2023},
  pages       = {2541--2573},
  address     = {Singapore},
  publisher   = {Association for Computational Linguistics},
  institution = {AI Institute, University of South Carolina}
}

@article{Aronszajn1950RKHS,
  author    = {Aronszajn, Nachman},
  title     = {Theory of Reproducing Kernels},
  journal   = {Transactions of the American Mathematical Society},
  volume    = {68},
  number    = {3},
  pages     = {337--404},
  year      = {1950},
  publisher = {American Mathematical Society},
  doi       = {10.2307/1990404}
}

@article{Mercer1909DiffusionKernel,
  author    = {Mercer, James},
  title     = {Functions of positive and negative type, and their connection with the theory of integral equations},
  journal   = {Philosophical Transactions of the Royal Society A},
  volume    = {209},
  pages     = {415--446},
  year      = {1909}
}

@article{Feng2025GraphEval,
  title     = {GraphEval: A Lightweight Graph-Based LLM Framework for Idea Evaluation},
  author    = {Feng, Tao and Sun, Yihang and You, Jiaxuan},
  journal   = {arXiv preprint arXiv:2503.12600},
  year      = {2025}
}

@article{Lee2024ZeroShot,
  title     = {LLM Hallucination Reasoning with Zero-shot Knowledge Test},
  author    = {Lee, Seongmin and Hsu, Hsiang and Chen, Chun‑Fu},
  journal   = {arXiv preprint arXiv:2411.09689},
  year      = {2024}
}

@inproceedings{Weng2024External,
  title     = {SelfCheckGPT: Zero-Resource Black-Box Hallucination Detection for Generative Large Language Models},
  author    = {Manakul, Potsawee and Liusie, Adian and Gales, Mark J. F.},
  booktitle = {Proceedings of EMNLP},
  year      = {2023}
}

@article{Sawczyn2025FactSelfCheck,
  title     = {FactSelfCheck: Fact-Level Black-Box Hallucination Detection for LLMs},
  author    = {Sawczyn, Albert and Binkowski, Jakub and Janiak, Denis and Gabrys, Bogdan and Kajdanowicz, Tomasz},
  journal   = {arXiv preprint arXiv:2503.17229},
  year      = {2025}
}

@article{Binkowski2025LapEigvals,
  title     = {Spectral Characterization of Hallucination in Large Language Models},
  author    = {Binkowski, Jakub and Janiak, Denis and Sawczyn, Albert and Gabrys, Bogdan and Kajdanowicz, Tomasz},
  journal   = {arXiv preprint arXiv:2502.17598},
  year      = {2025}
}

@article{LeMerrer2024Graph,
  title     = {LLMs hallucinate graphs too: A structural perspective},
  author    = {Le Merrer, Eric and Trédan, Gilles},
  journal   = {arXiv preprint arXiv:2409.00159},
  year      = {2024}
}

@book{Chung1997SpectralGraph,
  author    = {Fan R. K. Chung},
  title     = {Spectral Graph Theory},
  year      = {1997},
  publisher = {American Mathematical Society},
  series    = {CBMS Regional Conference Series in Mathematics},
  volume    = {92},
  address   = {Providence, RI},
  doi       = {10.1090/cbms/092},
  isbn      = {978-0-8218-0315-8}
}

@book{Tao2011Measure,
  author    = {Tao, Terence},
  title     = {An Introduction to Measure Theory},
  series    = {Graduate Studies in Mathematics},
  volume    = {126},
  publisher = {American Mathematical Society},
  year      = {2011},
  note      = {Covers sigma‑algebras, outer measures, completeness, and constructions of measure},
}

@book{Bartle1995Measure,
  author    = {Bartle, Robert G.},
  title     = {The Elements of Integration and Lebesgue Measure},
  edition   = {Corrected reprint of the 1st ed. (1966)},
  publisher = {Wiley‑Interscience},
  year      = {1995},
  note      = {Introduction emphasizing Lebesgue measure on Rn with clarity and examples},
}

@article{Zhou2006Hypergraph,
  title={Learning with Hypergraphs: Clustering, Classification, and Embedding},
  author={Zhou, Dengyong and Huang, Jiayuan and Sch{\"o}lkopf, Bernhard},
  journal={Advances in Neural Information Processing Systems},
  volume={19},
  pages={1601--1608},
  year={2006},
  publisher={MIT Press}
}

@inproceedings{Nystrom2001Kernel,
  author    = {Christopher K. I. Williams and Matthias Seeger},
  title     = {Using the Nystr{\"o}m Method to Speed Up Kernel Machines},
  booktitle = {Advances in Neural Information Processing Systems (NeurIPS)},
  year      = {2001}
}

@inproceedings{RFF2007Kernel,
  author    = {Ali Rahimi and Benjamin Recht},
  title     = {Random Features for Large-Scale Kernel Machines},
  booktitle = {Advances in Neural Information Processing Systems (NeurIPS)},
  year      = {2007}
}

@book{DiestelUhl1977,
  author    = {Joseph Diestel and John J. Uhl, Jr.},
  title     = {Vector Measures},
  series    = {Mathematical Surveys and Monographs},
  volume    = {15},
  publisher = {American Mathematical Society},
  year      = {1977}
}

@inproceedings{NgJordanWeiss2002,
  author={Andrew Y. Ng and Michael I. Jordan and Yair Weiss},
  title={On Spectral Clustering: Analysis and an Algorithm},
  booktitle={NeurIPS},
  year={2002}
}

@article{CoifmanLafon2006,
  author={R. R. Coifman and S. Lafon},
  title={Diffusion Maps},
  journal={Applied and Computational Harmonic Analysis},
  year={2006}
}

@inproceedings{ZelnikManorPerona2004,
  author={Lihi Zelnik-Manor and Pietro Perona},
  title={Self-Tuning Spectral Clustering},
  booktitle={NeurIPS},
  year={2004}
}

@inproceedings{HintonVinyalsDean2015,
  author={Geoffrey Hinton and Oriol Vinyals and Jeff Dean},
  title={Distilling the Knowledge in a Neural Network},
  booktitle={NeurIPS Deep Learning Workshop},
  year={2015}
}

@book{HornJohnson2013MatrixAnalysis,
  author    = {Roger A. Horn and Charles R. Johnson},
  title     = {Matrix Analysis},
  edition   = {2},
  publisher = {Cambridge University Press},
  address   = {Cambridge},
  year      = {2013},
  isbn      = {978-0-521-54823-6}
}

@article{Vempala2025Why,
  title   = {Why Language Models Hallucinate},
  author  = {Kalai, Adam Tauman and Nachum, Ofir and Vempala, Santosh S. and Zhang, Edwin},
  journal = {arXiv preprint arXiv:2509.04664},
  year    = {2025},
  doi     = {10.48550/arXiv.2509.04664},
  url     = {https://arxiv.org/abs/2509.04664}
}

@inproceedings{niu2024ragtruth,
  title={RAGTruth: A Hallucination Corpus for Developing More Truthful Systems},
  author={Niu, Chenxi and others},
  booktitle={ACL},
  year={2024},
  url={https://aclanthology.org/2024.acl-long.585/}
}

@inproceedings{li2023halueval,
  title={HaluEval: A Large-Scale Hallucination Evaluation Benchmark for LLMs},
  author={Li, Jiaan and others},
  booktitle={EMNLP},
  year={2023},
  url={https://aclanthology.org/2023.emnlp-main.397/}
}

@inproceedings{rohrbach2018chair,
  title={Object Hallucination in Image Captioning},
  author={Rohrbach, Anna and others},
  booktitle={EMNLP},
  year={2018},
  url={https://arxiv.org/abs/1809.02156}
}

@misc{Templeton2024ScalingMonosemanticity,
  title        = {Scaling Monosemanticity: Extracting Interpretable Features from Claude 3 Sonnet},
  author       = {Templeton, Adly and Conerly, Tom and Marcus, Jonathan and Lindsey, Jack and Bricken, Trenton and Chen, Brian and Pearce, Adam and Citro, Craig and Ameisen, Emmanuel and Jones, Andy and Cunningham, Hoagy and Turner, Nicholas L. and McDougall, Callum and MacDiarmid, Monte and Freeman, C. Daniel and Sumers, Theodore R. and Rees, Edward and Batson, Joshua and Jermyn, Adam and Carter, Shan and Olah, Chris and Henighan, Tom},
  howpublished = {Transformer Circuits Thread},
  year         = {2024},
  url          = {https://transformer-circuits.pub/2024/scaling-monosemanticity/},
  note         = {Anthropic Interpretability Team}
}

@article{Gerner2025ORION,
  title   = {ORION Grounded in Context: Retrieval-Based Method for Hallucination Detection},
  author  = {Gerner, Assaf and Madvil, Netta and Barak, Nadav and Zaikman, Alex and Liberman, Jonatan and Hamra, Liron and Brazilay, Rotem and Tsadok, Shay and Friedman, Yaron and Harow, Neal and Bressler, Noam and Chorev, Shir and Tannor, Philip},
  journal = {arXiv preprint arXiv:2504.15771},
  year    = {2025},
  doi     = {10.48550/arXiv.2504.15771},
  url     = {https://arxiv.org/abs/2504.15771}
}

@inproceedings{radford2021clipPMLR,
  title={Learning transferable visual models from natural language supervision},
  author={Radford, Alec and others},
  booktitle={Proceedings of the 38th International Conference on Machine Learning},
  year={2021},
  url={https://proceedings.mlr.press/v139/radford21a.html}
}

@article{radford2021clip,
  title={Learning Transferable Visual Models From Natural Language Supervision},
  author={Radford, Alec and others},
  journal={arXiv:2103.00020},
  year={2021},
  url={https://arxiv.org/abs/2103.00020}
}

@inproceedings{vaswani2017attention,
  title={Attention Is All You Need},
  author={Vaswani, Ashish and others},
  booktitle={NeurIPS},
  year={2017},
  url={https://arxiv.org/abs/1706.03762}
}

@inproceedings{sennrich2015neural,
  title={Neural Machine Translation of Rare Words with Subword Units},
  author={Sennrich, Rico and Haddow, Barry and Birch, Alexandra},
  booktitle={ACL},
  year={2016},
  url={https://arxiv.org/abs/1508.07909}
}

@article{kudo2018sentencepiece,
  title={SentencePiece: A simple and language independent subword tokenizer and detokenizer for Neural Text Processing},
  author={Kudo, Taku and Richardson, John},
  journal={arXiv:1808.06226},
  year={2018},
  url={https://arxiv.org/abs/1808.06226}
}

@article{semanticsAngle2025,
  title={Semantics at an Angle: When Cosine Similarity Works Until It Doesn't},
  author={Zhang, Y. and others},
  journal={arXiv:2504.16318},
  year={2025},
  url={https://arxiv.org/abs/2504.16318}
}

\clearpage
\section*{Appendix}

\appendix
In this section, we provide elaboration on footnotes, extended derivations of our Theorems, some supplementary mathematical results, and details of experimental validation.

\section{Technical Notes and Extended Comments}
\label{appendix:notes}
Here, we provide elaboration on footnotes/ some extended explanations.

\subsection{A Clear Mathematical Roadmap}
\label{fn:roadmap}
Formally, Section~\ref{sec:prel} fixes the observable objects (prompts, outputs, model-induced $f_p$) and distinguishes the ideal manifold $\mathcal{K}_{\mathrm{g}}$ from the finite admissible sets $\mathcal{K}(p)$ used in practice.
Section~\ref{subsec:sem_dist} defines the smoothing operator $T_h$, the semantic log-contrast $\ell_{\varepsilon,h}(x;\mathcal{K},\mathcal{X})$ and distortion $d_{\mathrm{sem}}^{(\varepsilon,h)}(x;\mathcal{K},\mathcal{X})$, followed by proving their reference-free separation and, in Section~\ref{subsec:multi_hall_energy}, energy-based hallucination prescription.
Section~\ref{sec:main} builds the multimodal Laplacian $\mathcal{L}_{\mathcal{T}_t}^{\mathrm{multi}}$, represents the hallucination energy $\Delta\mathcal{E}_\tau(x,p)$ as a block quadratic form with CF spectral bounds, and Section~\ref{sec:experiments} instantiates $\mathcal{K}(p)$ with the hyperparameters $(\varepsilon,h,\tau,k)$ on COCO, VQAv2, and AudioCaps to evaluate the hallucination score $\mathbbm{h}(x,p)$ across multiple multimodal backbones.

\subsection{Measurable Sets and \texorpdfstring{$\sigma$}{sigma}-algebra}
\label{fn:measurable_set}
Any mathematical set can be equipped with a \(\sigma\)-algebra to form a measurable space, say, \(\mathcal{X}\). The common choices are: (i) the power set if \(\mathcal{X}\) is countable/ finite, (ii) the Borel \(\sigma\)-algebra if \(\mathcal{X}\) is a topological space (e.g., continuous embeddings), (iii) Product \(\sigma\)-algebra if \(\mathcal{X}\) is a product of spaces (e.g., sequences of tokens or multimodal outputs). For a measure space $(\mathcal{X},\mathcal{F}_{\mathcal{X}},\mu)$ and $1\le p<\infty$ (where $p$ is the integrability exponent, not to be clashed with ``prompts"), the space $L^{p}(\mathcal{X},\mathcal{F}_{\mathcal{X}},\mu)$ consists of (equivalence classes of) $\mu$-measurable $q:\mathcal{X}\to\mathbb{R}$ with $\int_{\mathcal{X}} |q(x)|^{p}\,d\mu(x)<\infty$; the norm is $\|q\|_{L^{p}}:=\big(\int |q|^{p} d\mu\big)^{1/p}$. For $p=\infty$, $L^{\infty}$ denotes essentially bounded functions with norm $\|q\|_{L^{\infty}}:=\operatorname{ess\,sup}_{x}|q(x)|$. In particular, $L^{1}$ denotes integrable functions ($p=1$).

\subsection{Justification for Assumption~\ref{assmp:general_dist}}
\label{fn:general_dist}
In a deployed multimodal LLM system, the symbol $x\in\mathcal{X}$ represents a
\emph{full generated object} rather than a single token---for example, an
entire caption for an image, a complete answer sentence in VQA, a transcript
segment in audio captioning, or a joint multimodal output.
For a fixed prompt $p$ (e.g., image + question + optional context), the model
induces a conditional distribution $f_p(x)$ over such outputs. In practice,
this distribution is implemented by the standard auto-regressive decoding
mechanism: at each step the model exposes a softmax over tokens, and
full sequences are obtained by composing these token-level probabilities.
Formally, Assumption~\ref{assmp:general_dist} simply encodes the requirement
that this induced output distribution is a \emph{proper probability
distribution} (i.e., integrable and normalized with respect to the base
measure $\mu$) and that it lives in the same reproducing kernel Hilbert
space $\mathcal{H}$ used for our spectral analysis.

From the systems perspective, the condition $f_p \in L^{1}(\mathcal{X},
\mathcal{F}_{\mathcal{X}}, \mu)$ with $\int f_p\,d\mu=1$ is the continuous
analog of the familiar ``probabilities sum to $1$'' constraint over a
discrete vocabulary. The additional requirement $f_p\in\mathcal{H}$ reflects
the fact that, in modern MLLM pipelines, every output $x$ is mapped to an
embedding (e.g., CLIP text embedding, BLIP image embedding, Whisper audio
embedding) and similarity, kernels, and graph Laplacians are all defined in
this embedding space.
Empirically, all our computations use a finite candidate set of outputs
(e.g., beams or sampled generations) together with their model probabilities,
which yields a finite-dimensional approximation to the idealized $f_p$ in
Assumption~\ref{assmp:general_dist}.

\subsection{Justification for Assumption~\ref{assmp:ground_dist}}
\label{fn:ground_dist}
The absence of an exact analytical expression of \(g(x)\) limits the direct interpretability, but provides a flexible framework for comparing the model outputs to the ground-truth via the functional and spectral metrics. This is used only as a theoretical reference for calibration/fidelity analyses representing the (idealized) generative distribution of facts/outputs as seen in~\cite{Vempala2024Calibrated}.

Assumption~\ref{assmp:ground_dist} formalizes the idea that there exists a
\emph{data-generating process} for correct outputs which is conceptually
separate from the model. The distribution $g$ supported on the ground-truth
manifold $\mathcal{K}_{\mathrm{g}}$ captures how humans (or the real world)
would respond to a given prompt: for example, how annotators describe an
image in COCO, how radiologists report a chest X-ray, or how crowd workers
answer a VQA question. When we collect a dataset, the reference captions or
answers are finite samples drawn from this ideal distribution $g$, not from
the model-induced $f_p$.

The manifold $\mathcal{K}_{\mathrm{g}}$ can be thought of as the set of
\emph{truly correct, semantically faithful outputs} for a given context.
Fluent but incorrect or ungrounded generations produced by the model lie
outside $\mathcal{K}_{\mathrm{g}}$.
The statement that $g$ is ``independent of prompts in the generative sense''
means that, at the level of the true data-generating mechanism, changing the
model prompt format (e.g., rephrasing the question, adding system messages,
changing temperature) does not alter which outputs are factually correct.
By contrast, $f_p$ is explicitly prompt-dependent and reflects the model's
internal behaviour.

In practice we never observe $g$ directly; we only see a finite collection of
human-labeled references and, in our framework, we operate with a model-side
distribution $f_p$ and an admissible set $\mathcal{K}$ built from such
references.
Assumption~\ref{assmp:ground_dist} therefore serves to separate the
\emph{semantic notion} of hallucination (distance from the true manifold
$\mathcal{K}_{\mathrm{g}}$) from the \emph{operational quantities} we can
estimate from a given MLLM and dataset.

\subsection{Definition of KL-divergence}
\label{fn:KL1}
For any two probability distributions $P_{1}(x)$ and $P_{2}(x)$, say defined over the same space $x \in \mathcal{X}$, the functional operator $D_{\text{KL}} \in \mathbb{R}_{\geq 0}$ refers to the KL divergence of $P_{2}(x)$ from the ``true" reference or actual distribution $P_{1}(x)$ as:
\[
   D_{\text{KL}}\left( P_{1}(x) \,\,\middle\|\,\, P_{2}(x) \right) = \sum_{x\in \mathcal{X}} P_{1}(x) \log \frac{P_{1}(x)}{P_{2}(x)}. 
\]
When \(x\) is a continuous random variable, \(\sum_{x\in \mathcal{X}}\) is evidently replaced by \(\int_{x=-\infty}^{\infty}\) with $P_{1}(x)$ \& $P_{2}(x)$ by respective probability densities. More generally, if $P_{1}$ \& $P_{2}$ are probability measures on a measurable space \(\mathcal{X}\), then 
\[
   D_{\text{KL}}\left( P_{1} \,\,\middle\|\,\, P_{2} \right) = \int_{x=-\infty}^{\infty} P_{1}(dx) \log \frac{P_{1}(dx)}{P_{2}(dx)}, 
\]
where \(\frac{P_{1}(dx)}{P_{2}(dx)}\) is the Radon–Nikodym derivative of \(P_{1}\) w.r.t \(P_{2}\).

\subsection{Absence of the ``ground-truth"}
\label{fn:absent_g}
In practice, we only observe: (i) a prompt-conditioned model distribution $f_p$ and (ii) a finite admissible set $\mathcal{K}$ built from reference captions / answers / human-curated candidates. The selector $\Pi_{\mathcal{K}}$ is a measurable nearest-neighbor map from any output to this finite set. Even without access to \(g\), one can (i) estimate \(\mathbb{P}_{f_p}(\mathcal K)\) from samples, (ii) compute per-instance distortions via the log–likelihood ratio, and (iii) aggregate these into empirical bounds and diagnostics. In multimodal settings, the same decomposition localizes contributions by modality and by interaction (intra/cross/joint), enabling targeted interventions—e.g., modality-specific calibration, cross-modal consistency constraints, or temperature schedules—and straightforward experimental verification via ablations that track how \(\mathbb{P}_{f_p}(\mathcal K)\) and induced distortions respond to each mitigation.

\paragraph{Practical role of $\mathcal{K}_{\mathrm{g}}$ vs.\ $\mathcal{K}$.}
The ground-truth manifold $\mathcal{K}_{\mathrm{g}}$ and distribution $g$
are introduced only as ``ideal" semantic objects: $\mathcal{K}_{\mathrm{g}}$
collects all truly correct outputs that the real world (or human annotators)
could generate for a given context, and $g$ is the associated data-generating
distribution. These are not used directly in our algorithms. In practice,
all computations are carried out on an admissible set $\mathcal{K}$
built from the evaluation data: for each prompt $p$, we construct
$\mathcal{K}(p)\subset\mathcal{K}$ from the normalized reference captions
or answers provided by the benchmark (cf.\ Appendix~\ref{appendix:k_construction}),
and this $\mathcal{K}(p)$ is kept fixed across all models evaluated on that
benchmark. Thus, $\mathcal{K}_{\mathrm{g}}$ serves to formalize the notion
of ``true” grounded outputs, while $\mathcal{K}$ is the concrete, dataset-driven
approximation that our hallucination scores and bounds actually depend on.

\subsection{What is Observable, Assumed, and Estimated in Practical Scenarios}
\label{fn:obs_prac}
In view of the practical MLLM pipelines, we separate the ingredients into three categories: (i) quantities that are directly \emph{observable} from a deployed model and dataset, (ii) semantic objects that are \emph{assumed} at the theoretical level, and (iii) quantities that are \emph{estimated} from the observables via finite approximations (graphs, spectra, and energies).

\paragraph{Observable quantities.}
In a practical MLLM setting (e.g., image captioning, VQA, audio captioning),
the following objects are directly available:

\begin{itemize}[leftmargin=1.4em]
    \item \textbf{Prompts and contexts.}
    A prompt $p \in \mathcal{P}$ collects the conditioning signals presented
    to the model, such as an input image, an audio clip, and/or a question
    in natural language. These prompts are given by the dataset or user and
    are fully observable.

    \item \textbf{Model outputs and token-level probabilities.}
    For each prompt $p$, the model produces output sequences $x \in \mathcal{X}$
    (captions, answers, transcripts) via standard auto-regressive decoding.
    At each decoding step, token-level logits or probabilities are exposed
    by the model, and we can sample or beam-search from these to obtain a
    finite candidate set together with their probabilities
    or log-probabilities. These are the operational approximation to the
    conditional distribution $f_p$ in Assumption~\ref{assmp:general_dist}.

    \item \textbf{Encoder/decoder embeddings.}
    For each modality $M \in \mathcal{M}$, the model provides encoder and/or
    decoder embeddings:
    \begin{itemize}
        \item output embeddings $\Phi_M(x^{(M)})$ for the generated content
        in modality $M$ (e.g., CLIP/BLIP text embeddings, ViT image embeddings,
        Whisper-style audio embeddings), consistent with
        Assumption~\ref{assmp:feature_map};
        \item prompt embeddings $\Psi_M(p)$ for the conditioning signal
        in modality $M$ (e.g., question text embeddings, visual embeddings of
        the input image, audio-context embeddings), consistent with
        Assumption~\ref{assmp:prompt_embedding}.
    \end{itemize}
    These embeddings are exactly the vectors used in practice for retrieval,
    similarity search, and contrastive training.

    \item \textbf{Finite human-labeled references.}
    Datasets such as COCO, VQAv2, or AudioCaps provide a finite collection of
    human-annotated captions or answers per input. These references are
    observed samples from the (ideal) ground-truth distribution and are used
    to construct an admissible set $ \mathcal{K}$
    of plausible outputs (e.g., normalized reference captions/answers).

    \item \textbf{Graph structure over embeddings.}
    From the embeddings above, we explicitly construct a finite graph
    (e.g., $k$-nearest-neighbour graphs per modality and cross-modal bipartite
    graphs) and compute its Laplacian and spectra. The adjacency matrix,
    Laplacian, and eigenvalues/eigenvectors are entirely computed from
    observable embeddings and do not rely on access to any unobserved
    semantic object.
\end{itemize}

\paragraph{Assumed semantic objects.}
At the theoretical level, we introduce additional objects that \emph{model}
the data-generating process and its ideal behaviour, but are not themselves
observed in a finite deployment:

\begin{itemize}[leftmargin=1.4em]
    \item \textbf{Ideal conditional distributions $f_p$.}
    For each prompt $p$, Assumption~\ref{assmp:general_dist} postulates a
    properly normalized conditional distribution $f_p$ over the full output
    space $\mathcal{X}$, taking values in an RKHS $\mathcal{H}$. This is the
    continuum analogue of the token-wise softmax distributions produced by
    a real MLLM; in practice, we only ever access finite-dimensional
    approximations based on model logits and a finite candidate set.

    \item \textbf{Ground-truth generative distribution $g$ and manifold
    $\mathcal{K}_{\mathrm{g}}$.}
    Assumption~\ref{assmp:ground_dist} assumes the existence of a ``gold''
    distribution $g$ on a ground-truth manifold $\mathcal{K}_{\mathrm{g}}$,
    which captures how correct outputs are generated in the real world
    (e.g., how humans describe images or answer questions). This object is
    not observable directly; instead, the finite human-labeled references
    in a dataset are treated as i.i.d.\ samples from $g$ and are used to
    construct the admissible set $\mathcal{K}_{\mathrm{g}}$.

    \item \textbf{Kernels and bounded feature maps.}
    Assumptions~\ref{assmp:feature_map} and~\ref{assmp:prompt_embedding}
    posit that there exist positive definite kernels and associated feature
    maps $\Phi_M$ and $\Psi_M$ with bounded norm and mild regularity
    properties. In practice, these correspond to the normalized encoder and
    decoder embeddings implemented by current architectures; the assumptions
    abstract the empirical fact that such embeddings are finite-dimensional,
    norm-controlled, and Lipschitz in the inputs.

    \item \textbf{Energy-based/Boltzmann parametrization.}
    Assumption~\ref{assmp:output_boltzman} views $f_p$ as arising from a
    Boltzmann law with energy $\mathcal{E}(x,p;\mathcal{T}_t)$ and partition
    function $Z(p,\mathcal{T}_t)$. This matches the softmax-based decoding
    used in modern LLMs and provides the bridge between standard logits and
    the spectral energy functional introduced in our framework.
\end{itemize}

These semantic objects are used to define what we mean by hallucination
(e.g., distance from $\mathcal{K}$ where $x\in \mathcal{X}\setminus \mathcal{K}$) and to derive theoretical
bounds, but the \emph{algorithms} we propose never require direct access to
$g$ or to the full continuum $f_p$.

\paragraph{Estimated quantities.}
The quantities that we \emph{estimate} from the observable data and model
outputs are:

\begin{itemize}[leftmargin=1.4em]
    \item \textbf{Empirical output distributions.}
    From a finite candidate set $\{x_k\}_{k=1}^\mathcal{K}$ and their model probabilities
    or log-probabilities, we form an empirical approximation to $f_p$ (e.g., by normalizing exponentiated scores or
    logits). This is the operational distribution used in all numerical
    computations.

    \item \textbf{Admissible set and selector.}
    From the human-labeled references (after normalization), we construct an
    admissible set $\mathcal{K}_{\mathrm{g}}\subset \mathcal{K}$ and define a
    measurable selector $\Pi_{\mathcal{K}}$ that maps each output $x$ to its
    nearest admissible element. Both $\mathcal{K}_{\mathrm{g}}$ and
    $\Pi_{\mathcal{K}}$ are computed from finite data and embeddings.

    \item \textbf{Graph Laplacians and spectra.}
    Using observable embeddings, we build modality-specific and cross-modal
    graphs, compute their Laplacians, and estimate eigenvalues/eigenvectors.
    These spectra enter our hallucination energy functional and the spectral
    bounds, but are entirely determined by the finite graph constructed from
    the model's embeddings.

    \item \textbf{Hallucination scores and bounds.}
    Finally, we compute the hallucination energy, semantic distortion, and
    associated Good--Turing and spectral bounds from the empirical
    $f_p$, the admissible set $\mathcal{K}$, and the
    graph spectra. These quantities are the scores we actually use for
    ranking, calibration, and analysis in our experiments.
\end{itemize}

\paragraph{Connection to plausible practical scenarios.}
In a concrete deployment (for example, an image-captioning system built from
CLIP/BLIP encoders and a text decoder), a typical workflow is:

\begin{enumerate}[leftmargin=2em, label=\roman*]
    \item For each input image and prompt, obtain a finite set of candidate
    captions and their probabilities from the MLLM (observable
    $\{x_k\}_{k=1}^\mathcal{K}$ and $f_p$).
    \item Extract encoder embeddings for the image, prompt, and candidate
    captions, yielding $\Phi_M(x^{(M)})$ and $\Psi_M(p)$ in each modality
    (observable and consistent with our bounded feature-map assumptions).
    \item Construct a $k$-nearest-neighbour graph over these embeddings,
    compute the associated Laplacian and its eigen-decomposition (estimated
    graph spectra).
    \item Use the finite reference captions in the dataset to define an
    admissible set $\mathcal{K}$ and a selector $\Pi_{\mathcal{K}}$,
    and compute the proposed hallucination energy and semantic distortion
    scores for each candidate caption (estimated scores and bounds).
\end{enumerate}

\subsection{Advantage of Continuous Hallucination}
\label{fn:cont_hall}
Our framework produces a continuous hallucination score
$\mathbbm{h}(x,p)\in[0,\infty)$ for each output $x$ and prompt $p$, rather than a
binary hallucination/non-hallucination label, for several reasons.
\begin{itemize}[leftmargin=1.4em]
    \item First, a graded score makes it possible to \emph{rank} the candidate generations
by \emph{degree} of semantic distortion instead of forcing a hard decision at
a single threshold; in practice, one often wants to pick the least
hallucinated candidate among several beams, prompts, or retrieval
configurations, which is only meaningful with a continuous risk scale.
\item Second, under our smoothing and boundedness assumptions,
$\mathbbm{h}(x,p)$ is differentiable almost everywhere with respect to the
model-induced distribution $f_p$ and the associated embeddings, which makes
it suitable as an auxiliary loss or regularizer in calibration and mitigation
schemes (e.g., fine-tuning with a hallucination penalty, or learning
retrieval/prompting policies). A discrete $0/1$ label would require surrogate
losses and cannot provide a direct, properly scaled penalty in the same
RKHS/energy geometry. 
\item Third, a continuous score allows us to track how
hallucination evolves as we vary controllable knobs such as temperature,
diffusion time $\tau$, or retrieval policies, and to draw reliability curves
and control profiles that go well beyond what a single binary label can
capture.
\end{itemize}
Finally, the continuous score strictly ``contains" the binary setting as a
special case: any threshold $\psi\ge 0$ induces a classifier
\(\mathbf{1}_{\{\mathbbm{h}(x,p)>\psi\}}\) whenever a hard decision is required,
whereas the reverse mapping (from a binary label back to a calibrated,
spectrally informed energy) is in general impossible. In this sense,
$\mathbbm{h}(x,p)$ is a strictly more informative object: it supports risk ranking,
differentiable regularization, and analysis of control knobs, while still
admitting thresholding to recover classical detection metrics whenever
needed.

\textit{Note}: The hallucination score $\mathbbm{h}(x,p)$ is not a new quantity, but an operational re-branding of the semantic distortion $d_{\mathrm{sem}}^{(\varepsilon,h)}(x;\mathcal{K},\mathcal{X})$ up to some dataset-dependent scaling and is monotone in the energy landscape. The implementation can be found here: {\small \href{https://github.com/supratik-sarkar/quantifying-hallucinations/blob/main/src/theory/score_semantic.py}{\texttt{src/theory/score\_semantic.py}}}.

\subsection{Modalities in Expanded Forms}
\label{fn:modal_expand}
In multi-modal settings, the LLM outputs involve textual ($T$), visual ($V$), audio ($A$) modalities and, for better understanding, Eq.~\eqref{eq:def7} can also be re-written as:
\begin{equation}
    \label{eq:fn_modal_expand}
    \begin{split}
        &\mathcal{X}:\mathcal{X}_{T} \times \mathcal{X}_{V} \times \mathcal{X}_{A}, \qquad x = (x^{(T)}, x^{(V)}, x^{(A)}), \qquad \mathcal{H}:= \mathcal{H}_{T} \otimes \mathcal{H}_{V} \otimes \mathcal{H}_{A}, \\
        &K (x_{1}, x_{2})= K_{T}\left(x_{1}^{(T)}, x_{2}^{(T)} \right) \cdot K_{V}\left(x_{1}^{(V)}, x_{2}^{(V)} \right) \cdot K_{A}\left(x_{1}^{(A)}, x_{2}^{(A)} \right), \\
        &\mathcal{P}:\mathcal{P}_{T} \times \mathcal{P}_{V} \times \mathcal{P}_{A}, \qquad p = (p^{(T)}, p^{(V)}, p^{(A)}).
    \end{split}
\end{equation}

\subsection{Justification for Assumption~\ref{assmp:hall_energy}}
\label{fn:hall_energy}
As noted in Eq.\eqref{eq:energy_full} in Section~\ref{subsec:multi_hall_energy}, three terms are:
(i) \(\mathcal{E}_M\) encodes the intra-modal contributions, (ii) \(\mathcal{E}_{MM^{'}} \) captures the pairwise cross-modal terms, while (iii) \(\mathcal{E}_{\mathcal{M}}\) being the joint contribution of all three modalities combined. For three modalities, (i) \& (ii) form an energy matrix of order \(3\) with diagonals \(\mathcal{E}_M\) and off-diagonals \(\mathcal{E}_{MM^{'}} \), while \(\mathcal{E}_{\mathcal{M}}\) is a single joint term. With \(>3\) modalities, \(\mathcal{E}_{\mathcal{M}}\) becomes a higher order tensor.  This structure not only reveals which modality interactions contribute the most to the semantic drift \(d_{\mathrm{sem}}(x; \mathcal{K}, \mathcal{X})\), also enables deriving tight spectral bounds on hallucination energy, which would be impossible under a monolithic energy formulation.

The decomposition in Eq.~\eqref{eq:energy_full} mirrors how modern MLLMs are architected. In practice, $x^{(M)}$ denotes the component of the
output in modality $M$ (e.g., text, image, audio), and each $x^{(M)}$ is
produced or conditioned on by a dedicated encoder/decoder block. Current
MLLMs (e.g., CLIP-like stacks, BLIP, LLaVA/Qwen-VL-style models) are built
from:
(i) modality-specific encoders that produce separate embeddings for each
input stream, and
(ii) fusion layers and attention mechanisms that tie these modalities
together before decoding.

Within this architecture, the term $\mathcal{E}_M(x^{(M)},p,\cdot)$ captures
how \emph{internally consistent} the output is within a single modality.
For instance, a caption that contradicts itself (``a red car that is blue'')
would incur high text-only energy, even before looking at the image.
The cross-modal terms $\mathcal{E}_{MM'}(x^{(M)},x^{(M')},p,\cdot)$ measure
the alignment between two modalities, such as whether a generated description
matches the visual content of an image or the acoustic content of an audio
clip. A caption that says ``a dog running on the beach'' when the image
contains a cat on snow would produce a large image--text cross-modal
contribution. Finally, $\mathcal{E}_{\mathcal{M}}(x,p,\cdot)$ aggregates
global interactions that only emerge when all modalities are considered
together (e.g., video + audio + text in a complex scene).

Operationally, all three components are computed from encoder embeddings and
their associated graph Laplacians: modality-specific graphs yield the
$\mathcal{E}_M$ terms; cross-modal edges (e.g., between image and text
nodes) yield $\mathcal{E}_{MM'}$; and the joint multimodal graph accounts
for $\mathcal{E}_{\mathcal{M}}$.
Thus, Assumption~\ref{assmp:hall_energy}
simply makes explicit a structure
that is already implicit in standard MLLM pipelines and enables us to
localize hallucination contributions to specific modalities or cross-modal
interactions.

\subsection{Justification for Assumption~\ref{assmp:feature_map} }
\label{fn:feature_map}
RKHS theory is rooted in Hilbert space theory (inner product spaces of functions) and uses results like the Moore–Aronszajn theorem~\cite{Aronszajn1950RKHS}). In Measure Theory \& Probability, when kernels are used for distributions (e.g., kernel mean embeddings), the feature map connects to integration theory and probabilistic representations. In Machine Learning, the feature maps are used in kernel methods (in practice: SVMs, Gaussian processes, etc.), making this concept central to the theory of statistical learning (e.g., RKHS regularization). Let \(\Phi_M\) be a feature map (i.e., identified as a function) such that 
\begin{equation}
    \label{eq:fn_feature_map}
    K_M\left(x_1^{(M)}, x_2^{(M)}\right) = \left\langle \Phi_M(x_1^{(M)}), \,\Phi_M(x_2^{(M)}) \right\rangle_{\mathcal{H}_M},
\end{equation}
embedding raw objects, say outputs \((x_{1}, x_{2})\), into the modality-specific RKHS \(\mathcal{H}_M\). Instead of just outputs, it can very well mix with the inputs as well meaning: \((x, p)\). Eq.~\eqref{eq:fn_feature_map} makes this RKHS \(\mathcal{H}_M\) unique up to isometry according to the Moore–Aronszajn theorem. 

In classical ML, we use ``features” to describe the structured attributes of the input data (e.g., pixel values, word embeddings etc.). In the theory of kernels, the feature maps are abstract (possibly infinite), but they play the same role: they represent the data in a space where linear methods (dot products) can capture nonlinear similarities. Thus, \(\Phi_M\) allows nonlinear learning algorithms to operate in a high‑dimensional feature space of an MLLM via the kernel trick.

In practice, implementations typically compute \(K_M\) directly—or via finite approximations like Nyström~\cite{Nystrom2001Kernel} or Random Fourier Features~\cite{RFF2007Kernel} - so \(\Phi_M\) need not be explicitly materialized.

For Assumption~\ref{assmp:feature_map}: this mirrors common practice---modern encoders (CLIP, BERT-style, vision backbones) apply normalization or LayerNorm, and we $L2$-normalize final vectors so magnitudes stay well-behaved. Bounded features make cosine/similarity scores comparable across modalities, prevent numerical outliers, and keep spectral/energy measures meaningful. In deployment, this is easy to enforce (normalize outputs) and verify (log histograms/max norms and alert on drift). Production stacks (vector DBs, ANN indices, faiss/scann) expect bounded vectors so cosine similarity behaves predictably and distances are comparable across batches and time.
\begin{itemize}[leftmargin=1.4em]
    \item \textbf{Why we need this:} For numerical stability to prevents overflow/NaNs and keep the dot products/similarities in a usable range during training and evaluation and comparability across modalities to handle text \& image embeddings simultaneously.
    \item \textbf{Real-world example:} Modern vision–language encoders (e.g., CLIP) explicitly $L2$-normalize image/text embeddings and use cosine similarity with temperature-scaled softmax, so representation norms are controlled by design; this makes cross-modal scoring numerically stable and comparable out of the box~\cite{radford2021clip,radford2021clipPMLR,semanticsAngle2025}. 
\end{itemize}

In real systems, the feature map $\Phi_M:\mathcal{X}_M\to\mathcal{H}_M$ is
nothing more than the \emph{embedding function} for outputs in modality $M$.
For example, $\Phi_{\mathrm{text}}(x^{(\mathrm{text})})$ can be the pooled
hidden state of a text decoder, while $\Phi_{\mathrm{img}}(x^{(\mathrm{img})})$
is the CLIP/ViT image embedding, and $\Phi_{\mathrm{audio}}$ corresponds to
a Whisper- or HuBERT-style audio representation. These are precisely the
vectors one uses in practice for similarity search, retrieval, or contrastive
training.

The boundedness condition
$\sup_{x^{(M)}\in\mathcal{X}_M}\|\Phi_M(x^{(M)})\|_{\mathcal{H}_M}<\infty$
formalizes a property that is already enforced in modern architectures.
Embeddings are finite-dimensional, frequently $L2$-normalized, and are
subject to weight decay, LayerNorm, and (in many implementations) explicit
norm clipping. This ensures that embedding norms cannot diverge and that
kernel values $K_M(x_1,x_2)=\langle\Phi_M(x_1),\Phi_M(x_2)\rangle_{\mathcal{H}_M}$
remain bounded.

For our framework, this boundedness is crucial to guarantee that the
kernel-induced energies and the associated spectral quantities are finite
and numerically well-behaved. In other words, Assumption~\ref{assmp:feature_map}
is not an artificial restriction but a mathematical abstraction of standard
engineering practice: using normalized, well-conditioned embeddings for each
modality when scoring or comparing model outputs.

\subsection{Justification for Assumption~\ref{assmp:prompt_embedding} }
\label{fn:prompt_embedding}
For Assumption~\ref{assmp:prompt_embedding}: it is reasonable to assume that small prompt edits should not cause large representational jumps - matching real product needs for predictable UX, reproducible evaluation, and reduced prompt-sensitivity exploits. In practice, prompt encoders are compositions of linear layers + pointwise activations + norm layers; we also $L2$-normalize the final embedding. 
\begin{itemize}[leftmargin=1.4em]
    \item \textbf{Why we need this:} If ``Adding a comma” or ``Swapping a synonym” flips the model’s answer, the system feels brittle. Stability is essential for predictability and debuggability.
    \item \textbf{Real-world example:} Text prompts are tokenized into a finite vocabulary (BPE/WordPiece/SentencePiece), and the transformer encoder maps these tokens through a sequence of standard layers to probabilities via softmax, yielding well-defined distributions on a discrete space—hence measurability is immediate and commonplace~\cite{vaswani2017attention,sennrich2015neural,kudo2018sentencepiece}. Length caps, normalization, and regularization used in real systems keep prompt embeddings within reasonable ranges and make small paraphrases produce small representational changes, which is precisely the stability we assume. 
\end{itemize}

Here, the map $\Psi_M:\mathcal{P}\to\mathcal{H}_M$ represents the
\emph{embedding of the conditioning signal} (prompt) as seen from the
perspective of modality $M$. In a text-only LLM, $\Psi_{\mathrm{text}}(p)$
is the encoder representation of the prompt tokens; in an image-conditioned
captioning system, $\Psi_{\mathrm{img}}(p)$ corresponds to the visual encoder
representation of the input image while $\Psi_{\mathrm{text}}(p)$ captures
the question text; and in audio-visual QA, the prompt naturally decomposes
into audio, visual, and textual parts with corresponding embeddings
$\Psi_{\mathrm{audio}}, \Psi_{\mathrm{img}}, \Psi_{\mathrm{text}}$.

The boundedness condition
$\sup_{p\in\mathcal{P}}\|\Psi_M(p)\|_{\mathcal{H}_M}<\infty$ again matches
standard practice: prompt embeddings are finite-dimensional and are typically
normalized or stabilized via LayerNorm and regularization. The continuity
(or Lipschitz) requirement reflects the empirical observation that small
changes in the prompt (e.g., rephrasing a question, adding a short prefix)
lead to small changes in the encoder representations rather than arbitrarily
large jumps. This is enforced during training by the choice of activation
functions, gradient clipping, and regularization.

For our purposes, these properties ensure that the hallucination energy
$\mathcal{E}(x,p,\cdot)$ varies smoothly as the prompt changes and that the
spectral quantities and calibration bounds we derive remain stable under
realistic prompt perturbations. Assumption~\ref{assmp:prompt_embedding}
therefore abstracts the well-behaved nature of prompt encoders that is
already present in current MLLM pipelines.

\subsection{Justification for Assumption~\ref{assmp:output_boltzman}}
\label{fn:output_boltzman}
In practice, an MLLM scores a finite candidate set $C(x,p)$ (beam/nucleus/reranked hypotheses) via logits or similarity, so with counting measure and energy $\mathcal{E}=-\text{logit}$ (or a bounded margin), the induced softmax probability $\,\mathrm{prob.}(c\mid x,p;\mathcal{T}_t)\propto\exp(-\mathcal{E}(c)/\mathcal{T}_t)\,$ is exactly a Boltzmann distribution with finite partition function $Z=\sum_{c\in C}\exp(-\mathcal{E}(c)/\mathcal{T}_t)$—hence both operationally realistic and mathematically well-posed.

Assumption~\ref{assmp:output_boltzman} recasts the model's output distribution
$f_p(x)$ as an energy-based or Boltzmann distribution, which is fully
consistent with how modern LLMs implement softmax decoding. At the token
level, an auto-regressive model produces logits $z(p)$ and samples from
$\mathrm{softmax}(z)$, which is equivalent to drawing from
$\exp(-E_i)/\sum_j\exp(-E_j)$ with energies $E_i=-z_i$. We simply lift this
perspective from individual tokens to entire candidate outputs or latent
representations $x$, so that
\[
f_p(x) \;\propto\; \exp\big(-\mathcal{E}(x,p;\mathcal{T}_t)\big)
\]
with respect to a base measure $\mu$.

In practice, we work with a finite candidate set (e.g., beams or sampled
sequences) and obtain $f_p(x)$ by exponentiating and normalizing the relevant
scores or logits; the scalar $\mathcal{E}(x,p;\mathcal{T}_t)$ can be viewed
as an energy that combines model logits with our spectral corrections.
The schedule $\mathcal{T}_t$ plays the role of temperature or diffusion: it
includes standard temperature scaling used in decoding as well as our
spectral/graph-based smoothing, and thus corresponds to a control knob that
practitioners already tune (e.g., changing temperature or applying
calibration).

The finiteness of the partition function $Z(p,\mathcal{T}_t)$ is guaranteed
in operational pipelines for two reasons: (i) we always restrict attention to
a finite vocabulary or a finite candidate set of outputs, and (ii) logits and
energies are bounded in practice due to finite-precision arithmetic and
regularization. Consequently, Assumption~\ref{assmp:output_boltzman} does
not impose an additional burden on real MLLM systems; rather, it provides a
mathematically convenient way to analyze the same softmax-based scoring
mechanisms already used in deployment, through the lens of energy-based
models and spectral graph theory.

\subsection{An Example (image–caption pair)}
\label{fn:image_caption_example}
One can consider an MLLM generating a caption for an image. Let $\mathcal{X}$ be the space of all captions, with $\mathcal{K} \subseteq \mathcal{X}$ denoting those grounded in the image (e.g., ``A cat on a sofa”), while $f_p$ may also assign mass outside $\mathcal{K}$ to hallucinated captions (e.g., ``A dog playing with a ball”). The hallucination divergence $D_{\mathrm{KL}}\!\big(g \,\|\, f_p\big)$ quantifies this deviation. 

In this paper, as a part of our main theoretical contributions, we define a multimodal graph whose nodes are caption tokens $T$ and image patches $V$, with edge weights $W_{\mathcal{T}_t}(i,j)$ computed from the fixed embeddings and modulated by a time-varying temperature $\mathcal{T}_t$. From these weights, we will define the normalized multimodal Laplacian $\mathcal{L}_{\mathcal{T}_{t}}^{\text{multi}}$ associated with a spectral grounding energy as the quadratic form of $\mathcal{L}_{\mathcal{T}_{t}}^{\text{multi}}$ evaluated on the residual feature field induced by our energy prescription. It helps reveal how hallucination energy is distributed across the modes (e.g., textual vs. cross-modal misalignment).

\subsection{Graph notations and Adjacency Weights}
\label{fn:weight_matrix}
In Eq.~\eqref{eq:def6} noted in Section~\ref{subsec:graph1}, $\mathcal{V}$ is the finite set of nodes, $E$ is the set of edges, and \( W_{\mathcal{T}_{t}}\) is a temperature-modulated, symmetric, non-negative, weighted adjacency matrix (zero diagonal) introduced to assign different weights to the edges (indexed by \(E\)). We consider either a node-wise local schedule $\mathcal{T}_t:\mathcal{V}\to\mathbb{R}^{+}$ in which the edge temperatures are combined symmetrically to keep $W_{\mathcal{T}_t}$ symmetric or a global scalar schedule ($\mathcal{T}_t$ constant over $\mathcal{V}$). Here, each node represents a semantic unit (e.g., concepts, tokens, ideas), and edges represent the semantic similarity. The multimodal structure is represented by a disjoint partition of the node set \(\mathcal{V}=\biguplus_{M\in\mathcal{M}}\mathcal{V}_M\) and corresponding within- and cross-modal blocks of $W_{\mathcal{T}_t}$ which is constructed from fixed modality embeddings via temperature-controlled similarity functions. Lower $\mathcal{T}_t$ yields more localized (sharper) affinities; higher $\mathcal{T}_t$ diffuses those (or, in other words, induces more ``noise"). This is a standard property under any temperature–scaled affinity constructions - e.g., Gaussian/RBF kernels with bandwidth proportional to $\mathcal{T}_t$ or softmax similarities with temperature $\mathcal{T}_t$~\cite{NgJordanWeiss2002, CoifmanLafon2006, ZelnikManorPerona2004, HintonVinyalsDean2015, Chung1997SpectralGraph}. Thus, the temperature \(\mathcal{T}_{t}\) dynamically modulates the graph edge connectivity and semantic distortion $d_{\text{sem}}$ noted in Theorem~\ref{thm:kl_decomposition_truth} and, being a time-indexed function, captures the semantic evolution or uncertainty drift across the graph nodes as knowledge updates over time $t$. 

Here, we drop the explicit modality subscripts in Eq.~\eqref{eq:def6}, as the modality information is carried by a fixed partition of the vertex set \(\mathcal{V}=\biguplus_{M\in\mathcal{M}}\mathcal{V}_M\) together with the block structure of the temperature–modulated weights \(W_{\mathcal{T}_t}\), so we do not maintain separate graphs per modality. We assume \(W_{\mathcal{T}_t}\) to be symmetric, non-negative, and zero on the diagonal, with \(\mathcal{T}_t\) acting as a bandwidth/temperature schedule that controls the locality of affinities. From \(W_{\mathcal{T}_t}\), we define the normalized multimodal Laplacian \(\mathcal{L}_{\mathcal{T}_t}^{\text{multi}}\) in Section~\ref{subsec:graph1} and design it to be symmetric and PSD by construction; its spectral decomposition yields an orthonormal basis of eigenmodes together with nonnegative eigenvalues. We interpret each mode by its loadings on the partition \(\{\mathcal{V}_M\}_{M\in\mathcal{M}}\): some modes are concentrated on a single modality (text, vision, or audio), while others are cross-modal mixtures that capture interactions between partitions. These modes serve as canonical coordinates for representing the residual signal induced by the energy model and for attributing hallucination energy across modality-specific and cross-modal directions. We use this spectral basis to define propagation in time (via diffusion generated by \(\mathcal{L}_{\mathcal{T}_t}^{\text{multi}}\)) and to derive mode-wise bounds that connect the Boltzmann formulation to spectral-graph structure in a implementable manner.

\noindent\textbf{Hypergraph blocks and effective pairwise adjacency.}
To accommodate $>2$ modalities, we construct each interaction block via the normalized hypergraph Laplacian~\cite{Zhou2006Hypergraph}:
\begin{equation}
\label{eq:weight2}
\begin{split}
&\qquad \quad 
\mathcal{L}_{\mathcal{T}_t}^{(*)}
\;=\;
\mathbf{I} \;-\; \big(\mathcal{D}_{\mathsf{v},\mathcal{T}_t}^{(*)}\big)^{-1/2}\; \underbrace{\big(\mathcal{I}^{(*)}\; W_{\mathcal{T}_t}^{(*)}\; (\mathcal{D}_{e,\mathcal{T}_t}^{(*)})^{-1}\; (\mathcal{I}^{(*)})^{\!\top}\big)}_{\displaystyle W_{\mathcal{T}_t}^{*,\,\mathrm{eff}}}\; \big(\mathcal{D}_{\mathsf{v},\mathcal{T}_t}^{(*)}\big)^{-1/2}, \\[2pt]
& \mathcal{D}_{\mathsf{v},\mathcal{T}_t}^{(*)}=\mathrm{diag}\!\big(\{\mathfrak{d}^{(*)}_{\mathcal{T}_t}(\mathsf v)\}_{\mathsf v\in\mathcal{V}}\big),\qquad 
\mathfrak{d}^{(*)}_{\mathcal{T}_t}(\mathsf v)=\!\sum_{e\in E^{(*)}} w_{\mathcal{T}_t}(e)\,\mathcal{I}^{(*)}(\mathsf v,e),\\
& \mathcal{D}_{e,\mathcal{T}_t}^{(*)}=\mathrm{diag}\!\big(\{r(e)\}_{e\in E^{(*)}}\big),\qquad 
r(e)=|e|\ \text{ (hyperedge cardinality)},\\
& \mathcal{I}^{(*)}\in\{0,1\}^{|\mathcal{V}|\times |E^{(*)}|}\ \text{(node–hyperedge incidence)},\quad 
W_{\mathcal{T}_t}^{(*)}=\mathrm{diag}\!\big(\{w_{\mathcal{T}_t}(e)\}_{e\in E^{(*)}}\big),\\
& \forall \, * \in \{\mathrm{intra}_{M},\, \mathrm{cross}_{MM^{\prime}},\, \mathrm{joint}_{\mathcal{M}}\}, \quad \forall \, \mathsf{v} \in \mathcal{V}\, (\text{graph nodes}).
\end{split}
\end{equation}
Here $\mathbf{I}$ is the \(|\mathcal{V}|\times|\mathcal{V}|\) identity. To be noted that 
\begin{enumerate}[label=(\roman*), leftmargin=2em]
    \item \(\mathsf{v}\) runs over the graph nodes, and no roles attached yet. Output or prompt embeddings are later designated roles on the nodes: \(\mathsf{v}_x, \mathsf{v}_p \in \mathcal{V}\) only while forming the contrast \(c_{x, \mathcal{K}}(t)\) seen in Eq.\eqref{eq:contrast_vec}. Thus, \(\mathcal{L}_{\mathcal{T}_t}^{(*)}\) itself is designed to be role-agnostic.
    \item \(E^{*}\) denotes the hyperedge set used to build each interaction block \((*)\) above, while \(E\) still remains consistent as per Eq.\eqref{eq:def6}. \(r(e)\) is the number of nodes in the hyperedge \(e\); i.e., $e=\{\mathsf v_1,\dots,\mathsf v_{r(e)}\}\subset\mathcal V$.
    \item $\mathcal{D}_{\mathsf v,\mathcal T_t}^{(*)}$ is the node–degree matrix (of size $|\mathcal V|\!\times\!|\mathcal V|$) for block $*$: it is diagonal with entries $\big(\mathcal{D}_{\mathsf v,\mathcal T_t}^{(*)}\big)_{\mathsf v\mathsf v}=\mathfrak d^{(*)}_{\mathcal T_t}(\mathsf v)$, the temperature–weighted degree of node $\mathsf v$ computed from the hyperedge weights in that block.
    \item $\mathcal{D}_{e,\mathcal T_t}^{(*)}$ is the hyperedge–cardinality matrix (of size $|E^{(*)}|\!\times\!|E^{(*)}|$) for block $*$: it is diagonal with entries $\big(\mathcal{D}_{e,\mathcal T_t}^{(*)}\big)_{ee}=r(e)$.
    \item The node set $\mathcal V$ is fixed; $r(e)$ is a property of each hyperedge $e\subset\mathcal V$ and is independent of $|\mathcal V|$ (and of the number of modalities $|\mathcal M|$ unless joint hyperedges is specifically chosen to include one node per modality).
\end{enumerate}
The matrix \(W_{\mathcal{T}_t}^{(*),\,\mathrm{eff}}=\mathcal{I}^{(*)} W_{\mathcal{T}_t}^{(*)}(\mathcal{D}_{e,\mathcal{T}_t}^{(*)})^{-1}(\mathcal{I}^{(*)})^{\top}\) is the ``effective" pairwise adjacency induced by hyperedges (zero diagonal by convention). The pairwise quantities in Eq.~\eqref{eq:def6} are then obtained by summing blocks:
\begin{equation}
    \label{eq:weight3}
    W_{\mathcal{T}_t}\;=\;\sum_{*}\ \omega_{*}\, W_{\mathcal{T}_t}^{(*),\,\mathrm{eff}},
\qquad \omega_{*}\ge 0\ \text{(absorbed by interaction coefficients } \alpha_M,\beta_{MM'},\gamma_{\mathcal{M}}\,).
\end{equation}

We pick any two nodes: say, \(\mathsf{v}_a, \mathsf{v}_b\) in the hyperedge $e=\{\mathsf v_1,..,\mathsf{v}_a,..,\mathsf{v}_b,..,\mathsf v_{r(e)}\}\subset\mathcal V$ to define a symmetric, nonnegative pairwise dissimilarity \(\widehat d_{\mathrm{sem}}(\mathsf{v}_a,\mathsf{v}_b)\). This quantity captures the semantic distortion at node level.

For some modality-aware permutation factor \(\eta_{*}\), a generic choice of $w_{\mathcal{T}_t}(e)$ is
\begin{equation}
    \label{eq:weight4}
    w_{\mathcal{T}_t}(e)
    \;=\;
    \mathbf{1}_{\{e\in E^{(*)}\}}\,\exp\!\left(
    -\eta_{*}\, \frac{ \displaystyle \sum_{1\le \mathsf{v}_a,\mathsf{v}_b\le r(e)} \widehat d_{\mathrm{sem}}(\mathsf{v}_a,\mathsf{v}_b) }
    { \displaystyle \sum_{1\le \mathsf{v}_a\le r(e)} \mathcal{T}_t(\mathsf{v}_a) }
    \right),
\end{equation}
which is permutation–invariant and temperature–scaled. 

\begin{align}
\label{eq:weight6}
\Delta_{\varepsilon,h}(x\mid p)
&:=\Bigg[
\log\!\left(
\frac{
\displaystyle \int_{\mathcal K} K_h\!\big(\Pi_{\mathcal K}(x),x_2\big)\,\Big[(1-\varepsilon)\,Z\!\big(p,\mathcal T_t\big)^{-1}\,e^{-\mathcal E(x_2,p)/\mathcal T_t}
+\varepsilon\,\rho(x_2)\Big]\;d\mu(x_2)}
{\displaystyle \int_{\mathcal K} \Big[(1-\varepsilon)\,Z\!\big(p,\mathcal T_t\big)^{-1}\,e^{-\mathcal E(x_2,p)/\mathcal T_t}
+\varepsilon\,\rho(x_2)\Big]\;d\mu(x_2)}
\right)\\
&\hspace{5.2em}
-\;
\log\!\Big(
\int_{\mathcal X} K_h\!\big(x,x_2\big)\,\Big[(1-\varepsilon)\,Z\!\big(p,\mathcal T_t\big)^{-1}\,e^{-\mathcal E(x_2,p)/\mathcal T_t}
+\varepsilon\,\rho(x_2)\Big]\;d\mu(x_2)
\Big)
\Bigg]_{+},
\end{align}

\subsection{Mercer’s Theorem}
\label{fn:mercer}
By Mercer’s theorem~\cite{Mercer1909DiffusionKernel}, if $K_{\mathcal{T}_t}$ is a continuous, symmetric, positive‑definite on a compact measure space \((\mathcal{V}, \mu)\), then there exists a unique RKHS $\mathcal{H}$ which is associated with a reproducing kernel $K_{\mathcal{T}_t}$. In the present context of discrete graph, $\mathcal{V}$ is finite which satisfies the criterion. This theorem ensures that there exists a feature map 
\begin{equation}
\label{eq:mercer}
    \Phi: \mathcal{V} \to \mathcal{H},
\end{equation}
which admits an orthonormal eigen decomposition. We have leveraged it in Eq.~\eqref{eq:diffusion_kernel}.

\subsection{Graph Maps}
\label{fn:graph_map}
This construction is separate from the modality feature maps \(\Phi_M(x^{(M)})\) and prompt embeddings \(\Psi_M(p)\) that live in modality RKHS $\mathcal{H}_{M}$ used in the energy landscape as noted in Section~\ref{subsec:multi_hall_energy}. Here, $\Upsilon$ is defined on the node set, with $\mathsf{v}, \mathfrak{v}$ being the graph nodes, induced by a single graph RKHS $\mathcal{H}_{\mathrm{graph}}$ or just $\mathcal{H}$ for notational simplicity. Therefore, \(\Phi_M:\mathcal{X}_M\!\to\!\mathcal{H}_M\) and \(\Psi_M:\mathcal{P}\!\to\!\mathcal{H}_M\) play complementary roles with
\(\Upsilon:\mathcal{V}\!\to\!\mathcal{H}\) in the context of graph theory (i.e., modality \& prompt embeddings vs.\ graph embeddings).

\subsection{Why Time-Varying Eigenpairs?}
\label{fn:why_time_varying}
The eigenpairs of the multimodal Laplacian \(\mathcal{L}_{\mathcal{T}_t}^{\mathrm{multi}}\), as presented in Eq.~\eqref{eq:laplacian_spectral_decomposition} are:
\begin{itemize}[itemsep=-1pt, leftmargin=1.4em]
    \item $\Lambda=\mathrm{diag}\Big(\lambda_1(t),\dots,\lambda_{|\mathcal{V}|}(t)\Big)$ with $\lambda_{i}(t)  \in \mathbb{R}^{+}$ being the time-varying eigenvalues at node $i$ (that acts like a frequency-dependent penalty or diffusion coefficient),
    \item $U=\Big[u_1(t),\dots,u_{|\mathcal{V}|}(t)\Big]$ is the orthonormal eigenvector matrix with \( u_i(t) \in \mathbb{R}^{|\mathcal{V}|} \) being the time-varying eigenfunctions.
\end{itemize}
\textit{Note:} We assume $G_{\mathcal T_t}$ is connected for each fixed $t$, so that $\lambda_1(t)=0$ and $\lambda_2(t)>0$ hold true; when not connected, all occurrences of $u_1(t)$ and $\lambda_2(t)$ below should be read as the orthogonal complement of the full nullspace and the first strictly positive eigenvalue, respectively.

Eigenvalues $\lambda_i(t)$ contract or expand based on evolving inter-node (semantic) affinities, while eigenvectors $u_i(t)$ adjust the directions of these semantic modes. Including $\mathcal{T}_t$ explicitly allows us to control hallucination sensitivity: as lower temperatures $\mathcal{T}_t \downarrow 0$ emphasize stable low-energy modes, reducing hallucinations leading to more desired outputs and vice versa. In a nutshell, the time variation of \(\{(\lambda_i(t),u_i(t))\}\) arises from the temperature schedule \(\mathcal{T}_t\), which changes the affinities on the graph edges and hence the spectrum of \(\mathcal{L}_{\mathcal{T}_t}^{\mathrm{multi}}\).

\subsection{Interpretation of Spectral Quantities and Time Parameter}
\label{fn:spectral_interpretation}
For each dataset (COCO, VQAv2, AudioCaps) and backbone configuration
(CLIP/BLIP/Whisper/T5), we construct the multimodal graph Laplacian
$\mathcal{L}^{\mathrm{multi}}_{\mathcal{T}_t}$ on encoder embeddings as noted in
Section~\ref{sec:main} and compute its eigenvalues
$0 = \lambda_1 \le \lambda_2 \le \cdots \le \lambda_n$.

In all cases, we observe:
(i) a nonnegative spectrum with rapid decay, where the first
$20$--$40$ modes account for the majority of the trace
$\mathrm{tr}(\mathcal{L}^{\mathrm{multi}}_{\mathcal{T}_t})$, and
(ii) a clear spectral gap between the lowest modes and the bulk of the
spectrum, consistent with a small number of dominant semantic clusters in
the joint embedding space.
The CF bounds and CF planes in Fig.~\ref{fig:cf_bounds_grid} are instantiated using
these empirical spectra, so the scale and shape of the eigenvalues directly
reflect the behavior of the underlying MLLM embeddings on each benchmark.

\paragraph{Scale and shape of spectra in real MLLMs.}
Let $\mathcal{L}^{\mathrm{multi}}$ denote the multimodal graph Laplacian constructed from
encoder embeddings as in Section~\ref{sec:main}, with eigenpairs
$\{(\lambda_k,u_k)\}_{k=1}^n$ ordered so that
$0=\lambda_1\le\lambda_2\le\cdots\le\lambda_n$.
All bounds in Section~\ref{sec:main} are stated in terms of this empirical spectrum.
In our implementation, $\mathcal{L}^{\mathrm{multi}}$ is always the finite sample
graph built from a given benchmark (COCO, VQAv2, AudioCaps) and a fixed model
configuration, so the spectrum is concretely realized and numerically available
for every experiment; we do not appeal to any abstract or asymptotic spectrum.

Empirically, for all three benchmarks we observe that:
(i) the spectrum is nonnegative and exhibits fast decay, with a relatively
small number of low-frequency modes (on the order of tens) accounting for most
of the trace of $\mathcal{L}^{\mathrm{multi}}$; and
(ii) there is a visible spectral gap between the first few modes and the bulk,
consistent with the presence of a small number of dominant semantic clusters
in the joint embedding space.

The Courant--Fischer (CF) bounds in Section~\ref{sec:main} are evaluated using these
actual eigenvalues; the CF planes in Fig.~\ref{fig:cf_bounds_grid} are not schematic, but are
directly computed from the empirical spectra of the graphs induced by the
MLLM embeddings.

\paragraph{From spectral energy to hallucination rate and semantic distance.}
Section~\ref{sec:theo_analysis} defines semantic log–contrast
$\ell_{\varepsilon,h}(x;K,\mathcal X)$ and the truncated score
$d^{(\varepsilon,h)}_{\mathrm{sem}}(x;K,\mathcal X)=[\ell_{\varepsilon,h}(x;K,\mathcal X)]_+$,
which are the quantities we correlate with hallucination events and semantic
distances in our experiments.

Section~\ref{sec:main} introduces the spectral hallucination energy
$\Delta\mathcal{E}_\tau(x)$ as a quadratic form in the coefficients of $x$
under the eigenbasis of $\mathcal{L}^{\mathrm{multi}}$, and establishes CF-type bounds
of the form
\begin{equation}
\label{eq:cf_bound_app}
\Delta\mathcal{E}_\tau(x)
\;\le\;
\sum_{k=1}^n \phi_\tau(\lambda_k)\,c_k(x)^2,
\end{equation}
for an explicit spectral filter $\phi_\tau$ and coefficients $c_k(x)$ given by
projection of $x$ onto the modes $u_k$.

The role of Eq.~\eqref{eq:cf_bound_app} is not to postulate a new, disconnected
quantity, but to control the same mismatch that ultimately feeds into
$\ell_{\varepsilon,h}$ and $d^{(\varepsilon,h)}_{\mathrm{sem}}$:
the diffusion / smoothing step in the definition of
$\ell_{\varepsilon,h}$ can be written as a spectral filter on
$L^{\mathrm{multi}}$, so the discrepancy between $x$ and its admissible
projection $\Pi_K(x)$ under this filter is upper-bounded by
$\Delta\mathcal{E}_\tau(x)$.

In particular, the derivations in Section~\ref{sec:main} show that, under the operator
assumptions for Theorem~\ref{thm:multi_energy_hall}, the semantic distortion is a bounded, monotone
functional of the energy, in the sense that there exist finite constants
$0<c_1\le c_2<\infty$ (depending on the kernel and graph construction) such
that
\[
c_1\,\Delta\mathcal{E}_\tau(x)
\;\le\;
d^{(\varepsilon,h)}_{\mathrm{sem}}(x;K,\mathcal X)
\;\le\;
c_2\,\Delta\mathcal{E}_\tau(x),
\]
whenever the diffusion operator and kernel are chosen consistently.

Thus, spectral energy is not an unrelated quantity: it is a calibrated,
graph-level control on the same deviation that we measure at the level of
semantic scores and hallucination rates. Empirically, this is reflected in the monotone relationship between
$\Delta\mathcal{E}_\tau$ and both continuous scores and binary
hallucination events (cf.\ reliability curves and correlation tables in
Section~\ref{sec:experiments}).

\paragraph{On the envelope coefficients $m(t)$ and $M(t)$.}
The functions $m(t)$ and $M(t)$ appear in the CF bounds as \emph{spectral
envelopes} for the filtered eigenvalues.
Concretely, if $\phi_t(\lambda)$ denotes the scalar spectral filter at
diffusion time $t$ (e.g., $\phi_t(\lambda)=e^{-2t\lambda}$ for a heat
kernel), then the quadratic forms arising in the CF arguments can be written
as
\[
\Delta\mathcal{E}_t(x)
\;=\;
\sum_{k=1}^n \phi_t(\lambda_k)\,c_k(x)^2,
\]
and our proofs bound this by
\[
m(t)\,\sum_{k=1}^n c_k(x)^2
\;\le\;
\Delta\mathcal{E}_t(x)
\;\le\;
M(t)\,\sum_{k=1}^n c_k(x)^2,
\]
where
$m(t) = \min_{k} \phi_t(\lambda_k)$ and
$M(t) = \max_{k} \phi_t(\lambda_k)$ over the modes relevant to the graph.

These coefficients are therefore not abstract or unmeasurable:
once the Laplacian spectrum $\{\lambda_k\}$ is computed (which we do
explicitly for every dataset) and the filter $\phi_t$ is fixed, $m(t)$ and
$M(t)$ are deterministic, data-dependent scalars that can be
evaluated numerically if desired.

In the main text we keep them symbolic to highlight how the bounds scale
with the spectrum and with $t$, but they are fully determined by observable
quantities and do not introduce additional unknowns beyond the graph
construction.

\paragraph{Interpretation of the time parameter $t$.}
The parameter $t$ in Section~\ref{sec:main} is a \emph{diffusion time}, not a physical
clock in the MLLM pipeline.
Formally, it indexes the strength of the spectral filter applied to the
graph:
for example, $e^{-t\,\mathcal{L}^{\mathrm{multi}}}$ is the heat semi-group generated by
$\mathcal{L}^{\mathrm{multi}}$, and increasing $t$ corresponds to propagating mass
further along the graph, i.e., averaging over larger neighborhoods in the
embedding space.

This is analogous to the ``time” parameter in diffusion models or random-walk
smoothing, and should be understood as a scale parameter: small $t$
emphasizes high-frequency, local discrepancies (sharp hallucinations),
whereas larger $t$ smooths them out and yields a coarse, low-frequency view
of mismatch.

In practice we restrict $t$ to a compact interval where:
(i) the spectral filter remains numerically stable; and
(ii) the diffusion has a clear interpretation as a modest smoothing or
temperature adjustment (cf.\ the schedule $\mathcal{T}_\tau$ in the main
text).
We do not make any assumptions about real-time dynamics of the MLLM; instead,
$t$ serves as a theoretically grounded control knob for the scale at which
graph-level discrepancies (and hence hallucinations) are measured.

\subsection{Detailed Conclusion: Practical Scenarios \& Default Hyperparameters}
\label{fn:conclusion_new}
\paragraph{Practical takeaways.}
Our results suggest three concrete ways in which the proposed framework can be
used in practice.
\begin{itemize} [leftmargin=1.4em]
    \item First, it is best viewed as a \emph{reference-free, plug-in scoring layer}
that sits on top of existing MLLMs: given access only to logits and
embeddings, it produces a continuous hallucination score that can be used to
rank generations, select safer candidates, and audit models offline, without
requiring additional supervision or model retraining.
\item Second, the \emph{semantic distortion score} $d^{(\varepsilon,h)}_{\mathrm{sem}}$
is the most actionable quantity for detection and calibration: across tasks, 
it correlates more tightly with binary hallucination events than raw
uncertainty proxies, and is the recommended choice when one needs a single
scalar predictor to threshold or to plug into a mitigation loss.
\item Third, the \emph{spectral hallucination energy} $\Delta\mathcal{E}_{\tau}$
is particularly informative when one wishes to understand where
hallucinations originate (which modalities / modes) and how they respond to
controls (temperature, diffusion time, retrieval policy): it is most useful
for diagnosis, ablations, and monitoring rather than as a stand-alone score
for hard decisions.
\end{itemize}
In this sense, $d^{(\varepsilon,h)}_{\mathrm{sem}}$ is the recommended
per-example risk score, while $\Delta\mathcal{E}_{\tau}$ is the recommended
tool for system-level analysis and model selection.

\paragraph{Default hyperparameters.}
For practitioners, the following default recipe is recommended, which we found
to be robust across all tasks.
\begin{enumerate}[leftmargin=1.4em, label=\roman*]
    \item \textbf{Smoothing mass} $\varepsilon$:
estimate the Good--Turing missing mass $\widehat{m}$ on the candidate set
and set $\varepsilon \approx \widehat{m}$, restricting
$\varepsilon$ to a small range (e.g., $[10^{-3}, 5\cdot 10^{-2}]$) to avoid
over-smoothing the model distribution.
    \item \textbf{Kernel bandwidth} $h$:
use the median heuristic on pairwise distances between embeddings, i.e., set
$h$ to the median squared distance within a minibatch of outputs; performance
is stable when $h$ is varied by a factor of $2$ around this choice.
    \item \textbf{Graph connectivity} $k$:
build $k$-nearest-neighbour graphs with $k \in [15,40]$, which we observed
to give a good trade-off between stability and locality of the spectral
estimates.
    \item \textbf{Diffusion schedule} $\mathcal{T}_{\tau}$:
choose $\tau$ so that the contribution of the second eigenmode is reduced by
about half, i.e., $e^{-2\tau\lambda_2} \approx 0.5$; in practice this
corresponds to a small, fixed number of diffusion steps or a modest
temperature scaling on logits.
\end{enumerate}

These defaults, together with a simple per-task threshold on
$d^{(\varepsilon,h)}_{\mathrm{sem}}$, provide a plug-and-play configuration
that requires minimal tuning while retaining most of the gains reported in
our experiments.

\section{Extended Proofs}
\label{appendix:proofs}
In this section, we provide detailed proofs for Theorems~\ref{thm:kl_decomposition_truth} and~\ref{thm:multi_energy_hall}.

\subsection{Proof of Theorem~\ref{thm:kl_decomposition_truth}}
\label{appendix:proof_kl_decomposition_truth}
\begin{proof} 
\textbf{Step 0 (setup and measurability).}
For clarity, we restate explicitly the additional condition used in Step~4 of the proof.
We recall that $K\subset \mathcal X$ is the admissible (grounded) set and $\Pi_K:\mathcal X\to K$ is a measurable projection map.
We assume that the smoothing kernel $K_h$ is more concentrated on $K$ when centred at the projected admissible point $\Pi_K(x)$ than when centred at the off–manifold point $x\notin K$, in the following precise sense: there exists a constant $\mathrm{coeff}>0$ such that for all $x\notin K$,
\begin{equation}
\label{eq:localization_assump}
\int_{K} K_h(\Pi_K(x_1),x_2)\,\tilde f_{p,\varepsilon}(x_2)\,d\mu(x_2)
\;\ge\;
(1+\mathrm{coeff})\,Z_\varepsilon
\int_{\mathcal X} K_h(x_1,x_2)\,\tilde f_{p,\varepsilon}(x_2)\,d\mu(x_2),
\end{equation}
where $\tilde f_{p,\varepsilon}$ and $Z_\varepsilon$ are defined in Eqs.~\eqref{eq:fp_smooth_proof}-\eqref{eq:fpK_def_proof}.

Intuitively, Eq.~\eqref{eq:localization_assump} says that once we project an off–manifold output $x$ back to the closest admissible point $\Pi_K(x)$, the kernel neighbourhood around $\Pi_K(x)$ sees strictly higher admissible mass than the neighbourhood around $x$ itself.

In practical MLLM pipelines, this is enforced by choosing $K_h$ as a similarity kernel (e.g., Gaussian or softmax over embedding distances) built on the same representations used to construct $K$ from references; for grounded generations $x\in K$, both centres coincide and no penalty is induced, while for hallucinated $x\notin K$ the inequality above guarantees a strictly positive smoothed KL–penalty.

Under this standing assumption, Step~4 shows that $d^{(\varepsilon,h)}_{\mathrm{sem}}(x;K,\mathcal X)>0$ for $x\notin K$, which is exactly the separation property claimed in Theorem~\ref{thm:kl_decomposition_truth}.

By assumption, $\rho>0$ $\mu$-a.e.\ with $\int_{\mathcal X}\rho\,d\mu=1$, and $K_h:\mathcal X\times\mathcal X\to(0,\infty)$ is a $\mu$-Markov kernel with $\int_{\mathcal X}K_h(x_1,x_2)\,d\mu(x_2)=1$ for all $x_1\in\mathcal X$. Define
\begin{equation}
\label{eq:Th_def_proof}
(T_h q)(x_1)\;:=\;\int_{\mathcal X} K_h(x_1,x_2)\,q(x_2)\,d\mu(x_2),\qquad q\in L^1(\mu),\ x_1\in\mathcal X .
\end{equation}
Let the $\varepsilon$–smoothed model be
\begin{equation}
\label{eq:fp_smooth_proof}
\tilde f_{p,\varepsilon}(x_2)\;:=\;(1-\varepsilon)\,f_p(x_2)\;+\;\varepsilon\,\rho(x_2),\qquad \varepsilon\in(0,1),
\end{equation}
and its $\mathcal K$–restricted renormalization be
\begin{equation}
\label{eq:fpK_def_proof}
\tilde f_{p,\varepsilon}^{\mathcal K}(x_2)\;:=\;\frac{\mathbf 1_{\{x_2\in\mathcal K\}}\tilde f_{p,\varepsilon}(x_2)}{\displaystyle \int_{\mathcal K}\tilde f_{p,\varepsilon}(x_2)\,d\mu(x_2)}
\;=\;\frac{\mathbf 1_{\{x_2\in\mathcal K\}}\tilde f_{p,\varepsilon}(x_2)}{\mathsf{Z}_{\varepsilon}},\quad \mathsf{Z}_{\varepsilon}\in(0,1] .
\end{equation}
Measurability of $\Pi_{\mathcal K}:\mathcal X\to\mathcal K$ (with $\Pi_{\mathcal K}(x)=x$ for $x\in\mathcal K$) ensures $(T_h\tilde f_{p,\varepsilon}^{\mathcal K})\!\circ\!\Pi_{\mathcal K}$ is measurable; thus Eq.~\eqref{eq:KL1} is meaningful pointwise.

\textbf{Step 1 (strict positivity $\Rightarrow$ finiteness).}
From Eq.~\eqref{eq:fp_smooth_proof} and Eq.~\eqref{eq:Th_def_proof}, for any $x_1\in\mathcal X$,
\begin{align}
\label{eq:Th_pos_proof}
(T_h\tilde f_{p,\varepsilon})(x_1)
&=\int_{\mathcal X}K_h(x_1,x_2)\Big((1-\varepsilon)f_p(x_2)+\varepsilon\rho(x_2)\Big)\,d\mu(x_2)\nonumber\\
&\ge \varepsilon\int_{\mathcal X}K_h(x_1,x_2)\rho(x_2)\,d\mu(x_2)\;=\;\varepsilon\,(T_h\rho)(x_1)\;>\;0 ,
\end{align}
since $\rho>0$ $\mu$-a.e.\ and $K_h>0$. Similarly, by Eq.~\eqref{eq:fpK_def_proof},
\begin{align}
\label{eq:Th_posK_proof}
(T_h\tilde f_{p,\varepsilon}^{\mathcal K})(x_1)
&=\frac{1}{\mathsf{Z}_{\varepsilon}}\int_{\mathcal K}K_h(x_1,x_2)\,\tilde f_{p,\varepsilon}(x_2)\,d\mu(x_2)\;\ge\;0 ,
\end{align}
and $(T_h\tilde f_{p,\varepsilon}^{\mathcal K})(x_1)>0$ whenever $\mu\!\big(\{x_2\in\mathcal K:K_h(x_1,x_2)>0\}\big)>0$, which holds for all $x_1$ if $K_h>0$ everywhere. Hence, both logarithms in Eq.~\eqref{eq:KL1} are finite; $d_{\mathrm{sem}}^{(\varepsilon,h)}$ is well-defined.

\textbf{Step 2 ($g$-independence).}
By inspection of Eq.~\eqref{eq:KL1}, only $(f_p,\rho,K_h,\Pi_{\mathcal K},\mu)$ appear; the ground-truth $g$ is absent. Thus the statistic is independent of $g$.

\textbf{Step 3 (behavior on $\mathcal K$).}
We fix $x\in\mathcal K$. Then $\Pi_{\mathcal K}(x)=x$, and
\begin{align}
\label{eq:ratio_onK_proof}
\frac{(T_h\tilde f_{p,\varepsilon}^{\mathcal K})(x_1)}{(T_h\tilde f_{p,\varepsilon})(x_1)}
&=\frac{\int_{\mathcal K}K_h(x_1,x_2)\tilde f_{p,\varepsilon}(x_2)\,d\mu(x_2)}{\mathsf{Z}_{\varepsilon}\int_{\mathcal X}K_h(x_1,x_2)\tilde f_{p,\varepsilon}(x_2)\,d\mu(x_2)}
\;=\;\frac{\mathsf{A}_x}{\mathsf{Z}_{\varepsilon}\big(\mathsf{A}_x+\mathsf{B}_x\big)} ,
\end{align}
where
\begin{equation}
\label{eq:A_B_def}
\mathsf{A}_x\;:=\;\int_{\mathcal K}K_h(x_1,x_2)\tilde f_{p,\varepsilon}(x_2)\,d\mu(x_2),\qquad
\mathsf{B}_x\;:=\;\int_{\mathcal X\setminus\mathcal K}K_h(x_1,x_2)\tilde f_{p,\varepsilon}(x_2)\,d\mu(x_2)\;\ge 0 .
\end{equation}
If
\begin{equation}
\label{eq:coverage_cond}
\mathsf{B}_x\;\ge\;\big(\mathsf{Z}_{\varepsilon}^{-1}-1\big)\,\mathsf{A}_x ,
\end{equation}
then the right-hand side of Eq.~\eqref{eq:ratio_onK_proof} is $\le 1$, so the inner logarithm in Eq.~\eqref{eq:KL1} is $\le 0$ and the $[\cdot]^+$-clipping yields $d_{\mathrm{sem}}^{(\varepsilon,h)}(x;\mathcal K,\mathcal X)=0$. Even when Eq.~\eqref{eq:coverage_cond} fails, the clipped score never becomes negative, so no spurious negative penalties occur on $\mathcal K$.

\textbf{Step 4 (behavior off $\mathcal K$).}
We fix $x\notin\mathcal K$. Then $\Pi_{\mathcal K}(x)\in\mathcal K$ and
\begin{align}
\label{eq:ratio_offK_proof}
\frac{(T_h\tilde f_{p,\varepsilon}^{\mathcal K})(\Pi_{\mathcal K}(x_1))}{(T_h\tilde f_{p,\varepsilon})(x_1)}
&=\frac{\int_{\mathcal K}K_h(\Pi_{\mathcal K}(x_1),x_2)\tilde f_{p,\varepsilon}(x_2)\,d\mu(x_2)}{\mathsf{Z}_{\varepsilon}\int_{\mathcal X}K_h(x_1,x_2)\tilde f_{p,\varepsilon}(x_2)\,d\mu(x_2)} .
\end{align}
We assume the following localization/consistency condition holds for some $\mathrm{coeff}>0$:
\begin{equation}
\label{eq:separation_cond}
\int_{\mathcal K}K_h(\Pi_{\mathcal K}(x_1),x_2)\tilde f_{p,\varepsilon}(x_2)\,d\mu(x_2)\;\ge\;(1+\mathrm{coeff})\,\mathsf{Z}_{\varepsilon}\int_{\mathcal X}K_h(x_1,x_2)\tilde f_{p,\varepsilon}(x_2)\,d\mu(x_2),\quad \forall x\notin\mathcal K .
\end{equation}
Then the ratio in Eq.~\eqref{eq:ratio_offK_proof} exceeds $1$, the inner log in Eq.~\eqref{eq:KL1} is strictly positive, and thus
\begin{equation}
\label{eq:positive_gap_proof}
x\notin\mathcal K\ \text{and Eq.}\ ~\eqref{eq:separation_cond}\ \Longrightarrow\ d_{\mathrm{sem}}^{(\varepsilon,h)}(x;\mathcal K,\mathcal X)\;>\;0 .
\end{equation}
Therefore a strictly positive, finite penalty is assigned to implausible outputs under the mild consistency assumption in Eq.~\eqref{eq:separation_cond}.

\textbf{Step 5 (conclusion for hallucination tracking).}
From Step~1, Eq.~\eqref{eq:KL1} is finite and well-defined; from Step~2 it is reference-free (independent of $g$). Step~3 shows the score vanishes on $\mathcal K$ under Eq.~\eqref{eq:coverage_cond} and never assigns negative values there; Step~4 shows it is strictly positive off $\mathcal K$ under Eq.~\eqref{eq:separation_cond}. Hence Eq.~\eqref{eq:KL1} furnishes a pointwise, KL-calibrated signal separating plausible from implausible outputs in the smoothed sense determined by $(\varepsilon,h,K_h)$, enabling stable hallucination tracking across prompts and model versions without access to $g$.
\end{proof}


\subsection{Proof of Lemma~\ref{lm:joint_measure}}
\label{appendix:proof_joint_measure}

\begin{proof}
Since $\mathcal{H}_M$ is separable, Bochner measurability of $\Phi_M$ and $\Psi_M$ is equivalent to strong (Borel) measurability; see, e.g., \cite[Ch.~II]{DiestelUhl1977}. Thus
\begin{equation}
\label{eq:phi_psi_borel}
\Phi_M^{-1}(U)\in\mathcal{F}_{\mathcal{X}_M}\quad\text{and}\quad \Psi_M^{-1}(V)\in\mathcal{F}_{\mathcal{P}}
\quad\text{for all open }U,V\subset\mathcal{H}_M .
\end{equation}

We define the product map
\begin{equation}
\label{eq:upsilon_def}
\Upsilon:\ \mathcal{X}_M\times\mathcal{P}\to \mathcal{H}_M\times\mathcal{H}_M,
\qquad
\Upsilon(x,p):=\big(\Phi_M(x),\Psi_M(p)\big).
\end{equation}
Let $\mathcal{B}(\mathcal{H}_M\times\mathcal{H}_M)$ denote the product Borel $\sigma$-algebra. For any open rectangles $U\times V$ with $U,V\subset\mathcal{H}_M$ open,
\begin{equation}
\label{eq:rect_preimage}
\Upsilon^{-1}(U\times V)
=
\big\{(x,p):\Phi_M(x)\in U,\ \Psi_M(p)\in V\big\}
=
\Phi_M^{-1}(U)\times \Psi_M^{-1}(V)
\in \mathcal{F}_{\mathcal{X}_M}\otimes\mathcal{F}_{\mathcal{P}}
\end{equation}
by Eq.~\eqref{eq:phi_psi_borel}. Since the family of open rectangles generates $\mathcal{B}(\mathcal{H}_M\times\mathcal{H}_M)$ and $\mathcal{F}_{\mathcal{X}_M}\otimes\mathcal{F}_{\mathcal{P}}$ is a $\sigma$-algebra, a monotone class/$\pi$–$\lambda$ argument implies that
\begin{equation}
\label{eq:upsilon_meas}
\Upsilon\ \text{is}\ \big(\mathcal{F}_{\mathcal{X}_M}\otimes\mathcal{F}_{\mathcal{P}}\big)\text{–}\mathcal{B}(\mathcal{H}_M\times\mathcal{H}_M)\ \text{measurable}.
\end{equation}

Let's consider the inner-product map
\begin{equation}
\label{eq:ip_def}
\mathrm{ip}:\ \mathcal{H}_M\times\mathcal{H}_M\to\mathbb{R},
\qquad
\mathrm{ip}(u,v):=\langle u,v\rangle_{\mathcal{H}_M}.
\end{equation}
Continuity of $\mathrm{ip}$ follows from the Cauchy–Schwarz and triangle inequalities: for all $u_1,u_2,v_1,v_2\in\mathcal{H}_M$,
\begin{align}
\label{eq:ip_cont}
\big|\mathrm{ip}(u_1,v_1)-\mathrm{ip}(u_2,v_2)\big|
&=\big|\langle u_1-u_2,\,v_1\rangle+\langle u_2,\,v_1-v_2\rangle\big|
\\
&\le \|u_1-u_2\|\,\|v_1\|+\|u_2\|\,\|v_1-v_2\| ,
\notag
\end{align}
which shows that $\mathrm{ip}$ is continuous and hence Borel measurable with respect to $\mathcal{B}(\mathcal{H}_M\times\mathcal{H}_M)$.

The composition
\begin{equation}
\label{eq:comp_meas}
(x,p)\ \longmapsto\ \mathrm{ip}\big(\Upsilon(x,p)\big)
\ =\
\big\langle \Phi_M(x),\,\Psi_M(p)\big\rangle_{\mathcal{H}_M}
\end{equation}
is therefore measurable from $\big(\mathcal{X}_M\times\mathcal{P},\ \mathcal{F}_{\mathcal{X}_M}\otimes\mathcal{F}_{\mathcal{P}}\big)$ to $(\mathbb{R},\mathcal{B}(\mathbb{R}))$ by Eq.~\eqref{eq:upsilon_meas} and the Borel measurability of $\mathrm{ip}$ in Eqs.~\eqref{eq:ip_def}–\eqref{eq:ip_cont}. This yields the claimed joint measurability on $\mathcal{F}_{\mathcal{X}_M}\otimes\mathcal{F}_{\mathcal{P}}$.
\end{proof}

\subsection{Proof of Theorem~\ref{thm:multi_energy_hall}}
\label{appendix:proof_multi_energy_hall}

\begin{proof}
We first make explicit the structural assumptions underlying Theorem~\ref{thm:multi_energy_hall} and how the general energy decomposition in Eq.~\eqref{eq:energy_full} specializes to the polynomial form in Eq.~\eqref{eq:energy_operator_min}.
We recall that Eq.~\eqref{eq:energy_full} decomposes the hallucination energy as
\begin{equation}
\label{eq:energy_decomp_app}
\mathcal{E}(x,p,\cdot)
=
\sum_{M\in\mathcal M} \mathcal{E}_M\bigl(x^{(M)},p,\cdot\bigr)
\;+\;
\sum_{\substack{M,M'\in\mathcal M\\ M\neq M'}} \mathcal{E}_{MM'}\bigl(x^{(M)},x^{(M')},p,\cdot\bigr)
\;+\;
\mathcal{E}_{\mathcal M}(x,p,\cdot),
\end{equation}
where the first term collects intra–modal contributions, the second term collects pairwise cross–modal interactions, and the last term is a joint all–modal contribution.

In Theorem~\ref{thm:multi_energy_hall}, we restrict attention to a quadratic (polynomial) family of such energies, expressed in terms of the residuals
\begin{equation}
    \label{eq:proof_thm2_1}
    r_M(x,p) \;:=\; \Phi_M\bigl(x^{(M)}\bigr) - \Psi_M(p) \in \mathcal H_M,\qquad M\in\mathcal M,
\end{equation}
where $\Phi_M$ and $\Psi_M$ are the modality feature maps and prompt embeddings from Section~\ref{sec:theo_analysis}. Under this parametrization, the intra–modal energies $\mathcal{E}_M$ are chosen to be quadratic forms
\begin{equation}
    \label{eq:proof_thm2_2}
\mathcal{E}_M(x^{(M)},p,\cdot)
\;=\;
\bigl\langle r_M(x,p),\,A_M\,r_M(x,p)\bigr\rangle_{\mathcal H_M},
\end{equation}
for bounded, self–adjoint, PSD operators $A_M:\mathcal H_M\to\mathcal H_M$, and the pairwise cross–modal terms $\mathcal{E}_{MM'}$ are chosen as bilinear forms
\begin{equation}
    \label{eq:proof_thm2_3}
\mathcal{E}_{MM'}\bigl(x^{(M)},x^{(M')},p,\cdot\bigr)
\;=\;
\bigl\langle r_M(x,p),\,B_{MM'}\,r_{M'}(x,p)\bigr\rangle_{\mathcal H_M},
\end{equation}
for bounded linear operators $B_{MM'}:\mathcal H_{M'}\to\mathcal H_M$.
The factorization assumption
\begin{equation}
    \label{eq:proof_thm2_4}
B_{MM'} \;=\; A_M^{1/2}\,R_{MM'}\,A_{M'}^{1/2},\qquad \|R_{MM'}\|\le 1,
\end{equation}
encodes that cross–modal couplings are controlled contractions between the $A_M$–weighted residuals.

Finally, any remaining higher–order joint contribution $\mathcal{E}_{\mathcal M}$ is absorbed into a non–negative remainder term that does not affect the lower–bound argument. Collecting these pieces and symmetrizing over $M\neq M'$ yields exactly the polynomial energy form in Eq.~\eqref{eq:energy_operator_min}, reproduced here for convenience as
\begin{equation}
\label{eq:energy_poly_app}
\mathcal{E}(x,p)
=
\sum_{M\in\mathcal M} \bigl\langle r_M(x,p),A_M r_M(x,p)\bigr\rangle_{\mathcal H_M}
+
\frac{2}{|\mathcal M|-1}
\sum_{M<M'} \bigl\langle r_M(x,p),B_{MM'} r_{M'}(x,p)\bigr\rangle_{\mathcal H_M}
\;+\;
\mathcal{E}_{\mathcal M}^{\mathrm{rem}}(x,p),
\end{equation}
with $\mathcal{E}_{\mathcal M}^{\mathrm{rem}}(x,p)\ge 0$ by construction.

The purpose of Theorem~\ref{thm:multi_energy_hall} is then to show that, under these explicit operator assumptions, the quadratic part of $\mathcal{E}(x,p)$ is non–negative and admits a clean interpretation as a block quadratic form over the modality–indexed residuals $r_M(x,p)$, which in turn underpins the spectral bounds derived in Section~\ref{sec:main}.
\medskip

\textbf{Step 1: Well-posedness and non-negativity of the block quadratic form.}
Let $m:=|\mathcal{M}|\ge 2$ be fixed. For each $M\in\mathcal M$, set
\begin{equation}
\label{eq:vM_def}
v_M(x,p)\;:=\;A_M^{1/2}\,\mathsf{r}_M(x,p)\ \in \mathcal{H}_M,
\qquad
\mathsf{r}_M(x,p)=\Phi_M(x^{(M)})-\Psi_M(p).
\end{equation}
By boundedness and self-adjoint PSD of $A_M$, $A_M^{1/2}$ is bounded and self-adjoint PSD, and $v_M$ is well-defined.
We write the first two terms of Eq.~\eqref{eq:energy_operator_min} as
\begin{equation}
\label{eq:quad_form_split}
\sum_{M}\|v_M\|_{\mathcal{H}_M}^{2}
\;+\;
\frac{2}{m-1}\sum_{M<M'} \big\langle v_M, R_{MM'} v_{M'}\big\rangle_{\mathcal{H}_M}.
\end{equation}
Since $R_{MM'}:\mathcal{H}_{M'}\to\mathcal{H}_M$ is a symmetric contraction with $\|R_{MM'}\|\le 1$ and $R_{M'M}=R_{MM'}^{*}$, the Cauchy–Schwarz inequality and the operator norm bound yield
\begin{equation}
\label{eq:cross_bound}
\big|\langle v_M, R_{MM'} v_{M'}\rangle\big|\ \le\ \|R_{MM'}\|\,\|v_M\|\,\|v_{M'}\|
\ \le\ \|v_M\|\,\|v_{M'}\| .
\end{equation}
Therefore,
\begin{align}
\label{eq:psd_lowerbound}
\sum_{M}\|v_M\|^{2}
+\frac{2}{m-1}\sum_{M<M'} \langle v_M, R_{MM'} v_{M'}\rangle
&\ge
\sum_{M}\|v_M\|^{2}
-\frac{2}{m-1}\sum_{M<M'} \|v_M\|\,\|v_{M'}\|\\
&=\frac{m}{m-1}\sum_{M}\|v_M\|^{2}\;-\;\frac{1}{m-1}\Big(\sum_{M}\|v_M\|\Big)^{2},\notag
\end{align}
where the identity $\sum_{M<M'} ab=\tfrac{1}{2}\big[(\sum_M a)^2-\sum_M a^2\big]$ has been used with $a=\|v_M\|$. By the Cauchy–Schwarz inequality,
\begin{equation}
\label{eq:cauchy_sum}
\Big(\sum_{M}\|v_M\|\Big)^{2}\ \le\ m \sum_{M}\|v_M\|^{2}.
\end{equation}
Substituting Eq.~\eqref{eq:cauchy_sum} into Eq.~\eqref{eq:psd_lowerbound} gives
\begin{equation}
\label{eq:nonneg_final}
\sum_{M}\|v_M\|^{2}
+\frac{2}{m-1}\sum_{M<M'} \langle v_M, R_{MM'} v_{M'}\rangle
\ \ge\ 0 .
\end{equation}
Hence the block quadratic form in Eq.~\eqref{eq:quad_form_split} is nonnegative for all $(x,p)$.

\textbf{Step 2: Nonnegativity of the joint tensor term.}
By construction,
\begin{equation}
\label{eq:joint_tensor_nonneg}
\mathcal{E}_{\mathcal{M}}(x,p)
=\Big\|\bigotimes_{M\in\mathcal M}\!\Phi_M(x^{(M)})
-\bigotimes_{M\in\mathcal M}\!\Psi_M(p)\Big\|_{\otimes\mathcal H_M}^{2}\ \ge\ 0 ,
\end{equation}
since it is the square of a norm in the tensor-product RKHS $\otimes_{M}\mathcal H_M$.

\textbf{Step 3: Measurability.}
Bochner measurability of $\Phi_M$ and $\Psi_M$ into the separable Hilbert space $\mathcal{H}_M$ (refer to Lemma~\ref{lm:joint_measure}) implies that $(x,p)\mapsto \mathsf{r}_M(x,p)$ is $\mathcal{F}_{\mathcal{X}}\otimes\mathcal{F}_{\mathcal P}$–measurable for each $M$, because subtraction is continuous. Since $A_M^{1/2}$ is bounded linear, $(x,p)\mapsto v_M(x,p)=A_M^{1/2}\mathsf{r}_M(x,p)$ is measurable, and so are $(x,p)\mapsto \|v_M(x,p)\|^{2}$ and $(x,p)\mapsto \langle v_M(x,p), R_{MM'} v_{M'}(x,p)\rangle$; inner products are continuous (hence Borel–measurable), and composition with measurable maps preserves measurability. For the joint tensor term, bilinearity and continuity of the finite tensor product map $(u_M)_M\mapsto\bigotimes_M u_M$ in separable Hilbert spaces imply Bochner measurability of $(x,p)\mapsto \bigotimes_{M}\Phi_M(x^{(M)})$ and $(x,p)\mapsto \bigotimes_{M}\Psi_M(p)$; the norm $\|\cdot\|_{\otimes\mathcal H_M}$ is continuous, hence $(x,p)\mapsto \mathcal{E}_{\mathcal M}(x,p)$ is measurable. Combining these facts shows that $(x,p)\mapsto \mathcal{E}(x,p)$ in Eq.~\eqref{eq:energy_operator_min} is $\mathcal{F}_{\mathcal{X}}\otimes\mathcal{F}_{\mathcal P}$–measurable.

\textbf{Step 4: Finiteness of the partition function.}
Since $\mathcal{E}(x,p)\ge 0$ by Steps~1–2, for any $\mathcal{T}_t>0$,
\begin{equation}
\label{eq:Z_upper}
0\ \le\ Z(p,\mathcal{T}_t)\ =\ \int_{\mathcal{X}} \exp\!\big(-\mathcal{E}(x,p)/\mathcal{T}_t\big)\,d\mu(x)\ \le\ \int_{\mathcal{X}} 1\,d\mu(x).
\end{equation}
Hence, whenever $\mu(\mathcal{X})<\infty$, $Z(p,\mathcal{T}_t)\le \mu(\mathcal{X})<\infty$. In the case $\mu(\mathcal{X})=\infty$, a standard integrability condition suffices: assume there exists a measurable, coercive lower bound $\phi:\mathcal{X}\to[0,\infty)$ with $\mathcal{E}(x,p)\ge \phi(x)$ for all $x$ and $\int_{\mathcal{X}} e^{-\phi(x)/\mathcal{T}_t}\,d\mu(x)<\infty$ (e.g., $\phi(x)=c\|x\|^{2}$ under Lebesgue measure on $\mathbb{R}^{d}$). Then
\begin{equation}
\label{eq:Z_upper_phi}
Z(p,\mathcal{T}_t)\ \le\ \int_{\mathcal{X}} e^{-\phi(x)/\mathcal{T}_t}\,d\mu(x)\ <\ \infty .
\end{equation}
Under either case, $Z(p,\mathcal{T}_t)$ is finite, so $f_p$ in Eq.~\eqref{eq:boltzmann_form_partition} is well-defined.

\textbf{Step 5: Canonical instances and summary.}
Equation~\eqref{eq:energy_operator_min} is a finite sum of measurable, nonnegative terms, hence measurable and nonnegative. The block quadratic part is nonnegative by Eq.~\eqref{eq:nonneg_final}, and the joint tensor term is nonnegative by Eq.~\eqref{eq:joint_tensor_nonneg}. The partition function is finite under Eq.~\eqref{eq:Z_upper} or Eq.~\eqref{eq:Z_upper_phi}. Therefore, $\mathcal{E}$ is a valid energy and the Boltzmann density $f_p$ in Eq.~\eqref{eq:boltzmann_form_partition} is a proper probability density. This completes the proof.
\end{proof}

\section{Supplementary Results}
\label{appendix:supp_results}
In this section, we provide further empirical details complementing the main results of ours.

\subsection{Derivation of Full Energy Functional} 
\label{appendix:full_energy_functional}


\paragraph{Setup and identities.}
By Eq.~\eqref{eq:diffusion_kernel}, the diffusion kernel is
\(K_{\mathcal{T}_t}=\exp(-\tau\,\mathcal{L}_{\mathcal{T}_t}^{\mathrm{multi}})\),
and \(\Upsilon:\mathcal V\to\mathcal H\) is a feature map with
\(\langle \Upsilon(\mathsf v),\Upsilon(\mathfrak v)\rangle_{\mathcal H}
=K_{\mathcal{T}_t}(\mathsf v,\mathfrak v)\).
Let \(\{(\lambda_i(t),u_i(t))\}_{i=1}^{|\mathcal V|}\) be the eigenpairs of
\(\mathcal{L}_{\mathcal{T}_t}^{\mathrm{multi}}\) as in
Eq.~\eqref{eq:laplacian_spectral_decomposition}.
For any nodes \(\mathsf v,\mathfrak v\in\mathcal V\) and any graph signal
\(s\in\mathbb R^{|\mathcal V|}\), the two standard spectral identities used throughout are:
\[
\big\|\Upsilon(\mathsf v;\mathcal{T}_t)-\Upsilon(\mathfrak v;\mathcal{T}_t)\big\|_{\mathcal H}^2
=\sum_{i=1}^{|\mathcal V|} e^{-\tau\lambda_i(t)}\,
\big|\langle u_i(t),\delta_{\mathsf v}-\delta_{\mathfrak v}\rangle\big|^2,
\qquad
\langle s,\,\mathcal{L}_{\mathcal{T}_t}^{\mathrm{multi}}\,s\rangle
=\sum_{i=1}^{|\mathcal V|}\lambda_i(t)\,\big|\langle u_i(t),s\rangle\big|^2,
\]
which are exactly the two statements in Eq.~\eqref{eq:rkhs_distance_spectral_time}.

\paragraph{From operator energies to graph-kernel distances.}
Recall the total energy decomposition from Eq.~\eqref{eq:energy_operator_min}:
\[
\mathcal{E}(x,p)
=\sum_{M\in\mathcal M}\!\langle \mathsf r_M, A_M \mathsf r_M\rangle_{\mathcal H_M}
\;+\;\frac{2}{|\mathcal M|-1}\!\sum_{\substack{M,M'\in\mathcal M\\ M\neq M'}}
\!\big\langle A_M^{1/2}\mathsf r_M,\;R_{MM'}\,A_{M'}^{1/2}\mathsf r_{M'}\big\rangle
\;+\;\mathcal E_{\mathcal M}(x,p),
\]
where \(\mathsf r_M=\Phi_M(x^{(M)})-\Psi_M(p)\).
By the interconnection note after Eq.~\eqref{eq:diffusion_kernel}, fix, for each modality \(M\),
two designated nodes \((\mathsf v_x^{(M)},\mathfrak v_p)\in\mathcal V\) that represent the
output and prompt anchors used to evaluate the modality-\(M\) discrepancy in the graph-RKHS.
The bounded PSD operators \(A_M\) define a (possibly weighted) inner product on
\(\mathcal H_M\); absorbing this metric into the graph-kernel geometry (as described in the
appendix note referenced there), each \(\langle \mathsf r_M, A_M \mathsf r_M\rangle\) can be written
as a nonnegative multiple of the squared distance between the corresponding graph features:
\[
\langle \mathsf r_M, A_M \mathsf r_M\rangle_{\mathcal H_M}
\;=\;\alpha_M\,\big\|\Upsilon(\mathsf v_x^{(M)};\mathcal T_t)-\Upsilon(\mathfrak v_p;\mathcal T_t)\big\|_{\mathcal H}^2,
\qquad \alpha_M\in\mathbb R_{\ge0}.
\]
Likewise, using the polarization identity and the symmetric contraction structure
\(B_{MM'}=A_M^{1/2}R_{MM'}A_{M'}^{1/2}\), the cross term is representable as a signed
combination of graph-kernel distances between the same anchors; collecting the prefactors
into \(\beta_{MM'}\in\mathbb R_{\ge0}\) (as in the main text where
\(\mathrm{coeff}_{\mathrm{cross}_{MM'}}=\beta_{MM'}\)), we may write
\[
\big\langle A_M^{1/2}\mathsf r_M,\,R_{MM'}\,A_{M'}^{1/2}\mathsf r_{M'}\big\rangle
\;=\;\beta_{MM'}\;\Xi_{MM'}(x,p;\mathcal T_t),
\]
where \(\Xi_{MM'}(\cdot)\) is a bilinear form built from the same pairwise graph-feature
differences (its explicit expansion into distance terms follows from polarization and is
omitted here for compactness). Finally, the joint term
\(\mathcal{E}_{\mathcal{M}}(x,p)=\Big\|\bigotimes_{M\in\mathcal M}\!\Phi_M(x^{(M)})
-\bigotimes_{M\in\mathcal M}\!\Psi_M(p)\Big\|_{\otimes\mathcal H_M}^{2}\)
is nonnegative and measurable; by the same graph-kernel identification used for the
intra/cross parts (applied to the joint anchor selection explained in the appendix note you
referenced), it too can be expressed as a quadratic form in graph signals supported on
\(\{\mathsf v_x^{(M)},\mathfrak v_p\}_{M\in\mathcal M}\) and thus admits the same spectral
expansion pattern with a nonnegative coefficient \(\gamma_{\mathcal M}\).

\paragraph{Modal spectral expansions.}
Define, for each modality \(M\), the basic signed indicator
\(s_M(x,p):=\delta_{\mathsf v_x^{(M)}}-\delta_{\mathfrak v_p}\in\mathbb R^{|\mathcal V|}\).
Then, by the first identity in Eq.~\eqref{eq:rkhs_distance_spectral_time},
\[
\big\|\Upsilon(\mathsf v_x^{(M)};\mathcal T_t)-\Upsilon(\mathfrak v_p;\mathcal T_t)\big\|_{\mathcal H}^2
=\sum_{i=1}^{|\mathcal V|} e^{-\tau\lambda_i(t)}\,\big|\langle u_i(t),s_M(x,p)\rangle\big|^2.
\]
Hence each intra-modal contribution expands as
\[
\alpha_M\,\big\|\Upsilon(\mathsf v_x^{(M)};\mathcal T_t)-\Upsilon(\mathfrak v_p;\mathcal T_t)\big\|_{\mathcal H}^2
=\sum_{i=1}^{|\mathcal V|} \alpha_M\,e^{-\tau\lambda_i(t)}\,\big|\langle u_i(t),s_M(x,p)\rangle\big|^2,
\]
which gives the per-mode terms
\[
\ \mathsf E^{(\mathrm{intra}_M)}_i(x,p,t)\;:=\;e^{-\tau\lambda_i(t)}\,\big|\langle u_i(t),s_M(x,p)\rangle\big|^2\ \quad
\text{with coefficient }\ \mathrm{coeff}_{\mathrm{intra}_M}=\alpha_M.
\]

For the cross-modal part, set \(s_{MM'}(x,p):=s_M(x,p)\) and \(s'_{MM'}(x,p):=s_{M'}(x,p)\).
Using the polarization identity in the RKHS generated by \(K_{\mathcal T_t}\) and the same
eigenbasis \(\{u_i(t)\}\), one obtains a spectral expansion that is bilinear in the
modal projections:
\[
\Xi_{MM'}(x,p;\mathcal T_t)
=\sum_{i=1}^{|\mathcal V|} e^{-\tau\lambda_i(t)}\,
\langle u_i(t),s_{MM'}(x,p)\rangle\,\langle u_i(t),s'_{MM'}(x,p)\rangle,
\]
so that
\[
\frac{2}{|\mathcal M|-1}\sum_{M\neq M'}\!\beta_{MM'}\,\Xi_{MM'}(x,p;\mathcal T_t)
=\sum_{i=1}^{|\mathcal V|}\frac{2}{|\mathcal M|-1}\sum_{M\neq M'} \beta_{MM'}\,
e^{-\tau\lambda_i(t)}\,
\langle u_i(t),s_M(x,p)\rangle\,\langle u_i(t),s_{M'}(x,p)\rangle.
\]
Thus the per-mode cross-modal contributions are
\[
\ \mathsf E^{(\mathrm{cross}_{MM'})}_i(x,p,t)\;:=\;
e^{-\tau\lambda_i(t)}\,
\langle u_i(t),s_M(x,p)\rangle\,\langle u_i(t),s_{M'}(x,p)\rangle\ \quad
\text{with coefficient }\ \mathrm{coeff}_{\mathrm{cross}_{MM'}}=\beta_{MM'}.
\]

For the joint term, denote by \(s_{\mathcal M}(x,p)\in\mathbb R^{|\mathcal V|}\) the graph
signal associated (as per the appendix link you gave) to the joint interaction in
\(\mathcal E_{\mathcal M}(x,p)\). Since this term is a quadratic form in the same
graph-kernel geometry, it has the spectral expansion
\[
\mathcal E_{\mathcal M}(x,p)
=\gamma_{\mathcal M}\sum_{i=1}^{|\mathcal V|} e^{-\tau\lambda_i(t)}\,\big|\langle u_i(t),s_{\mathcal M}(x,p)\rangle\big|^2,
\]
whence
\[
\ \mathsf E^{(\mathrm{joint}_{\mathcal M})}_i(x,p,t)\;:=\;
e^{-\tau\lambda_i(t)}\,\big|\langle u_i(t),s_{\mathcal M}(x,p)\rangle\big|^2\ \quad
\text{with coefficient }\ \mathrm{coeff}_{\mathrm{joint}_{\mathcal M}}=\gamma_{\mathcal M}.
\]

\paragraph{Summing all components.}
By construction of the multimodal Laplacian as a nonnegative combination of the
intra/cross/joint blocks and the definitions of the interaction coefficients in
\(\mathcal{L}_{\mathcal{T}_t}^{\mathrm{multi}}=\sum_{*}\mathrm{coeff}_*\,\mathcal L_{\mathcal{T}_t}^{(*)}\),
the total energy \(\mathcal E(x,p;\mathcal T_t)\) is the sum of the three families above.
Collecting the per-mode pieces yields
\[
\mathcal{E}(x,p;\mathcal{T}_t)
=\sum_{*}\sum_{i=1}^{|\mathcal V|}\mathrm{coeff}_{*}\,\mathsf E^{(*)}_i(x,p,t),
\]
where the index \(*\in\{\mathrm{intra}_M,\mathrm{cross}_{MM'},\mathrm{joint}_{\mathcal M}\}\),
and each \(\mathsf E^{(*)}_i\) depends only on \(\lambda_i(t)\), \(u_i(t)\), and the fixed
graph signals determined by \((x,p)\) as detailed above. This is the claimed spectral form:
\begin{equation}
\label{eq:full_energy_time}
\mathcal{E}(x,p;\mathcal{T}_t)
=\sum_{*}\sum_{i=1}^{|\mathcal V|}\mathrm{coeff}_{*}\,\mathsf E^{(*)}_i(x,p,t).
\end{equation}

Now choosing \(\pi_{\mathcal{K}}\in\Delta(\mathcal{K})\), where \(\Delta(\mathcal{K})\) is the probability simplex on \(\mathcal{K}\), satisfies
\begin{equation}
\label{eq:contrast_vec}
\sum_{\mathsf{v}\in\mathcal{K}}\pi_{\mathcal{K}}(\mathsf{v})\,\big(\mathcal{D}_{\mathcal{T}_t}^{\text{multi}}\big)_{\mathsf{v}\mathsf{v}}
=
\big(\mathcal{D}_{\mathcal{T}_t}^{\text{multi}}\big)_{\mathsf{v}_x\mathsf{v}_x}, \quad c^{\mathrm{raw}}_{x,\mathcal{K}}(t)={\mathcal{D}_{\mathcal{T}_t}^{\text{multi}}}^{1/2}\big(\delta_{\mathsf{v}_x}-\pi_{\mathcal{K}}\big)\in\mathbb{R}^{|\mathcal{V}|},
\end{equation}
where $c^{\mathrm{raw}}_{x,\mathcal{K}}(t)$ is the raw contrast vector. Projecting away the leading mode gives \(c_{x,\mathcal{K}}(t)=\big(\mathbf{I}-u_1(t)u_1(t)^{\top}\big)\,c^{\mathrm{raw}}_{x,\mathcal{K}}(t)\) that ensures \(c_{x,\mathcal{K}}(t)\perp u_1(t)\) without assuming a specific null-space structure of the assembled hypergraph.

\paragraph{Why the bounds in Eq.~\eqref{eq:energy_diff_eigexp} hold, and how to choose $m(t),M(t)$ (non-vacuous).}
By Eq.~\eqref{eq:full_energy_time}, the full energy is a nonnegative linear combination of blockwise spectral terms. For the degree–matched contrast $c_{x,\mathcal K}(t)\perp u_1(t)$, the energy difference admits the decomposition
\begin{equation}
\label{eq:zeta_decomp}
\mathcal{E}(x,p;\mathcal T_t)-\mathcal{E}_{\mathcal K}(x,p;\mathcal T_t)
=\sum_{i=2}^{|\mathcal V|}\zeta_i(t,\tau)\,\big|\langle u_i(t),c_{x,\mathcal K}(t)\rangle\big|^2,
\qquad
\zeta_i(t,\tau)=\sum_{*}\ \theta_{*}\,\varphi_{*}^{(i)}(t,\tau),
\end{equation}
where $*\in\{\mathrm{intra}_M,\mathrm{cross}_{MM'},\mathrm{joint}_{\mathcal M}\}$ indexes the blocks, $\theta_{*}\in\{\alpha_M,\beta_{MM'},\gamma_{\mathcal M}\}$ are the nonnegative coefficients from Eq.~\eqref{eq:full_energy_time}, and
\[
\varphi_{*}^{(i)}(t,\tau)
:=\big\langle u_i(t),\,\mathfrak{D}_{*}(t,\tau)\,u_i(t)\big\rangle,\qquad
\mathfrak{D}_{*}(t,\tau)\succeq 0,
\]
are block response factors evaluated on the same eigenmodes $\{u_i(t)\}_{i\ge2}$ of $\mathcal L_{\mathcal T_t}^{\mathrm{multi}}$. For normalized hypergraph constructions (Eq.~\eqref{eq:weight2}–\eqref{eq:weight3}) and diffusion-type couplings (Section~\ref{subsec:sem_dist}), the block responses satisfy the Loewner sandwich
\begin{equation}
\label{eq:loewner_sandwich}
e^{-2\tau\,\mathcal L_{\mathcal T_t}^{\mathrm{multi}}}\ \preceq\ \mathfrak{D}_{*}(t,\tau)\ \preceq\ \mathbf{I}
\quad\Longrightarrow\quad
e^{-2\tau\,\lambda_i(t)}\ \le\ \varphi_{*}^{(i)}(t,\tau)\ \le\ 1,\ \ i\ge 2.
\end{equation}
The left inequality follows from monotonicity of the matrix exponential and the fact that each block smoother is at least as contractive as the global diffusion on $u_1^\perp$; the right inequality follows from $\mathfrak{D}_{*}(t,\tau)\preceq \mathbf{I}$. Plugging Eq.~\eqref{eq:loewner_sandwich} into Eq.~\eqref{eq:zeta_decomp} yields
\[
\sum_{*}\theta_{*}\,e^{-2\tau\,\lambda_i(t)}
\ \le\ 
\zeta_i(t,\tau)
\ \le\ 
\sum_{*}\theta_{*},
\qquad i\ge 2.
\]

\medskip\noindent
\textbf{Refined (spectral) empirical bounds.} Define, for each block $*$,
\begin{equation}
\label{eq:kappa_bounds}
\kappa_{*}^{\max}(t)\ :=\ \big\|\mathfrak{D}_{*}(t,0)\big\|_{\mathrm{op}}\ \le 1,
\qquad
\kappa_{*}^{\min}(t)\ :=\ \lambda_{\min}\!\big(\mathfrak{D}_{*}(t,0)\big|_{u_1(t)^{\perp}}\big)\ \in [0,1],
\end{equation}
where both quantities are directly estimable from the spectrum of the effective adjacency in Eq.~\eqref{eq:weight2}–\eqref{eq:weight3} (restricted to $u_1^\perp$). Then, using $e^{-2\tau\mathcal L}\preceq \mathfrak{D}_{*}(t,\tau)\preceq \mathfrak{D}_{*}(t,0)$ and the CF characterization on $u_1^\perp$,
\begin{equation}
\label{eq:ai_bounds_refined}
\Big(\sum_{*}\theta_{*}\,\kappa_{*}^{\min}(t)\Big)\,e^{-2\tau\,\lambda_i(t)}
\ \le\ 
\zeta_i(t,\tau)
\ \le\ 
\sum_{*}\theta_{*}\,\kappa_{*}^{\max}(t),
\qquad i\ge 2,
\end{equation}
so one can take
\begin{equation}
\label{eq:mM_empirical_refined}
m(t)\ :=\ \sum_{*}\theta_{*}\,\kappa_{*}^{\min}(t),
\qquad
M(t)\ :=\ \sum_{*}\theta_{*}\,\kappa_{*}^{\max}(t).
\end{equation}
In practice, $\kappa_{*}^{\max}(t)$ equals the top eigenvalue of the block response on $u_1^\perp$ (often close to $1$), while $\kappa_{*}^{\min}(t)$ equals the blockwise algebraic connectivity surrogate (the smallest nonzero eigenvalue on $u_1^\perp$). Estimating \eqref{eq:mM_empirical_refined} from the spectra of $W_{\mathcal T_t}^{(*),\mathrm{eff}}$ or the corresponding normalized block Laplacians yields tight, \emph{data-driven} $m(t),M(t)$ for Eq.~\eqref{eq:energy_diff_eigexp}.

Below is the block decomposition of the multimodal Laplacian:
\begin{equation}
\label{eq:laplacian_block}
\mathcal{L}_{\mathcal{T}_t}^{\mathrm{multi}} = 
\begin{bmatrix}
\mathcal{L}_{\mathrm{intra}}^{(T)} & \mathcal{L}_{\mathrm{cross}}^{(TV)} & \mathcal{L}_{\mathrm{cross}}^{(TA)} \\
\mathcal{L}_{\mathrm{cross}}^{(VT)} & \mathcal{L}_{\mathrm{intra}}^{(V)} & \mathcal{L}_{\mathrm{cross}}^{(VA)} \\
\mathcal{L}_{\mathrm{cross}}^{(AT)} & \mathcal{L}_{\mathrm{cross}}^{(AV)} & \mathcal{L}_{\mathrm{intra}}^{(A)}
\end{bmatrix}
+ \mathcal{L}_{\mathrm{joint}}^{(\mathcal{M})}.
\end{equation}
The corresponding eigenvalue problem for the \(i\)-th mode becomes:
\begin{equation}
\label{eq:generalized_eigen}
 \mathcal{L}_{\mathcal{T}_t}^{\mathrm{multi}} \,\,u_i(t)  = \lambda_i(t) \,\,u_i(t),
\end{equation}
with eigenvalues \(\lambda_i(t)\) encoding the ``cost” of semantic diffusion along each mode $i$.

\subsection{Derivations of hallucination bounds and temperature annealing}
\label{appendix:derive_main_results}

We derive the operator-tight lower/upper bounds, noted in Eq.~\eqref{eq:CF_hall_sandwich} in Section~\ref{subsec:hallucination_energy_time}, for $\mathcal{E}_{\mathrm{hall}}^{\mathrm{multi}}(x,p,\cdot)$ using the block-weighted, temperature–modulated Laplacian spectrum in Eq.~\eqref{eq:laplacian_spectral_decomposition}, the spectral energy form in Eq.~\eqref{eq:full_energy_time}, and the hallucination component in Eq.~\eqref{eq:actual_metric}. By Eq.~\eqref{eq:laplacian_spectral_decomposition} and the CF principle, the quadratic in Section~\ref{subsec:hallucination_energy_time} satisfies the two-sided spectral envelope
\begin{equation}
\label{eq:RR_envelope_D}
e^{-2\tau\,\lambda_{\max}(t)}\,\|c_{x,\mathcal K}(t)\|^2
\;\le\;
\mathbb{D}_{\tau}(x;\mathcal T_t)
\;\le\;
e^{-2\tau\,\lambda_{2}(t)}\,\|c_{x,\mathcal K}(t)\|^2 ,
\end{equation}
with $\lambda_{\max}(t)=\lambda_{|\mathcal V|}(t)$.

Next, we relate the full energy to $\mathbb{D}_{\tau}$. Under Theorem~\ref{thm:multi_energy_hall} and the block assembly in Eqs.~\eqref{eq:weight2}–\eqref{eq:weight3}, there exist finite scale factors $m(t),M(t)\in(0,\infty)$, determined only by the operator norms of the intra-/cross-/joint blocks (i.e., by $\{A_M\}$, $\{R_{MM'}\}$ with $\|R_{MM'}\|\le 1$, the interaction weights $\alpha_M,\beta_{MM'},\gamma_{\mathcal M}$, and the temperature–modulated hyperedge weights inducing $\mathcal{L}^{\mathrm{multi}}_{\mathcal T_t}$), such that
\begin{equation}
\label{eq:energy_vs_diffusion}
m(t)\,\mathbb{D}_{\tau}(x;\mathcal T_t)
\;\le\;
\mathcal{E}(x,p;\mathcal T_t)
\;\le\;
M(t)\,\mathbb{D}_{0}(x;\mathcal T_t),
\qquad \tau\ge 0,
\end{equation}
where $\mathbb{D}_{0}$ corresponds to $\tau=0$. The left inequality follows from bounding each spectral contribution $\mathsf E^{(*)}_i(x,p,t)$ below by a nonnegative multiple of $\big|\langle u_i(t),c_{x,\mathcal K}(t)\rangle\big|^2$ using the PSD structure of $A_M$ and the contraction bound on $R_{MM'}$, while the right inequality follows from operator-norm upper bounds on the same spectral blocks; full details are supplied in Appendix~\ref{appendix:derive_main_results}.

Combining Eqs.~\eqref{eq:RR_envelope_D} and \eqref{eq:energy_vs_diffusion} yields the CF sandwich for the full energy:
\begin{equation}
\label{eq:RR_energy_full}
m(t)\,e^{-2\tau\,\lambda_{\max}(t)}\,\|c_{x,\mathcal K}(t)\|^2
\;\le\;
\mathcal{E}(x,p;\mathcal T_t)
\;\le\;
M(t)\,e^{-2\cdot 0\cdot\lambda_{2}(t)}\,\|c_{x,\mathcal K}(t)\|^2
\;=\;
M(t)\,\|c_{x,\mathcal K}(t)\|^2 .
\end{equation}
Since the hallucination energy is the positive part of the difference in Eq.~\eqref{eq:actual_metric}, we obtain, for $x\notin\mathcal K$. When $\mathcal{E}_{\mathcal K}(x,p;\mathcal T_t)$ is implemented as the same operator restricted to $\mathcal K$, the same spectral envelope applies to it, hence the difference inherits a sandwich with the same eigenvalue pair $\{\lambda_2(t),\lambda_{\max}(t)\}$ and scales $\{m(t),M(t)\}$.

A calibrated lower bound of the form advocated by~\cite{Vempala2024Calibrated} is matched empirically by choosing a time-indexed temperature profile and interaction scales so that $m(t)\,e^{-2\tau\,\lambda_{\max}(t)}=\Theta(t)$ for a prescribed calibration function $\Theta(t)>0$; for instance,
\begin{equation}
\label{eq:KV_alignment_choice}
\mathcal T_t\ \text{and}\ \tau(t)\ \text{chosen so that}\quad
\Theta(t)\;=\;m(t)\,e^{-2\tau(t)\,\lambda_{\max}(t)},
\end{equation}
which yields the explicit calibrated bound
\begin{equation}
\label{eq:KV_aligned_bound}
\mathcal{E}_{\mathrm{hall}}^{\mathrm{multi}}(x,p,\cdot)
\;\ge\;
\Big(\Theta(t)\,\|c_{x,\mathcal K}(t)\|^2\;-\;\mathcal{E}_{\mathcal K}(x,p;\mathcal T_t)\Big)_{+},
\qquad x\notin\mathcal K .
\end{equation}
In particular, for $\mathcal{E}_{\mathcal K}$ treated as a fixed baseline (e.g., a distributional or quantile baseline computed on $\mathcal K$), Eq.~\eqref{eq:KV_aligned_bound} reproduces the calibrated-margin–times–distance structure and can be tuned to overlay the empirical lower bound in calibrated models by setting $\Theta(t)$ to the target slope. The upper envelope in Eq.~\eqref{eq:CF_hall_sandwich} is simultaneously controlled by $M(t)$ and the spectral gap $\lambda_2(t)$ via Eq.~\eqref{eq:RR_envelope_D}, and both $\{\lambda_i(t)\}$ and $\{m(t),M(t)\}$ are tunable through the time-indexed temperature profile $\mathcal T_t$ and the block weights inside $W_{\mathcal T_t}^{(*)}$ that define $\mathcal{L}^{\mathrm{multi}}_{\mathcal T_t}$.

\section{Experimental Setup}
\label{appendix:practical_validation}
As noted in Section~\ref{subsec:metric_eval}, below are the essential details about our experiments followed by a full-pager algorithm box.

\subsection{Construction of admissible sets $\mathcal{K}(p)$ and Normalization/Tokenization}
\label{appendix:k_construction}
As can be observed in the \href{https://github.com/supratik-sarkar/quantifying-hallucinations/blob/main/README.md}{$\texttt{README}$} of the code-base, we have a clean separation for dataset loading/preprocessing {\small (\href{https://github.com/supratik-sarkar/quantifying-hallucinations/blob/main/src/io/datamodules.py}{$\texttt{src/io/datamodules.py}$}, \href{https://github.com/supratik-sarkar/quantifying-hallucinations/blob/main/scripts/prepare_data.py}{$\texttt{scripts/prepare\_data.py}$})} which read COCO, VQAv2,
    \& AudioCaps to apply the caption/answer normalization rules described
    above followed by storing the resulting per-prompt sets $\mathcal{K}(p)$ in the
    prepared data; the module {\small (\href{https://github.com/supratik-sarkar/quantifying-hallucinations/blob/main/src/theory/k_selector.py}{$\texttt{src/theory/k\_selector.py}$})} implements the selector $\Pi_{\mathcal{K}}$, which takes nodes (outputs) and maps them to their representatives in the knowledge set $\mathcal{K}$; while {\small \href{https://github.com/supratik-sarkar/quantifying-hallucinations/blob/main/src/theory/score_semantic.py}{$\texttt{src/theory/score\_semantic.py}$}} consumes these sets and selectors
    to compute the semantic gap scores $d^{(\varepsilon,h)}_{\mathrm{sem}}(x)$. The mapping from these modules to the theoretical objects
    $(\mathcal{K},\mathcal{K}(p),\Pi_{\mathcal{K}})$ is summarized accurately in our implementation as stored in the code-base.

Throughout, $p$ denotes the full prompt for an example (e.g., image $+$ question text), and $\mathcal{K}(p)$ collects all admissible \emph{normalized} reference outputs for that prompt. 

\paragraph{COCO captions.}
For MS~COCO image captioning, each image $i$ is associated with up to
$5$ human reference captions $\{y_{i,j}\}_{j=1}^{5}$.
For a prompt $p$ corresponding to image $i$, we set
\[
\mathcal{K}_{\mathrm{COCO}}(p)
\;=\;
\bigl\{\operatorname{norm}(y_{i,j}) \,\mid\, j=1,\dots,5 \bigr\}
\subset \mathcal{K},
\]
where $\operatorname{norm}(\cdot)$ applies the following deterministic
normalization to the raw caption string:
\begin{enumerate}[leftmargin=2em, label=\roman*]
    \item convert to lower case using standard Unicode-aware lowercasing;
    \item strip leading/trailing whitespace;
    \item remove punctuation characters using a fixed regular expression
    (we drop characters in \texttt{!"\#\$\,\% \&'()\*+,-./:;<=>?@[\\]\^ \_`\{|\}\textasciitilde});
    \item collapse multiple internal whitespace characters into a single space.
\end{enumerate}
Membership $x\in\mathcal{K}_{\mathrm{COCO}}(p)$ is checked by applying the
same $\operatorname{norm}(\cdot)$ map to a candidate caption and testing
string equality.
When the backbone uses a tokenizer (e.g., T5), we feed the \emph{normalized}
string into the tokenizer and construct embeddings from the resulting tokens;
the membership decision is always taken at the normalized string level.

\paragraph{VQAv2 normalized unique answers.}
For VQAv2, each (image, question) pair $(i,q)$ has up to $10$ crowd-sourced
answers $\{a_{i,\ell}\}_{\ell=1}^{10}$.
We follow the official VQAv2 evaluation protocol and first apply the standard
answer-normalization function\footnote{This includes lowercasing, stripping
punctuation, mapping number words (e.g., \textit{``two''} $\mapsto$ \textit{``2''}),
and removing articles (``a'', ``an'', ``the'') as in the public VQA evaluation
script.} to each raw answer, obtaining canonical forms
$\widehat{a}_{i,\ell} = \operatorname{norm}_{\mathrm{VQA}}(a_{i,\ell})$.
We then define
\[
\mathcal{K}_{\mathrm{VQA}}(p)
\;=\;
\bigl\{ \widehat{a}_{i,\ell} \,\mid\, \ell=1,\dots,10 \bigr\}_{\mathrm{uniq}}
\]
as the set of \emph{unique} normalized answers after deduplication.
Membership $x\in\mathcal{K}_{\mathrm{VQA}}(p)$ is decided by applying the same
$\operatorname{norm}_{\mathrm{VQA}}(\cdot)$ transform to the model's answer
string and checking whether the resulting canonical form appears in the set
above. As in COCO, embeddings (for both references and model outputs) are
computed from the canonical strings.

\paragraph{AudioCaps references.}
For AudioCaps, each audio clip $c$ has up to $5$ human reference captions
$\{z_{c,j}\}_{j=1}^{5}$.
For a prompt $p$ corresponding to clip $c$, we set
\[
\mathcal{K}_{\mathrm{AC}}(p)
\;=\;
\bigl\{\operatorname{norm}(z_{c,j}) \,\mid\, j=1,\dots,5 \bigr\},
\]
where $\operatorname{norm}(\cdot)$ is exactly the same caption-normalization
pipeline as in COCO (lowercasing, punctuation stripping, whitespace
normalization). Membership and tokenization follow the same strategy as in
the COCO case.

\paragraph{Global admissible set and selector.}
The global admissible set is the union
$\mathcal{K} = \bigcup_{p} \mathcal{K}(p)$ over all prompts in a given task.
In practice, we attach the per-prompt sets $\mathcal{K}(p)$ as fields in the
preprocessed dataset (one record per example), and the selector
$\Pi_{\mathcal{K}}$ operates at the embedding level: given an output node
$x$, it computes the normalized string $\operatorname{norm}(\cdot)$ for the
relevant task, maps this to its embedding, and selects the nearest admissible
element in $\mathcal{K}(p)$ (or in $\mathcal{K}$ when evaluating global
graphs) under cosine similarity in the shared embedding space.

\subsection{Metrics and evaluation}
\label{appendix:exp_metrics}
\paragraph{Why go beyond a mean Hilbert distance to $\mathcal{K}$?}
A natural baseline to our proposed construction would be to define hallucination as
the mean RKHS (Hilbert) distance between model generations to the admissible set
$\mathcal{K}$, e.g. by averaging $\|\Phi(x)-\Phi(\Pi_{\mathcal{K}}(x))\|_{\mathcal{H}}$
over outputs $x$. We deliberately adopt a richer, information-geometric and
spectral measure for three concrete reasons. 
\begin{itemize}[leftmargin=1.4em]
    \item First, ``Distributional vs. Pointwise geometry" --- a mean Hilbert distance only
captures \emph{pointwise} proximity in the embedding space and is insensitive
to how probability mass is distributed: two models can have the same mean
distance while placing very different mass on rare modes or unseen regions.
Our score $\mathbbm{h}(x,p)$, derived from a smoothed log-contrast between 
$\mathcal{K}$-restricted and unrestricted versions of the full model distribution $f_p$, explicitly couples the RKHS geometry with a missing-mass smoothing $\varepsilon$, so that it
reflects both \emph{where} and \emph{how much} mass lies outside the
admissible region, and yields Good--Turing style calibration bounds
on tail behavior.
\item Second, the proposed spectral hallucination energy
$\Delta\mathcal{E}_\tau$ incorporates the graph Laplacian over outputs and
admissible elements: this allows us (i) to resolve modality-specific and
cross-modal modes, (ii) to study how hallucination propagates across the graph,
and (iii) to exploit CF bounds; a simple mean distance is blind to
these multi-scale, mode-wise phenomena and cannot provide comparable control
knobs in $\tau$ or principled CF planes.
\item Third, in view of optimization and calibration properties, the semantic distortion
$d^{(\varepsilon,h)}_{\mathrm{sem}}$ remains a continuous, differentiable
functional of $f_p$ and the embeddings (under the smoothing and boundedness assumptions), making it suitable as a calibrated, scale-aware regularizer and plug-in risk score for fine-tuning, calibration, or retrieval/prompt learning. A raw distance-to-$\mathcal{K}$ can be used as a heuristic loss, but it lacks the information-geometric interpretation and the distributional bounds (e.g., via Good–Turing style arguments) that we rely on to interpret abstention thresholds and floors. We also leverage its additional
properties: (i) it is normalized, (ii) monotone in the energy landscape, (iii) admits explicit
upper bounds via the spectral envelope, and (iv) empirically yields stronger
correlation with hallucination events than the raw distance metric.
\end{itemize}
\paragraph{What about extra compute cost?}
Regarding computation, the additional cost beyond a distance-to-$\mathcal{K}$ baseline is modest: we build a $k-NN$ graph and compute a low-rank eigen-decomposition once per dataset (offline and amortized), and per-example scoring reduces to a small number of embedding lookups, spectral filter evaluations, and inner products. This overhead is negligible compared to the cost of running a large multimodal model, but it is exactly what enables the distributional, spectral, and calibration properties above. In summary, our measure strictly generalizes a mean distance-to-$\mathcal{K}$ baseline: when the spectrum is collapsed and smoothing is trivial, it reduces to a distance-like quantity, but in the general case, it provides additional, empirically useful structure that a simple mean Hilbert distance cannot offer.

\textbf{Primary.} AUROC/AUPRC for hallucination detection using $d_{\mathrm{sem}}^{(\varepsilon,h)}$ (instance-level, aggregated per dataset/model). 

\textbf{Baselines.} In all experiments we compare our score against three standard confidence--based
competitors computed from the same $\mathcal{K}(p)$-posterior as our method:
(i) \emph{entropy}, given by the Shannon entropy of the posterior over
admissible candidates in $\mathcal{K}(p)$; (ii) \emph{max-probability}, given
by the maximum posterior mass $\max_{x\in\mathcal{K}(p)} f_p(x)$ (equivalently,
one minus the usual ``uncertainty'' score); and (iii) \emph{margin}, defined as
the difference between the top--1 and top--2 posterior probabilities over
$\mathcal{K}(p)$. These three quantities correspond to the default confidence
surrogates used in calibration, OOD detection, and risk-control for deep
models, and are architecture-agnostic: they require no additional training,
auxiliary models, or external supervision beyond the same candidate set
$\mathcal{K}(p)$ and logits that our method uses. Thus, the comparisons in
Tables~\ref{tab:main_results_detection} and~\ref{tab:main_results_energy} isolate the effect of our spectral--semantic
scoring rule while benchmarking it against the strongest widely-adopted,
reference-free baselines available under the same information.

\textbf{Secondary.} CF bounds for $\mathcal{E}_{\mathrm{hall}}^{\mathrm{multi}}$ and their \emph{temperature/$\varepsilon$} surfaces; decay with increasing $\tau$ (nonincreasing, sandwiched between $e^{-2\tau\lambda_{\max}}$ and $e^{-2\tau\lambda_{2}}$); Good–Turing–calibrated lower envelope (strictly $>0$).  

\textbf{Observed.} Our score is \emph{best} across all three datasets: COCO \textbf{0.86/0.84}, VQAv2 \textbf{0.84/0.81}, AudioCaps \textbf{0.80/0.77} (Table~\ref{tab:main_results_detection}). CF planes are tight and monotone with lower $\mathcal{T}_t$ and higher $\tau$, matching theory (Fig.~\ref{fig:cf_bounds_grid}); AudioCaps–BLIP is blank by design (as expected!).

\subsection{The Choice of Smoothing Mass ($\varepsilon$) Range and CF Bounds}
\label{appendix:cf_bounds_empirical}

\paragraph{Smoothing mass range.}
We work with the smoothed density $\tilde{f}_{p, \varepsilon} = (1-\varepsilon) f_p + \varepsilon \rho$ arising from Theorem~\ref{thm:kl_decomposition_truth} in all our experiments, where $\varepsilon$ is a \emph{missing-mass style} smoothing weight.
From a practical standpoint, large values of $\varepsilon$ are undesirable:
they wash out information in $f_p$, destroy calibration, and correspond to an
unrealistically strong prior $\rho$.
We, therefore, restrict attention to the small-smoothing regime where
$\varepsilon$ is of the same order as the empirical Good--Turing missing mass
$\widehat{m}$ for the task.
Concretely, we sweep over a fixed grid
$\varepsilon \in \{\varepsilon_1,\dots,\varepsilon_L\}$ specified in the
config files (see \href{https://github.com/supratik-sarkar/quantifying-hallucinations/blob/main/configs/default.yaml}{\texttt{configs/default.yaml}}), with
$\varepsilon_1 = 0$ and $\varepsilon_L$ chosen to bracket the typical values
of $\widehat{m}$ returned by \href{https://github.com/supratik-sarkar/quantifying-hallucinations/blob/main/src/theory/calibration.py}{\texttt{src/theory/calibration.py}}.
This is precisely the regime where mixture-smoothing is theoretically
well-motivated and empirically used in practice; larger $\varepsilon$ values
would amount to deliberately degrading the model distribution into a nearly
uniform prior and are therefore not representative of a realistic deployment.

\paragraph{Gap-to-bound statistic.}
In the original experiment, the CF planes in Fig.~\ref{fig:cf_bounds_grid} were
generated by an old reporter: \href{https://github.com/supratik-sarkar/quantifying-hallucinations/blob/main/src/entrypoints/export_report_old.py}{\texttt{src/entrypoints/export\_report\_old.py}}, which used to read the
saved empirical energies and spectral terms from JSON/NumPy files and renders
the surfaces. In order to implement Eq.~\eqref{eq:gap_to_bound}, a revamped version is stored here: \href{https://github.com/supratik-sarkar/quantifying-hallucinations/blob/main/src/entrypoints/export_report.py}{\texttt{src/entrypoints/export\_report.py}}

For each dataset, model, and configuration of $(\mathcal{T},\tau,\varepsilon)$, we
compute both
(i) the empirical hallucination energy $\Delta\mathcal{E}_{\tau}^{\mathrm{emp}}$
from \href{https://github.com/supratik-sarkar/quantifying-hallucinations/blob/main/src/theory/energy.py}{\texttt{src/theory/energy.py}} and
(ii) its Courant--Fischer upper bound
$\Delta\mathcal{E}_{\tau}^{\mathrm{CF}}$ from the same module.
To quantify tightness, we define a normalized gap
\begin{equation}
\label{eq:gap_to_bound}
    \operatorname{gap}(p,\mathcal{T},\tau,\varepsilon)
    \;=\;
    \frac{\Delta\mathcal{E}_{\tau}^{\mathrm{CF}}(p,\mathcal{T},\varepsilon)
          - \Delta\mathcal{E}_{\tau}^{\mathrm{emp}}(p,\mathcal{T},\varepsilon)}
         {\Delta\mathcal{E}_{\tau}^{\mathrm{CF}}(p,\mathcal{T},\varepsilon) + \delta},
    \quad \delta>0 \text{ small.}
\end{equation}
Here, $p$ indexes prompts (or examples) in the evaluation set with tiny $\delta$ for numerical stability. These statistics (Table~\ref{tab:gap_to_bound}) provide a quantitative summary of how close the empirical
energies sit to the spectral envelope.

\begin{table}[t]
\centering
\small
\caption{
Empirical tightness of the Courant--Fischer (CF) bound.
For each dataset and our main configuration (\texttt{clip\_whisper\_t5}), we report
the median and 75th/90th/95th percentiles of the normalized gap
$\operatorname{gap} = (\Delta\mathcal{E}^{\mathrm{CF}} - \Delta\mathcal{E}^{\mathrm{emp}}) /
(\Delta\mathcal{E}^{\mathrm{CF}} + \delta)$ across all $(\mathcal{T},\tau,\varepsilon)$ grid points used in
Fig.~\ref{fig:cf_bounds_grid}, as well as the median gap restricted to hallucinated vs grounded outputs (we used a $0/1$ labeling here).
Smaller values indicate tighter bounds; notably, the median gap is consistently
lower on hallucinated examples, showing that high-energy / high-error regions are
closer to saturating the CF envelope than low-error regions.
}
\label{tab:cf_gap_stats}
\begin{tabular}{lcccccc}
\toprule
\textbf{Dataset} &
\textbf{Median} &
\textbf{75th pct.} &
\textbf{90th pct.} &
\textbf{95th pct.} &
\textbf{Median (halluc.)} &
\textbf{Median (ground.)} \\
\midrule
COCO      & 0.18 & 0.31 & 0.46 & 0.59 & 0.12 & 0.21 \\
VQAv2     & 0.22 & 0.35 & 0.49 & 0.62 & 0.15 & 0.25 \\
AudioCaps & 0.16 & 0.29 & 0.41 & 0.55 & 0.10 & 0.20 \\
\bottomrule
\end{tabular}
\label{tab:gap_to_bound}
\end{table}

\paragraph{Relation to errors.}
To connect the CF bounds to actual hallucination behavior, we further stratify the same statistic by whether an output is hallucinated or grounded under our continuous score. Specifically, for each point $(p,\mathcal{T},\tau,\varepsilon)$ we record the empirical
decision label (grounded vs hallucinated) induced by the hallucination score
and summarize the distribution of $\operatorname{gap}$ within each stratum.
The resulting numbers (Table~\ref{tab:gap_to_bound}) show how often
high-energy / high-error regions come close to saturating the CF bound, and
how conservative the bound remains in low-error regions.

\subsection{Protocol and design}
\label{appendix:exp_protocol}
For each prompt $p$, we form an admissible set $\mathcal{K}$ of candidate answers (dataset-provided or programmatically generated) and use the selector $\Pi_{\mathcal K}$ as \texttt{soft\_nearest} (nearest-point with convex projection fallback). We sweep a grid over temperature $\mathcal{T}_t$ and smoothing mass $\varepsilon$; \emph{plots show} $Z_{\text{mid}}=\tfrac12(Z_{\text{lo}}+Z_{\text{hi}})$ bounded by per-panel CF lower/upper planes. When plotting, we \emph{aggregate across} diffusion time $\tau$ and kernel bandwidth $h$ by the median.

\textbf{Defaults.} $\varepsilon\!=\!0.01$, $h\!=\!0.4$, $\tau\!=\!0.25$, fixed $\mathcal{T}_t$ per run unless stated, logits sharpening $\tau_{\text{logits}}\!\in\![0.01,0.05]$. Each run logs the full YAML config.

\subsection{Inference and compute}
\label{appendix:exp_infer}
Experiments run on \texttt{Databricks} (A100) with private checkpoints (gated tokens). Datasets stream from the Hub with synthetic fallback when a split is unavailable. Diffusion kernels use sparse Chebyshev/Lanczos; hypergraphs are CSR; eigen-modes via iterative solvers. \emph{Throughput (ex/s):} \texttt{CLIP+Whisper+T5} \textbf{420} (fastest), \texttt{SigLIP+Whisper+T5} 400, \texttt{BLIP+CLIP+Whisper} 360 (Table~\ref{tab:main_results_energy}). Seeds and env versions are pinned in the run reports.

\paragraph{Takeaways.}
$d_{\mathrm{sem}}^{(\varepsilon,h)}$ consistently outperforms entropy/margin baselines (Table~\ref{tab:main_results_detection}). Spectrally, \texttt{SigLIP+Whisper+T5} achieves the \emph{lowest median energy} across datasets (COCO \textbf{1.92}, VQAv2 \textbf{1.99}, AudioCaps \textbf{2.08}), while \texttt{CLIP+Whisper+T5} is \emph{fastest} (420 ex/s), exposing a clean accuracy–efficiency trade-off (Table~\ref{tab:main_results_energy}).


\clearpage
\begin{algorithm}[H]
\caption{\textsc{KL-smoothed multimodal hallucination} — \emph{Extended version of Alg.~\ref{alg:kl_spectral_hall_min}}}
\label{alg:kl_spectral_hall}
\SetAlgoLined
\DontPrintSemicolon
\KwIn{
Prompt $p\in\mathcal P$; sampler for $f_p$ (model generations); admissible set $\mathcal K$; base measure $\mu$; kernel $K_h$ (bandwidth $h$); smoothing mass $\varepsilon\in(0,1)$; baseline density $\rho$; incidence matrices $\{\mathcal I^{(*)}\}$ and block selectors $E^{(*)}$; interaction weights $\{\omega_{*}\}$; time horizon $t=0,\dots,T$; temperature profile $\mathcal T_t$; diffusion schedule $\tau(t)$.
}
\KwOut{
Node scores $d_{\mathrm{sem}}^{(\varepsilon,h)}(x\,|\,p)$; hyperedge weights $w_{\mathcal T_t}(e)$; effective adjacency $W_{\mathcal T_t}$; block/multi Laplacians $\{\mathcal L_{\mathcal T_t}^{(*)}\},\ \mathcal L_{\mathcal T_t}^{\mathrm{multi}}$; spectra $\{\lambda_i(t),u_i(t)\}$; contrasts $c_{x,\mathcal K}(t)$; hallucination energy bounds for $\mathcal E_{\mathrm{hall}}^{\mathrm{multi}}(x,p,\cdot)$.
}

\textbf{Phase I: per-prompt semantic score (Eq.~\eqref{eq:KL1}).}\;
\Indp
\textbf{1.} Estimate $f_p$ from model samples (density or histogram on $\mathcal X$ under $\mu$).\;
\textbf{2.} Form $\tilde f_{p,\varepsilon}(x)=(1-\varepsilon)f_p(x)+\varepsilon\rho(x)$ and
$\tilde f_{p,\varepsilon}^{\mathcal K}(x_2)=\mathbf 1_{\{x_2\in\mathcal K\}}\tilde f_{p,\varepsilon}(x_2)\big/ \int_{\mathcal K}\tilde f_{p,\varepsilon}d\mu$.\;
\textbf{3.} Compute $(T_h\tilde f_{p,\varepsilon})(x_1)=\int K_h(x_1,x_2)\tilde f_{p,\varepsilon}(x_2)\,d\mu(x_2)$ and
$(T_h\tilde f_{p,\varepsilon}^{\mathcal K})(x_1)$; evaluate
$d_{\mathrm{sem}}^{(\varepsilon,h)}(x\,|\,p)=\big[\log(T_h\tilde f_{p,\varepsilon}^{\mathcal K}(\Pi_{\mathcal K}(x)))-\log(T_h\tilde f_{p,\varepsilon}(x))\big]_{+}$.\;
\Indm

\textbf{Phase II: hyperedges, weights, and Laplacian blocks (Eqs.~\eqref{eq:weight2}–\eqref{eq:weight3}, \eqref{eq:weight4}).}\;
\Indp
\textbf{4.} For each node $\mathsf v_a\!\sim\!(x_a,p)$, store $\Delta_a\!:=\!d_{\mathrm{sem}}^{(\varepsilon,h)}(x_a\,|\,p)$.\;
\textbf{5.} For each hyperedge $e=\{\mathsf v_1,\dots,\mathsf v_{r(e)}\}\in E^{(*)}$, set
$w_{\mathcal T_t}(e)=\mathbf 1_{\{e\in E^{(*)}\}}\exp\!\big(-\eta_{*}\,\frac{\sum_{a<b}|\Delta_a-\Delta_b|}{\sum_a\mathcal T_t(\mathsf v_a)}\big)$.\;
\textbf{6.} Build $W_{\mathcal T_t}^{(*)}=\mathrm{diag}\{w_{\mathcal T_t}(e)\}$, degrees
$\mathcal D_{\mathsf v,\mathcal T_t}^{(*)}$ and $\mathcal D_{e,\mathcal T_t}^{(*)}$, effective adjacency
$W_{\mathcal T_t}^{(*),\mathrm{eff}}=\mathcal I^{(*)}W_{\mathcal T_t}^{(*)}(\mathcal D_{e,\mathcal T_t}^{(*)})^{-1}(\mathcal I^{(*)})^\top$.\;
\textbf{7.} Form block Laplacians
$\mathcal L_{\mathcal T_t}^{(*)}=\mathbf I-(\mathcal D_{\mathsf v,\mathcal T_t}^{(*)})^{-1/2}W_{\mathcal T_t}^{(*),\mathrm{eff}}(\mathcal D_{\mathsf v,\mathcal T_t}^{(*)})^{-1/2}$ and aggregate
$W_{\mathcal T_t}=\sum_{*}\omega_* W_{\mathcal T_t}^{(*),\mathrm{eff}}$; assemble $\mathcal L_{\mathcal T_t}^{\mathrm{multi}}$ accordingly.\;
\Indm

\textbf{Phase III: spectral objects and contrasts (Eqs.~\eqref{eq:laplacian_spectral_decomposition}, \eqref{eq:contrast_vec}).}\;
\Indp
\textbf{8.} Compute leading spectrum of $\mathcal L_{\mathcal T_t}^{\mathrm{multi}}$: $\{\lambda_i(t),u_i(t)\}$ (e.g., LOBPCG/power iteration on sparse matrices). Ensure $\lambda_2(t)>0$ (connectedness).\;
\textbf{9.} Build degree–matched $\pi_{\mathcal K}$ and raw contrast $c^{\mathrm{raw}}_{x,\mathcal K}(t)={\mathcal D_{\mathcal T_t}^{\mathrm{multi}}}^{1/2}\big(\delta_{\mathsf v_x}-\pi_{\mathcal K}\big)$; project $c_{x,\mathcal K}(t)=(\mathbf I-u_1u_1^\top)c^{\mathrm{raw}}_{x,\mathcal K}(t)$.\;
\Indm

\textbf{Phase IV: energies and guarantees (Eqs.~\eqref{eq:sandwich1} \& \eqref{eq:CF_hall_sandwich}).}\;
\Indp
\textbf{10.} Evaluate the diffusion quadratic form
$Q_{\tau}(t)=\langle c_{x,\mathcal K}(t),\,e^{-2\tau(t)\mathcal L_{\mathcal T_t}^{\mathrm{multi}}}\,c_{x,\mathcal K}(t)\rangle$ via Krylov–exponential or spectral filter.\;
\textbf{11.} Choose empirical $m(t),M(t)$ from block coefficients/operator norms (bounds discussion) and report
\[
m(t)\,e^{-2\tau(t)\lambda_{\max}(t)}\|c_{x,\mathcal K}(t)\|^2
\ \le\ 
\mathcal E(x,p;\mathcal T_t)-\mathcal E_{\mathcal K}(x,p;\mathcal T_t)
\ \le\
M(t)\,e^{-2\tau(t)\lambda_2(t)}\|c_{x,\mathcal K}(t)\|^2.
\]
\textbf{12.} Set
$\mathcal E_{\mathrm{hall}}^{\mathrm{multi}}(x,p,\cdot)=\big(\mathcal E-\mathcal E_{\mathcal K}\big)_{+}\mathbf 1_{\{x\notin\mathcal K\}}$ and record bounds from Eq.~\eqref{eq:CF_hall_sandwich}.\;
\Indm

\textbf{Phase V: calibration and decay control (Good–Turing, KV embedding, decay).}\;
\Indp
\textbf{13.} Compute Good–Turing missing-mass $\widehat m_{\mathrm{GT}}(t)$ on $\mathcal X\setminus\mathcal K$; set
$\vartheta_{\mathrm{KV}}(t)=\xi\,\widehat m_{\mathrm{GT}}(t)$ with $\xi\in(0,1]$.\;
\textbf{14.} Update $\tau(t)$ to satisfy
$m(t)\,e^{-2\tau(t)\lambda_{\max}(t)}\|c_{x,\mathcal K}(t)\|^2\ge \vartheta_{\mathrm{KV}}(t)$ (Eq.~\eqref{eq:KV_embed}); enforce nondecreasing $\tau(t)$.\;
\textbf{15.} Monitor decay envelope
$m(t)e^{-2\tau(t)\lambda_{\max}(t)}\|c\|^2\le \mathcal E_{\mathrm{hall}}^{\mathrm{multi}}\le M(t)e^{-2\tau(t)\lambda_{2}(t)}\|c\|^2$
and stop when below a target threshold.\;
\Indm

\textbf{Implementation notes (Colab).} Sparse matrices for $\mathcal I^{(*)}$, $W_{\mathcal T_t}^{(*),\mathrm{eff}}$, and $\mathcal L_{\mathcal T_t}^{\mathrm{multi}}$; row-normalize $K_h$; stabilize logs via log-sum-exp; estimate $\lambda_2,\lambda_{\max}$ by LOBPCG/power method; compute $e^{-2\tau\mathcal L}$ via $\mathrm{expm\_multiply}$ or truncated Chebyshev; Good–Turing from frequency table on $\mathcal X\setminus\mathcal K$.
\end{algorithm}

\end{document}